\title{Linear regression without correspondence}
\author[1]{Daniel Hsu}
\author[1]{Kevin Shi}
\author[2]{Xiaorui Sun}
\affil[1]{Columbia University, New York, NY}
\affil[2]{Microsoft Research, Redmond, WA}
\def\ddefloop#1{\ifx\ddefloop#1\else\ddef{#1}\expandafter\ddefloop\fi}
\def\ddef#1{\expandafter\def\csname bb#1\endcsname{\ensuremath{\mathbb{#1}}}}
\def\ddef#1{\expandafter\def\csname c#1\endcsname{\ensuremath{\mathcal{#1}}}}
\def\ddef#1{\expandafter\def\csname v#1\endcsname{\ensuremath{\boldsymbol{#1}}}}
\def\veps{\ensuremath{\varepsilon}} % shorthand for varepsilon
\def\ddef#1{\expandafter\def\csname v#1\endcsname{\ensuremath{\boldsymbol{\csname #1\endcsname}}}}
\renewcommand\v[1]{{\ensuremath{\boldsymbol{#1}}}} % useful for \v0, \v1, ...
\newcommand\T{{\ensuremath{\scriptscriptstyle{\top}}}} % transpose
\newcommand{\E}{\ensuremath{\mathbb{E}}} % expectation
\newcommand\cov{\ensuremath{\operatorname{cov}}} % covariance
\newcommand\poly{\ensuremath{\operatorname{poly}}} % poly
\newcommand\rank{\ensuremath{\operatorname{rank}}} % rank
\newcommand\ind[1]{\ensuremath{\mathds{1}\{#1\}}} % indicator
\newcommand{\R}{\ensuremath{\mathbb{R}}} % real numbers
\newcommand{\Z}{\ensuremath{\mathbb{Z}}} % integers
\newcommand\Normal{\ensuremath{\operatorname{N}}} % normal
\newcommand\snr{\ensuremath{\mathsf{SNR}}} % signal-to-noise ratio
\newcommand\mle{\ensuremath{\mathsf{mle}}} % maximum likelihood estimator
\newcommand\tv{\ensuremath{\mathsf{tv}}} % total variation
\newcommand\bag[1]{\ensuremath{\Lbag #1 \Rbag}} % bag
\newcommand\KL{\ensuremath{\operatorname{KL}}} % KL-divergence
\newcommand\opt{\ensuremath{\operatorname{opt}}} % opt
\newcommand\argmin{\mathop{\arg\min}} % argmin
\newcommand\PLS{\textsc{Permuted Linear System}\xspace} % Permuted Linear System
\newcommand\TP{\textsc{$3$-Partition}\xspace} % 3-Partition
\newcommand\PARTITION{\textsc{Partition}\xspace} % Partition
\newtheorem{lemma}{Lemma}
\newtheorem{proposition}{Proposition}
\newtheorem{theorem}{Theorem}
\crefname{proposition}{proposition}{propositions}
\theoremstyle{definition}
\newtheorem{remark}{Remark}
\begin{document}

\maketitle

\begin{abstract}%
  This article considers algorithmic and statistical aspects of linear regression
when the correspondence between the covariates and the responses is unknown.
First, a fully polynomial-time approximation scheme is given for the natural
least squares optimization problem in any constant dimension.
Next, in an average-case and noise-free setting where the responses exactly
correspond to a linear function of i.i.d.~draws from a standard multivariate
normal distribution, an efficient algorithm based on lattice basis reduction is
shown to exactly recover the unknown linear function in arbitrary dimension.
Finally, lower bounds on the signal-to-noise ratio are established for
approximate recovery of the unknown linear function by any estimator.

\end{abstract}

\section{Introduction}
\label{sec:intro}

Consider the problem of recovering an unknown vector $\bar{\vw} \in \R^d$ from
noisy linear measurements when the correspondence between the measurement
vectors and the measurements themselves is unknown.
The measurement vectors (i.e., covariates) from $\R^d$ are denoted by $\vx_1,
\vx_2, \dotsc, \vx_n$; for each $i \in [n] := \cbr[0]{1,2,\dotsc,n}$, the $i$-th
measurement (i.e., response) $y_i$ is obtained using $\vx_{\bar{\pi}(i)}$:
\begin{equation}
  y_i
  \ = \
  \bar{\vw}^\T \vx_{\bar{\pi}(i)}
  + \veps_i
  \,,
  \quad i \in [n]
  \,.
  \label{eq:measurements}
\end{equation}
Above, $\bar{\pi}$ is an unknown permutation on $[n]$, and the $\veps_1,
\veps_2, \dotsc, \veps_n$ are unknown measurement errors.

This problem (which has been called \emph{unlabeled
sensing}~\citep{unnikrishnan2015unlabeled}, \emph{linear regression with an
unknown permutation}~\citep{pananjady2016linear}, and \emph{linear regression
with shuffled labels}~\citep{abid2017linear}) arises in many settings.
For example, physical sensing limitations may create ambiguity in or lose the
ordering of measurements.
Or, the covariates and responses may be derived from separate databases that
lack appropriate record linkage (perhaps for privacy reasons).
See the aforementioned references for more details on these applications.
The problem is also interesting because the missing correspondence makes an
otherwise well-understood problem into one with very different computational
and statistical properties.

\paragraph{Prior works.}

\citet{unnikrishnan2015unlabeled} study conditions on the measurement vectors
that permit recovery of any target vector $\bar{\vw}$ under noiseless
measurements.
They show that when the entries of the $\vx_i$ are drawn i.i.d.~from a
continuous distribution, and $n \geq 2d$, then almost surely, every vector
$\bar{\vw} \in \R^d$ is uniquely determined by noiseless correspondence-free
measurements as in~\eqref{eq:measurements}.
(Under noisy measurements, it is shown that $\bar{\vw}$ can be recovered when
an appropriate signal-to-noise ratio tends to infinity.)
It is also shown that $n \geq 2d$ is necessary for such a guarantee that holds
\emph{for all} vectors $\bar{\vw} \in \R^d$.

\citet{pananjady2016linear} study statistical and computational limits on
recovering the unknown permutation $\bar{\pi}$.
On the statistical front, they consider necessary and sufficient conditions on
the signal-to-noise ratio $\snr := \norm{\bar{\vw}}_2^2 / \sigma^2$ when the
measurement errors $(\veps_i)_{i=1}^n$ are i.i.d.~draws from the normal
distribution $\Normal(0,\sigma^2)$ and the measurement vectors $(\vx_i)_{i=1}^n$
are i.i.d.~draws from the standard multivariate normal distribution
$\Normal(\v0,\vI_d)$.
Roughly speaking, exact recovery of $\bar{\pi}$ is possible via maximum
likelihood when $\snr \geq n^c$ for some absolute constant $c>0$, and
approximate recovery is impossible for any method when $\snr \leq n^{c'}$ for
some other absolute constant $c'>0$.
On the computational front, they show that the least squares problem (which is
equivalent to maximum likelihood problem)
\begin{equation}
  \min_{\vw,\pi} \sum_{i=1}^n \del{ \vw^\T\vx_{\pi(i)} - y_i }^2
  \label{eq:mle}
\end{equation}
given arbitrary $\vx_1, \vx_2, \dotsc, \vx_n \in \R^d$ and $y_1, y_2, \dotsc,
y_n \in \R$ is NP-hard when $d = \Omega(n)$\footnote{%
  \citet{pananjady2016linear} prove that \PARTITION reduces to the problem of deciding if the optimal value of \eqref{eq:mle} is zero or non-zero. Note that \PARTITION is weakly, but not strongly, NP-hard: it admits a pseudo-polynomial-time algorithm~\citep[Section 4.2]{garey1979computers}. In \Cref{sec:npc}, we prove that the least squares problem is strongly NP-hard by reduction from \TP (which is strongly NP-complete~\citep[Section 4.2.2]{garey1979computers}).%
}, but admits a polynomial-time
algorithm (in fact, an $O(n \log n)$-time algorithm based on sorting) when $d =
1$.

\citet{abid2017linear} observe that the maximum likelihood estimator can be inconsistent for estimating $\bar{\vw}$ in certain settings (including the normal setting of \citet{pananjady2016linear}, with $\snr$ fixed but $n\to\infty$).
One of the alternative estimators they suggest is consistent under additional assumptions in dimension $d=1$.
\citet{elhami2017unlabeled} give a $O(dn^{d+1})$-time algorithm that, in
dimension $d = 2$, is guaranteed to approximately recover $\bar{\vw}$ when the
measurement vectors are chosen in a very particular way from the unit circle and
the measurement errors are uniformly bounded.

\paragraph{Contributions.}

We make progress on both computational and statistical aspects of the problem.

\begin{enumerate}[leftmargin=2em]
  \item
    We give an approximation algorithm for the least squares problem from
    \eqref{eq:mle} that, any given $(\vx_i)_{i=1}^n$, $(y_i)_{i=1}^n$, and
    $\epsilon \in \intoo{0,1}$, returns a solution with objective value at most
    $1+\epsilon$ times that of the minimum in time $(n/\epsilon)^{O(d)}$.
    This a fully polynomial-time approximation scheme for any constant
    dimension.

  \item
    We give an algorithm that exactly recovers $\bar{\vw}$ in the measurement
    model from~\eqref{eq:measurements}, under the assumption that there are no
    measurement errors and the covariates $(\vx_i)_{i=1}^n$ are i.i.d.~draws
    from $\Normal(\v0,\vI_d)$.
    The algorithm, which is based on a reduction to a lattice problem and
    employs the lattice basis reduction algorithm of
    \citet{lenstra1982factoring}, runs in $\poly(n,d)$ time when the covariate
    vectors $(\vx_i)_{i=1}^n$ and target vector $\bar{\vw}$ are appropriately
    quantized.
    This result may also be regarded as \emph{for each}-type guarantee for
    exactly recovering a fixed vector $\bar{\vw}$, which complements the
    \emph{for all}-type results of \citet{unnikrishnan2015unlabeled} concerning
    the number of measurement vectors needed for recovering all possible
    vectors.

  \item
    We show that in the measurement model from~\eqref{eq:measurements} where the
    measurement errors are i.i.d.~draws from $\Normal(0,\sigma^2)$ and the
    covariate vectors are i.i.d.~draws from $\Normal(\v0,\vI_d)$, then no
    algorithm can approximately recover $\bar{\vw}$ unless $\snr \geq
    C\min\cbr[0]{1 ,\, d / \log\log(n)}$ for some absolute constant $C>0$.
    We also show that when the covariate vectors are i.i.d.~draws from the
    uniform distribution on $[-1/2,1/2]^d$, then approximate recovery is
    impossible unless $\snr \geq C'$ for some other absolute constant $C'>0$.

\end{enumerate}

Our algorithms are not meant for practical deployment, but instead are intended
to shed light on the computational difficulty of the least squares problem and
the average-case recovery problem.
Indeed, note that a na\"ive brute-force search over permutations requires time
$\Omega(n!) = n^{\Omega(n)}$, and the only other previous algorithms (already
discussed above) were restricted to $d=1$~\citep{pananjady2016linear} or only
had some form of approximation guarantee when $d=2$~\citep{elhami2017unlabeled}.
We are not aware of previous algorithms for the average-case problem in general
dimension $d$.\footnote{A
recent algorithm of \citet{pananjady2017denoising} exploits a similar
average-case setting but only for a somewhat easier variant of the problem where
more information about the unknown correspondence is provided.}

Our lower bounds on $\snr$ stand in contrast to what is achievable in the
classical linear regression model (where the covariate/response
correspondence is known): in that model, the $\snr$ requirement for
approximately recovering $\bar{\vw}$ scales as $d/n$, and hence the problem
becomes easier with $n$.
The lack of correspondence thus drastically changes the difficulty of the
problem.

\section{Approximation algorithm for the least squares problem}
\label{sec:approx}

In this section, we consider the least squares problem from \Cref{eq:mle}.
The inputs are an arbitrary matrix $\vX = [ \vx_1 | \vx_2 | \dotsb | \vx_n ]^\T
\in \R^{n \times d}$ and an arbitrary vector $\vy = (y_1,y_2,\dotsc,y_n)^\T \in
\R^n$, and the goal is to find a vector $\vw \in \R^d$ and permutation matrix
$\vPi \in \cP_n$
(where $\cP_n$ denotes the space of $n \times n$ permutation matrices\footnote{%
  Each permutation matrix $\vPi \in \cP_n$ corresponds to a permutation $\pi$ on
  $[n]$; the $(i,j)$-th entry of $\vPi$ is one if $\pi(i) = j$ and is zero
  otherwise.%
}) to minimize $\norm[0]{\vX\vw - \vPi^\T\vy}_2^2$.
This problem is NP-hard in the case where $d = \Omega(n)$
\citep{pananjady2016linear} (see also \Cref{sec:npc}).
We give an approximation scheme that, for any $\epsilon \in \intoo{0,1}$,
returns a $(1+\epsilon)$-approximation in time $(n/\epsilon)^{O(k)} +
\poly(n,d)$, where $k := \rank(\vX) \leq \min\{n,d\}$.

We assume without loss of generality that $\vX \in \R^{n \times k}$ and $\vX^\T
\vX = \vI_k$.
This is because we can always replace $\vX$ with its matrix of left singular
vectors $\vU \in \R^{n \times k}$, obtained via singular value decomposition
$\vX = \vU \vvarSigma \vV^\T$, where $\vU^\T\vU = \vV^\T\vV = \vI_k$ and
$\vvarSigma \succ 0$ is diagonal.
A solution $(\vw,\vPi)$ for $(\vU,\vy)$ has the same cost as the solution
$(\vV\vvarSigma^{-1}\vw,\vPi)$ for $(\vX,\vy)$,
and a solution $(\vw,\vPi)$ for $(\vX,\vy)$ has the same cost as the solution
$(\vvarSigma\vV^\T\vw,\vPi)$ for $(\vU,\vy)$.

\subsection{Algorithm}

\begin{algorithm}[t]
  \renewcommand\algorithmicrequire{\textbf{input}}
  \renewcommand\algorithmicensure{\textbf{output}}
  \caption{Approximation algorithm for least squares problem}
  \label{alg:approx}
  \begin{algorithmic}[1]
    \REQUIRE
    Covariate matrix $\vX = [ \vx_1 | \vx_2 | \dotsb | \vx_n ]^\T \in \R^{n
    \times k}$;
    response vector $\vy = (y_1,y_2,\dotsc,y_n)^\T \in \R^n$;
    approximation parameter $\epsilon \in \intoo{0,1}$.

    \renewcommand\algorithmicrequire{\textbf{assume}}
    \REQUIRE
    $\vX^\T\vX = \vI_k$.

    \ENSURE
    Weight vector $\hat{\vw} \in \R^k$ and permutation matrix $\hat{\vPi} \in
    \cP_n$.

    \STATE
    Run ``Row Sampling'' algorithm with input matrix $\vX$ to obtain a matrix
    $\vS \in \R^{r \times n}$ with $r = 4k$.

    \STATE
    Let $\cB$ be the set of vectors $\vb = (b_1,b_2,\dotsc,b_n)^\T
    \in \R^n$ satisfying the following: for each $i \in [n]$,
    \begin{itemize}[leftmargin=2em]
      \item
        if the $i$-th column of $\vS$ is all zeros, then $b_i = 0$;

      \item
        otherwise, $b_i \in \cbr[0]{ y_1, y_2, \dotsc, y_n}$.

    \end{itemize}

    \STATE Let $c := 1+4(1 + \sqrt{n/(4k)})^2$.

    \FOR{each $\vb \in \cB$}

      \STATE
      Compute $\tilde\vw_{\vb} \in \argmin_{\vw \in \R^k} \norm[0]{\vS(\vX\vw -
      \vb)}_2^2$, and let $r_{\vb} := \min_{\vPi \in \cP_n}
      \norm[0]{\vX\tilde\vw_{\vb} - \vPi^\T\vy}_2^2$.

      \STATE
      Construct a $\sqrt{\epsilon r_{\vb}/c}$-net $\cN_{\vb}$ for the Euclidean
      ball of radius $\sqrt{cr_{\vb}}$ around $\tilde\vw_{\vb}$, so that for
      each $\vv \in \R^k$ with $\norm[0]{\vv - \tilde\vw_{\vb}}_2 \leq
      \sqrt{cr_{\vb}}$, there exists $\vv' \in \cN_{\vb}$ such that
      $\norm[0]{\vv - \vv'}_2 \leq \sqrt{\epsilon r_{\vb}/c}$.

    \ENDFOR

    \RETURN
    $\displaystyle\hat\vw \in \argmin_{\vw \in \bigcup_{\vb \in \cB} \cN_{\vb}}
    \min_{\vPi \in \cP_n} \norm[0]{\vX\vw - \vPi^\T\vy}_2^2$
    and
    $\displaystyle\hat\vPi \in \argmin_{\vPi \in \cP_n} \norm[0]{\vX\hat\vw -
    \vPi^\T\vy}_2^2$.

  \end{algorithmic}
\end{algorithm}

Our approximation algorithm, shown as \Cref{alg:approx}, uses a careful
enumeration to beat the na\"ive brute-force running time of $\Omega(|\cP_n|) =
\Omega(n!)$.
It uses as a subroutine a ``Row Sampling'' algorithm of
\citet{boutsidis2013near} (described in \Cref{sec:approx-details}), which has
the following property.

\begin{theorem}[Specialization of Theorem 12 in \citep{boutsidis2013near}]
  \label{thm:row-sampling}
  There is an algorithm (``Row Sampling'') that, given any matrix $\vA \in \R^{n
  \times k}$ with $n\geq k$, returns in $\poly(n,k)$ time a matrix $\vS \in
  \R^{r \times n}$ with $r = 4k$ such that the following hold.
  \begin{enumerate}[leftmargin=2em,itemsep=0ex]
    \item
      Every row of $\vS$ has at most one non-zero entry.

    \item
      For every $\vb \in \R^n$, every $\vw' \in \argmin_{\vw \in \R^k}
      \norm[0]{\vS(\vA\vw - \vb)}_2^2$ satisfies $\norm[0]{\vA\vw' - \vb}_2^2
      \leq c \cdot \min_{\vw \in \R^k} \norm[0]{\vA\vw - \vb}_2^2$ for $c = 1 +
      4(1+\sqrt{n/(4k)})^2 = O(n/k)$.

  \end{enumerate}
\end{theorem}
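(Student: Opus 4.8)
The plan is to realize the ``Row Sampling'' matrix $\vS$ as a weighted row-selection matrix produced by the deterministic dual-set spectral sparsification that underlies Theorem~12 of \citet{boutsidis2013near}, and then to bound the regression error of the resulting sketched least-squares solution by a short computation that uses only two spectral properties of $\vS$. \textbf{Reduction to orthonormal columns.} First I would pass to a thin SVD $\vA = \vU\vvarSigma\vV^\T$ with $\vU\in\R^{n\times\rho}$, $\rho=\rank(\vA)$, $\vU^\T\vU=\vI_\rho$, and $\vvarSigma\succ 0$ diagonal. Since $\{\vA\vw:\vw\in\R^k\}=\operatorname{col}(\vU)$, the minimum of $\norm[0]{\vA\vw-\vb}_2^2$ equals $\min_\vz\norm[0]{\vU\vz-\vb}_2^2$; and for any row-selection $\vS$, every $\vw'\in\argmin_\vw\norm[0]{\vS(\vA\vw-\vb)}_2^2$ satisfies $\vA\vw'=\vU\vz'$ for the minimizer $\vz'$ of $\norm[0]{\vS(\vU\vz-\vb)}_2^2$ (which is unique once $\vS\vU$ has full column rank). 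So it suffices to prove the bound with $\vU$ in place of $\vA$ and $\rho$ in place of $k$; as $r=4k\ge 4\rho$, this only sharpens constants. I henceforth write $\vU\in\R^{n\times k}$, $\vU^\T\vU=\vI_k$, with rows $\vu_1^\T,\dots,\vu_n^\T$.

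\textbf{Construction of $\vS$.} Next I would apply the dual-set spectral sparsification to the family $\{\vu_i\}_{i=1}^n\subset\R^k$, which decomposes $\sum_i\vu_i\vu_i^\T=\vU^\T\vU=\vI_k$, and to the standard basis $\{\ve_i\}_{i=1}^n\subset\R^n$, which decomposes $\sum_i\ve_i\ve_i^\T=\vI_n$, with sparsity parameter $r=4k$ (assuming $r<n$; if $r\ge n$, take $\vS$ to be $\vI_n$ padded with zero rows, which makes the sketch exact). This returns, in deterministic $\poly(n,k)$ time, nonnegative weights $s_1,\dots,s_n$, at most $r$ nonzero, with
\[
  \lambda_{\min}\Bigl(\textstyle\sum_i s_i\vu_i\vu_i^\T\Bigr)\ \ge\ \bigl(1-\sqrt{k/r}\bigr)^2
  \qquad\text{and}\qquad
  \max_i s_i\ =\ \lambda_{\max}\Bigl(\textstyle\sum_i s_i\ve_i\ve_i^\T\Bigr)\ \le\ \bigl(1+\sqrt{n/r}\bigr)^2 .
\]
I would then set $\vS\in\R^{r\times n}$ to have one row $\sqrt{s_i}\,\ve_i^\T$ for each $i$ with $s_i>0$ (padded to $r$ rows). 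Then every row of $\vS$ has at most one nonzero entry, which is the first claim, and $\vS$ is computed in $\poly(n,k)$ time. Since $\vS^\T\vS=\sum_i s_i\ve_i\ve_i^\T$ we get $\norm[0]{\vS}_2^2\le(1+\sqrt{n/r})^2$, and since $\vU^\T\vS^\T\vS\vU=\sum_i s_i\vu_i\vu_i^\T$ we get $\sigma_{\min}(\vS\vU)^2\ge(1-\sqrt{k/r})^2>0$, so $\vS\vU$ has full column rank.

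\textbf{Regression bound.} For the second claim, fix $\vb\in\R^n$, put $\vw^\star:=\vU^\T\vb$ and $\vb_\perp:=(\vI_n-\vU\vU^\T)\vb$, so $\vb=\vU\vw^\star+\vb_\perp$ with $\vb_\perp\perp\operatorname{col}(\vU)$ and $\min_\vw\norm[0]{\vU\vw-\vb}_2^2=\norm[0]{\vb_\perp}_2^2$. Because $\vS\vU$ has full column rank, the sketched minimizer is the unique $\vw'=(\vS\vU)^+\vS\vb$, and $(\vS\vU)^+(\vS\vU)=\vI_k$, so $\vw'-\vw^\star=(\vS\vU)^+\vS\vb_\perp$. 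Since $\vU(\vw'-\vw^\star)\in\operatorname{col}(\vU)$ is orthogonal to $\vb_\perp$ and $\vU^\T\vU=\vI_k$, the Pythagorean theorem gives $\norm[0]{\vU\vw'-\vb}_2^2=\norm[0]{\vw'-\vw^\star}_2^2+\norm[0]{\vb_\perp}_2^2$. Using $\norm[0]{(\vS\vU)^+}_2=\sigma_{\min}(\vS\vU)^{-1}$ and submultiplicativity,
\[
  \norm[0]{\vw'-\vw^\star}_2^2\ \le\ \frac{\norm[0]{\vS}_2^2}{\sigma_{\min}(\vS\vU)^2}\,\norm[0]{\vb_\perp}_2^2\ \le\ \frac{(1+\sqrt{n/r})^2}{(1-\sqrt{k/r})^2}\,\norm[0]{\vb_\perp}_2^2 ,
\]
which for $r=4k$ equals $4(1+\sqrt{n/(4k)})^2\norm[0]{\vb_\perp}_2^2$; hence $\norm[0]{\vU\vw'-\vb}_2^2\le\bigl(1+4(1+\sqrt{n/(4k)})^2\bigr)\min_\vw\norm[0]{\vU\vw-\vb}_2^2$, and this constant is $O(n/k)$ since $n\ge k$. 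Translating back through the SVD as in the first step completes the argument.

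\textbf{Main obstacle.} I expect the only delicate point to be exactly what must be extracted from the dual-set machinery: one needs a genuinely \emph{spectral} upper bound, $\max_i s_i\le(1+\sqrt{n/r})^2$, on the individual weights, rather than merely a Frobenius (trace) bound $\sum_i s_i=O(n)$ on their sum. The trace version would only yield $c=O(n)$; the $O(n/k)$ in the statement is precisely what spectral control of the second (standard-basis) family buys, and getting it right is the crux of invoking Theorem~12 of \citet{boutsidis2013near} correctly. A minor refinement: running the sparsification against $\{(\vI_n-\vU\vU^\T)\ve_i\}_{i=1}^n$, which decomposes $\vI_n-\vU\vU^\T$ with $\sum_i\norm[0]{(\vI_n-\vU\vU^\T)\ve_i}_2^2=n-k$, replaces $n$ by $n-k$ throughout the estimates; bounding $n-k\le n$ recovers exactly the stated constant. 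Everything else — the SVD reduction (including the rank-deficient bookkeeping) and the error decomposition — is routine linear algebra.
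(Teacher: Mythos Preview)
The paper does not give its own proof of this statement: it is quoted verbatim as a specialization of Theorem~12 of \citet{boutsidis2013near}, and the appendix (\Cref{sec:approx-details}) only reproduces the pseudocode of the Row Sampling procedure without any accompanying analysis. Your proposal is a correct reconstruction of the argument underlying that cited result---the dual-set spectral sparsification applied to the pair $(\{\vu_i\},\{\ve_i\})$, followed by the standard sketched-regression error decomposition $\norm[0]{\vU\vw'-\vb}_2^2=\norm[0]{\vw'-\vw^\star}_2^2+\norm[0]{\vb_\perp}_2^2$ and the bound via $\norm[0]{(\vS\vU)^+}_2\norm[0]{\vS}_2$---and your arithmetic recovering $c=1+4(1+\sqrt{n/(4k)})^2$ at $r=4k$ is right. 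Your remark that the $O(n/k)$ factor hinges on genuinely spectral (not merely trace) control of the weights on the second family is also the correct diagnosis of what the dual-set machinery buys here.
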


The matrix $\vS$ returned by Row Sampling determines a (weighted) subset of
$O(k)$ rows of $\vA$ such that solving a (ordinary) least squares problem (with
any right-hand side $\vb$) on this subset of rows and corresponding right-hand
side entries yields a $O(n/k)$-approximation to the least squares problem over
all rows and right-hand side entries.
Row Sampling does not directly apply to our problem because (1) it does not
minimize over permutations of the right-hand side, and (2) the approximation
factor is too large.
However, we are able to use it to narrow the search space in our problem.

An alternative to Row Sampling is to simply enumerate all subsets of $k$ rows of
$\vX$.
This is justified by a recent result of~\citet{derezinski2017unbiased}, which
shows that for any right-hand side $\vb \in \R^n$, using ``volume
sampling''~\citep{avron2013faster} to choose a matrix $\vS \in \cbr[0]{0,1}^{k
\times k}$ (where each row has one non-zero entry) gives a similar guarantee as
that of Row Sampling, except with the $O(n/k)$ factor replaced by $k+1$ in
expectation.

\subsection{Analysis}

The approximation guarantee of \Cref{alg:approx} is given in the following
\namecref{thm:approx}.

\begin{theorem}
  \label{thm:approx}
  \Cref{alg:approx} returns $\hat{\vw} \in \R^k$ and $\hat{\vPi} \in \cP_n$
  satisfying
  \begin{equation*}
    \norm{\vX\hat{\vw} - \hat\vPi^\T\vy}_2^2
    \ \leq \ (1+\epsilon) \min_{\vw
    \in \R^k, \vPi \in \cP_n} \norm{\vX\vw - \vPi^\T\vy}_2^2
    \,.
  \end{equation*}
\end{theorem}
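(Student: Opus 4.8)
The plan is to let $(\vw^\star,\vPi^\star)$ be an optimal solution with optimal value $\opt := \norm{\vX\vw^\star - (\vPi^\star)^\T\vy}_2^2$, and to argue that some $\vb \in \cB$ enumerated in \Cref{alg:approx} is ``close enough'' to $(\vPi^\star)^\T\vy$ that the row-sampled least squares solution $\tilde\vw_{\vb}$ lands within the net $\cN_{\vb}$ of a near-optimal weight vector. The natural choice is $\vb^\star$ with $b^\star_i = y_{\pi^\star(i)}$ whenever row $i$ survives row sampling (i.e., the $i$-th column of $\vS$ is nonzero), and $b^\star_i = 0$ otherwise; since $y_{\pi^\star(i)} \in \{y_1,\dots,y_n\}$, this $\vb^\star$ is indeed a member of $\cB$. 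The key point is that the first property in \Cref{thm:row-sampling} (each row of $\vS$ has at most one nonzero entry) means $\vS\vb^\star$ depends only on the surviving coordinates, so $\vS\vb^\star = \vS(\vPi^\star)^\T\vy$; therefore $\tilde\vw_{\vb^\star}$ is the row-sampled least squares solution for the right-hand side $(\vPi^\star)^\T\vy$.

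Next I would chain the approximation guarantees. By the second property in \Cref{thm:row-sampling} applied with $\vb = (\vPi^\star)^\T\vy$, we get $\norm{\vX\tilde\vw_{\vb^\star} - (\vPi^\star)^\T\vy}_2^2 \le c\cdot\min_{\vw}\norm{\vX\vw - (\vPi^\star)^\T\vy}_2^2 \le c\cdot\opt$, where $c$ is exactly the constant fixed in the algorithm. Two consequences: first, $r_{\vb^\star} = \min_{\vPi}\norm{\vX\tilde\vw_{\vb^\star} - \vPi^\T\vy}_2^2 \le \norm{\vX\tilde\vw_{\vb^\star} - (\vPi^\star)^\T\vy}_2^2 \le c\cdot\opt$, so $r_{\vb^\star}$ is a useful (if loose) upper bound on $c\cdot\opt$; second, and more importantly, I want to show $\opt$ itself is comparable to $r_{\vb^\star}$ from below, i.e., $r_{\vb^\star} \ge \opt$ — this is immediate since $r_{\vb^\star}$ is a minimum over $\vPi$ of the cost of a \emph{specific} weight vector, hence at least $\opt$. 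So $\opt \le r_{\vb^\star} \le c\cdot\opt$. Now I claim $\vw^\star$ lies in the ball of radius $\sqrt{c\,r_{\vb^\star}}$ around $\tilde\vw_{\vb^\star}$: using $\vX^\T\vX = \vI_k$, we have $\norm{\tilde\vw_{\vb^\star} - \vw^\star}_2 = \norm{\vX\tilde\vw_{\vb^\star} - \vX\vw^\star}_2 \le \norm{\vX\tilde\vw_{\vb^\star} - (\vPi^\star)^\T\vy}_2 + \norm{(\vPi^\star)^\T\vy - \vX\vw^\star}_2 \le \sqrt{c\cdot\opt} + \sqrt{\opt} \le 2\sqrt{c\cdot\opt} \le 2\sqrt{c\, r_{\vb^\star}}$; I may need to double-check whether the radius should be $2\sqrt{c\,r_{\vb^\star}}$ rather than $\sqrt{c\,r_{\vb^\star}}$, but since $c \ge 5$ this is absorbed, or the net radius in the algorithm should be read accordingly — in any case $\vw^\star$ is within the netted region up to the constant $c$.

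Then the net guarantees a point $\vv' \in \cN_{\vb^\star}$ with $\norm{\vv' - \vw^\star}_2 \le \sqrt{\epsilon r_{\vb^\star}/c} \le \sqrt{\epsilon\cdot c\cdot\opt/c} = \sqrt{\epsilon\cdot\opt}$. Using the isometry $\vX^\T\vX = \vI_k$ again and the triangle inequality in $\R^n$, with the permutation $\vPi^\star$ fixed for the right-hand side,
\begin{align*}
  \norm{\vX\vv' - (\vPi^\star)^\T\vy}_2
  &\ \le\ \norm{\vX\vv' - \vX\vw^\star}_2 + \norm{\vX\vw^\star - (\vPi^\star)^\T\vy}_2 \\
  &\ =\ \norm{\vv' - \vw^\star}_2 + \sqrt{\opt}
  \ \le\ \sqrt{\epsilon\cdot\opt} + \sqrt{\opt}
  \ =\ (1+\sqrt\epsilon)\sqrt{\opt},
\end{align*}
so $\min_{\vPi}\norm{\vX\vv' - \vPi^\T\vy}_2^2 \le (1+\sqrt\epsilon)^2\opt \le (1+3\epsilon)\opt$ for $\epsilon \in (0,1)$. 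Since $\hat\vw$ is chosen to minimize $\min_{\vPi}\norm{\vX\vw - \vPi^\T\vy}_2^2$ over all of $\bigcup_{\vb}\cN_{\vb} \supseteq \cN_{\vb^\star}$, and $\hat\vPi$ is then the optimal permutation for $\hat\vw$, we get $\norm{\vX\hat\vw - \hat\vPi^\T\vy}_2^2 \le (1+3\epsilon)\opt$; rescaling $\epsilon$ by a constant factor at the outset yields the stated $(1+\epsilon)$ bound. The main obstacle is bookkeeping the constants so that the net radius $\sqrt{c\,r_{\vb^\star}}$ genuinely covers $\vw^\star$ and the net fineness $\sqrt{\epsilon r_{\vb^\star}/c}$ translates to an $O(\sqrt{\epsilon\cdot\opt})$ perturbation — the factor-of-$c$ slack in both directions (from $r_{\vb^\star} \in [\opt, c\cdot\opt]$) is exactly what makes these cancel, and getting that cancellation right is the crux; the rest is triangle inequalities under the $\vX^\T\vX = \vI_k$ isometry.
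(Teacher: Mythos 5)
Your proposal follows the same overall architecture as the paper's proof (identify $\vb^\star$ with $\vS\vb^\star = \vS\vPi_\star^\T\vy$, invoke row sampling, argue the net contains a good candidate), but it has a genuine gap at exactly the point you flag as ``the crux.'' You use the triangle inequality where the paper uses orthogonality, and the slack this introduces is not absorbed by the constants.

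Concretely: your triangle-inequality bound gives
\[
\norm{\tilde\vw_{\vb^\star} - \vw^\star}_2 \ \leq\ \norm{\vX\tilde\vw_{\vb^\star} - \vPi_\star^\T\vy}_2 + \norm{\vPi_\star^\T\vy - \vX\vw^\star}_2 \ \leq\ (\sqrt{c}+1)\sqrt{\opt}\,.
\]
To place $\vw^\star$ in the ball of radius $\sqrt{c\,r_{\vb^\star}}$ that the algorithm actually constructs, you would need $(\sqrt{c}+1)^2\opt \leq c\,r_{\vb^\star}$, i.e. $r_{\vb^\star} \geq (1 + 1/\sqrt{c})^2\opt$; but you only know $r_{\vb^\star} \geq \opt$. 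Since $c$ grows like $n/k$, the excess $(1+1/\sqrt{c})^2$ tends to $1$ but is strictly larger, so $\vw^\star$ is \emph{not} guaranteed to lie in the net as defined. Your remark that ``since $c\geq 5$ this is absorbed'' is not correct: increasing $c$ makes the factor $(1+1/\sqrt{c})$ smaller but never $\leq 1$, and simultaneously your required ball radius $2\sqrt{c\,r_{\vb^\star}}$ is always strictly larger than the algorithm's $\sqrt{c\,r_{\vb^\star}}$.

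The missing idea is the normal equations / Pythagorean identity. Since $(\vw_\star,\vPi_\star)$ is optimal, $\vw_\star$ solves the ordinary least squares problem with right-hand side $\vPi_\star^\T\vy$, so $\vX^\T\vX\vw_\star = \vX^\T\vPi_\star^\T\vy$; equivalently, $\vX\vw_\star$ is the orthogonal projection of $\vPi_\star^\T\vy$ onto the column space of $\vX$, and hence for \emph{any} $\vv\in\R^k$,
\[
\norm{\vX\vv - \vPi_\star^\T\vy}_2^2 \ =\ \norm{\vX(\vv - \vw_\star)}_2^2 + \opt\,.
\]
Applying this with $\vv = \tilde\vw_{\vb^\star}$ and using the row-sampling bound $\norm{\vX\tilde\vw_{\vb^\star} - \vPi_\star^\T\vy}_2^2 \leq c\,\opt$ gives $\norm{\tilde\vw_{\vb^\star} - \vw_\star}_2^2 = \norm{\vX(\tilde\vw_{\vb^\star}-\vw_\star)}_2^2 \leq (c-1)\opt \leq (c-1)r_{\vb^\star} < c\,r_{\vb^\star}$, so $\vw_\star$ genuinely lies in the ball with the radius as stated. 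Applying the same identity with $\vv = \vw\in\cN_{\vb^\star}$ satisfying $\norm{\vw-\vw_\star}_2^2\leq\epsilon r_{\vb^\star}/c\leq\epsilon\opt$ gives
\[
\min_{\vPi}\norm{\vX\vw - \vPi^\T\vy}_2^2\ \leq\ \norm{\vX\vw - \vPi_\star^\T\vy}_2^2\ =\ \norm{\vX(\vw - \vw_\star)}_2^2 + \opt\ \leq\ (1+\epsilon)\opt
\]
directly, avoiding the $(1+\sqrt\epsilon)^2$ loss of the triangle inequality. So your final ``rescale $\epsilon$ by a constant'' patch is unnecessary and the first gap is fixed, once you replace the triangle inequality with orthogonality at both points.
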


\begin{proof}
  Let $\opt := \min_{\vw,\vPi} \norm[0]{\vX\vw - \vPi^\T\vy}_2^2$ be the optimal
  cost, and let $(\vw_\star,\vPi_\star)$ denote a solution achieving this cost.
  The optimality implies that $\vw_\star$ satisfies the normal equations
  $\vX^\T\vX\vw_\star = \vX^\T\vPi_\star^\T\vy$.
  Observe that there exists a vector $\vb_\star \in \cB$ satisfying
  $\vS\vb_\star = \vS\vPi_\star^\T\vy$.
  By \Cref{thm:row-sampling} and the normal equations, the vector
  $\tilde{\vw}_{\vb_\star}$ and cost value $r_{\vb_\star}$ satisfy
  \begin{equation*}
    \opt
    \ \leq \
    r_{\vb_\star}
    \ \leq \
    \norm{\vX\tilde{\vw}_{\vb_\star} - \vPi_\star^\T\vy}_2^2
    \ = \
    \norm{\vX(\tilde{\vw}_{\vb_\star} - \vw_\star)}_2^2
    + \opt
    \ \leq \ c \cdot \opt \,.
  \end{equation*}
  Moreover, since $\vX^\T\vX = \vI_k$, we have that
  $\norm[0]{\tilde{\vw}_{\vb_\star} - \vw_\star}_2 \leq \sqrt{(c-1)\opt} \leq
  \sqrt{c r_{\vb_\star}}$.
  By construction of $\cN_{\vb_\star}$, there exists $\vw \in \cN_{\vb_\star}$
  satisfying $\norm[0]{\vw - \vw_\star}_2^2 = \norm[0]{\vX(\vw - \vw_\star)}_2^2
  \leq \epsilon r_{\vb_\star} / c \leq \epsilon \opt$.
  For this $\vw$, the normal equations imply
  \begin{equation*}
    \min_{\vPi \in \cP_n} \norm[0]{\vX\vw - \vPi^\T\vy}_2^2
    \ \leq \
    \norm[0]{\vX\vw - \vPi_\star^\T\vy}_2^2
    \ = \ \norm[0]{\vX(\vw - \vw_\star)}_2^2 + \opt
    \ \leq \ (1+\epsilon) \opt
    \,.
  \end{equation*}
  Therefore, the solution returned by \Cref{alg:approx} has cost no more than
  $(1+\epsilon) \opt$.
\end{proof}

By the results of \citet{pananjady2016linear} for maximum likelihood estimation,
our algorithm enjoys recovery guarantees for $\bar{\vw}$ and $\bar{\pi}$ when
the data come from the Gaussian measurement model~\eqref{eq:measurements}.
However, the approximation guarantee also holds for worst-case inputs without
generative assumptions.

\paragraph{Running time.}

We now consider the running time of \Cref{alg:approx}.
There is the initial cost for singular value decomposition (as discussed at the
beginning of the section), and also for ``Row Sampling''; both of these take
$\poly(n,d)$ time.
For the rest of the algorithm, we need to consider the size of $\cB$ and the
size of the net $\cN_{\vb}$ for each $\vb \in \cB$.
First, we have $|\cB| \leq n^r = n^{O(k)}$, since $\vS$ has only $4k$ rows and
each row has at most a single non-zero entry.
Next, for each $\vb \in \cB$, we construct the $\delta$-net $\cN_{\vb}$ (for
$\delta := \sqrt{\epsilon r_{\vb}/c}$) by constructing a $\delta/\sqrt{k}$-net
for the $\ell_\infty$-ball of radius $\sqrt{cr_{\vb}}$ centered at
$\tilde{\vw}_{\vb}$ (using an appropriate axis-aligned grid).
This has size $|\cN_{\vb}| \leq (4c^2k/\epsilon)^{k/2} = (n/\epsilon)^{O(k)}$.
Finally, each $\argmin_{\vw \in \R^k}$ computation takes $O(nk^2)$ time, and
each $(\arg)\min_{\vPi \in \cP_n}$ takes $O(nk + n \log n)$
time~\citep{pananjady2016linear} (also see \Cref{sec:approx-details}).
So, the overall running time is $(n/\epsilon)^{O(k)} + \poly(n,d)$.

\section{Exact recovery algorithm in noiseless Gaussian setting}
\label{sec:lattice}

To counter the intractability of the least squares problem in~\eqref{eq:mle}
confronted in \Cref{sec:approx}, it is natural to explore distributional
assumptions that may lead to faster algorithms.
In this section, we consider the noiseless measurement model where the
$(\vx_i)_{i=1}^n$ are i.i.d.~draws from $\Normal(\v0,\vI_d)$ (as
in~\citep{pananjady2016linear}).
We give an algorithm that exactly recovers $\bar{\vw}$ with high probability
when $n \geq d+1$.
The algorithm runs in $\poly(n,d)$-time when $(\vx_i)_{i=1}^n$ and $\bar{\vw}$
are appropriately quantized.

It will be notationally simpler to consider $n+1$ covariate vectors and
responses
\begin{equation}
  y_i
  \ = \
  \bar{\vw}^\T \vx_{\bar{\pi}(i)}
  \,,
  \quad i = 0, 1, \dotsc, n
  \,.
  \label{eq:noiseless-measurements}
\end{equation}
Here, $(\vx_i)_{i=0}^n$ are $n+1$ i.i.d.~draws from $\Normal(\v0,\vI_d)$, the
unknown permutation $\bar{\pi}$ is over $\cbr[0]{0,1,\dotsc,n}$, and the
requirement of at least $d+1$ measurements is expressed as $n \geq d$.

In fact, we shall consider a variant of the problem in which we are given one of
the values of the unknown permutation $\bar{\pi}$.
Without loss of generality, assume we are given that $\bar{\pi}(0) = 0$.
Solving this variant of the problem suffices because there are only $n+1$
possible values of $\bar{\pi}(0)$: we can try them all, incurring just a factor
$n+1$ in the computation time.
So henceforth, we just consider $\bar{\pi}$ as an unknown permutation on $[n]$.

\subsection{Algorithm}

Our algorithm, shown as \Cref{alg:perm}, is based on a reduction to the Subset
Sum problem.
An instance of Subset Sum is specified by an unordered collection of source
numbers $\cbr[0]{c_i}_{i \in \cI} \subset \R$, and a target sum $t \in \R$.
The goal is to find a subset $\cS \subseteq \cI$ such that $\sum_{i \in \cS} c_i
= t$.
Although Subset Sum is NP-hard in the worst case, it is tractable for certain
structured instances~\citep{lagarias1985solving,frieze1986lagarias}.
We prove that \Cref{alg:perm} constructs such an instance with high probability.
A similar algorithm based on such a reduction was recently used by
\citet{andoni2017correspondence} for a different but related problem.

\Cref{alg:perm} proceeds by (i) solving a Subset Sum instance based on the
covariate vectors and response values (using \Cref{alg:subsetsum}), and (ii)
constructing a permutation $\hat{\pi}$ on $[n]$ based on the solution to the
Subset Sum instance.
With the permutation $\hat{\pi}$ in hand, we (try to) find a solution $\vw \in
\R^d$ to the system of linear equations $y_i = \vw^\T\vx_{\hat{\pi}(i)}$ for
$i \in [n]$.
If $\hat{\pi} = \bar{\pi}$, then there is a unique such solution almost surely.

\begin{algorithm}[t]
  \renewcommand\algorithmicrequire{\textbf{input}}
  \renewcommand\algorithmicensure{\textbf{output}}
  \caption{Find permutation}
  \label{alg:perm}
  \begin{algorithmic}[1]
    \REQUIRE
    Covariate vectors $\vx_0,\vx_1,\vx_2,\dotsc,\vx_n$ in $\R^d$;
    response values $y_0,y_1,y_2,\dotsc,y_n$ in $\R$;
    confidence parameter $\delta \in \intoo{0,1}$;
    lattice parameter $\beta > 0$.

    \renewcommand\algorithmicrequire{\textbf{assume}}
    \REQUIRE
    there exists $\bar{\vw} \in \R^d$ and permutation $\bar{\pi}$ on $[n]$ such
    that $y_i = \bar{\vw}^\T\vx_{\bar{\pi}(i)}$ for each $i \in [n]$, and
    that $y_0 = \bar{\vw}^\T\vx_0$.

    \ENSURE
    Permutation $\hat\pi$ on $[n]$ or failure.

    \STATE
    Let $\vX = [ \vx_1 | \vx_2 | \dotsb | \vx_n ]^\T \in \R^{n \times d}$, and
    its pseudoinverse be $\vX^\dag = [ \tilde{\vx}_1 | \tilde{\vx}_2 | \dotsb |
    \tilde{\vx}_n ]$.

    \STATE
    Create Subset Sum instance with $n^2$ source numbers $c_{i,j} := y_i
    \tilde{\vx}_j^\T\vx_0$ for $(i,j) \in [n] \times [n]$ and target sum
    $y_0$.

    \STATE
    Run \Cref{alg:subsetsum} with Subset Sum instance and lattice parameter
    $\beta$.

    \IF{\Cref{alg:subsetsum} returns a solution $\cS \subseteq [n] \times [n]$}

      \RETURN any permutation $\hat{\pi}$ on $[n]$ such that $\hat{\pi}(i) = j$
      implies $(i,j) \in \cS$.

    \ELSE

      \RETURN failure.

    \ENDIF

  \end{algorithmic}
\end{algorithm}

\begin{algorithm}[t]
  \renewcommand\algorithmicrequire{\textbf{input}}
  \renewcommand\algorithmicensure{\textbf{output}}
  \caption{\citet{lagarias1985solving} subset sum algorithm}
  \label{alg:subsetsum}
  \begin{algorithmic}[1]
    \REQUIRE Source numbers $\cbr[0]{ c_i }_{i \in \cI} \subset \R$;
    target sum $t \in \R$; lattice parameter $\beta>0$.

    \ENSURE
    Subset $\hat{\cS} \subseteq \cI$ or failure.

    \STATE
    Construct lattice basis
    $\vB \in \R^{(\abs{\cI}+2) \times (\abs{\cI} + 1)}$ where
    \begin{align*}
      \vB
      & \ := \
      \left[
        \begin{array}{c|c}
          \multicolumn{2}{c}{\Large\vI_{\abs{\cI}+1}} \\
          \hline
          \beta t
          &
          -\beta c_i : i \in \cI
        \end{array}
      \right]
      \ \in \ \R^{(\abs{\cI}+2) \times (\abs{\cI} + 1)}
      \,.
    \end{align*}

    \STATE
    Run basis reduction \citep[e.g.,][]{lenstra1982factoring} to find non-zero
    lattice vector $\vv$ of length at most $2^{|\cI|/2} \cdot \lambda_1(\vB)$.

    \IF{$\vv = z (1,\vchi_{\hat{\cS}}^\T,0)^\T$, with $z \in \Z$ and
    $\vchi_{\hat{\cS}} \in \cbr[0]{0,1}^{\cI}$ is characteristic vector for some
    $\hat{\cS} \subseteq \cI$}

      \RETURN $\hat{\cS}$.

    \ELSE

      \RETURN failure.

    \ENDIF

  \end{algorithmic}
\end{algorithm}

\subsection{Analysis}

The following \namecref{thm:lattice} is the main recovery guarantee for
\Cref{alg:perm}.

\begin{theorem}
  \label{thm:lattice}
  Pick any $\delta \in \intoo{0,1}$.
  Suppose $(\vx_i)_{i=0}^n$ are i.i.d.~draws from $\Normal(\v0,\vI_d)$, and
  $(y_0)_{i=1}^n$ follow the noiseless measurement model
  from~\eqref{eq:noiseless-measurements} for some $\bar{\vw} \in \R^d$ and
  permutation $\bar{\pi}$ on $[n]$ (and $\bar{\pi}(0) = 0$), and that $n\geq d$.
  Furthermore, suppose \Cref{alg:perm} is run with inputs $(\vx_i)_{i=0}^n$,
  $(y_i)_{i=0}^n$, $\delta$, and $\beta$, and also that $\beta \geq
  2^{n^2}/\veps$ where $\veps$ is defined in~\Cref{eq:veps}.
  With probability at least $1-\delta$, \Cref{alg:perm} returns $\hat{\pi} =
  \bar{\pi}$.
\end{theorem}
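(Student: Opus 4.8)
The plan is to prove the theorem in two parts. \emph{Completeness:} the set $\cS_\star := \{(i,\bar{\pi}(i)) : i\in[n]\}$ is an exact solution of the Subset Sum instance built by \Cref{alg:perm}, so that $(1,\vchi_{\cS_\star}^\T,0)^\T$ is a lattice vector of $\vB$ of length $\sqrt{n+1}$. \emph{Soundness:} the hypothesis $\beta\ge 2^{n^2}/\veps$, together with a genericity property of the Gaussian covariates, forces the basis-reduction output to be an integer multiple of $(1,\vchi_{\cS_\star}^\T,0)^\T$, so \Cref{alg:subsetsum} returns $\hat\cS=\cS_\star$ and \Cref{alg:perm} returns the unique permutation whose graph lies in $\cS_\star$ --- namely $\bar{\pi}$, since $\cS_\star$ contains exactly one pair with each first coordinate. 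For completeness, note that with probability $1$ the Gaussian matrix $\vX$ has full column rank (here $n\ge d$ enters), so $\bar{\vw}$ is the unique solution of $(\bar{\vPi}\vX)\vw=\vy$, where $\bar{\vPi}\in\cP_n$ is the permutation matrix of $\bar{\pi}$; hence $\bar{\vw}=\vX^\dag\bar{\vPi}^\T\vy$, and
\[
  y_0 \;=\; \bar{\vw}^\T\vx_0 \;=\; \vy^\T\bar{\vPi}(\vX^\dag)^\T\vx_0 \;=\; \sum_{i=1}^n y_i\,\tilde{\vx}_{\bar{\pi}(i)}^\T\vx_0 \;=\; \sum_{(i,j)\in\cS_\star} c_{i,j}\,,
\]
using $c_{i,j}=y_i\tilde{\vx}_j^\T\vx_0$. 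Thus $\vB(1,\vchi_{\cS_\star}^\T)^\T=(1,\vchi_{\cS_\star}^\T,0)^\T$, so $\lambda_1(\vB)\le\sqrt{n+1}$.

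For soundness, \Cref{alg:subsetsum} runs basis reduction to obtain a nonzero lattice vector $\vv$ with $\|\vv\|_2\le 2^{n^2/2}\lambda_1(\vB)\le 2^{n^2/2}\sqrt{n+1}$. Writing $\vv$ in the explicit form taken by lattice vectors of $\vB$, namely $\vv=(a_0,(a_{i,j})_{(i,j)\in[n]^2},\beta r)^\T$ with $a_0,a_{i,j}\in\Z$ and $r=a_0y_0-\sum_{(i,j)}a_{i,j}c_{i,j}$, we get $|a_0|,|a_{i,j}|\le 2^{n^2}$ and $\beta|r|\le 2^{n^2/2}\sqrt{n+1}$, hence $|r|<\veps$ by the hypothesis on $\beta$. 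The crux is to show that, with probability at least $1-\delta$, the following \emph{genericity claim} holds: every integer tuple $(a_0,(a_{i,j}))$ with entries of magnitude at most $2^{n^2}$ that is not an integer multiple of $(1,\vchi_{\cS_\star})$ satisfies $|a_0y_0-\sum_{(i,j)}a_{i,j}c_{i,j}|\ge\veps$, where $\veps$ is the quantity in \cref{eq:veps}. Granting this, $|r|<\veps$ forces $(a_0,(a_{i,j}))=z(1,\vchi_{\cS_\star})$ for some integer $z$; then completeness gives $r=0$, so $\vv=z(1,\vchi_{\cS_\star}^\T,0)^\T$ with $z\ne 0$ (as $\vv\ne 0$), the test in \Cref{alg:subsetsum} succeeds with $\hat\cS=\cS_\star$, and the theorem follows.

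It remains to establish the genericity claim, which I expect to be the main obstacle. The starting point is the identity
\[
  a_0y_0-\sum_{(i,j)}a_{i,j}c_{i,j}
  \;=\; \Bigl(a_0\bar{\vw}-\textstyle\sum_{(i,j)}a_{i,j}\,y_i\,\tilde{\vx}_j\Bigr)^{\!\T}\vx_0
  \;=\; \vg^\T\vx_0\,,
\]
where $\vg:=a_0\bar{\vw}-\sum_{(i,j)}a_{i,j}y_i\tilde{\vx}_j$ depends only on $\vx_1,\dots,\vx_n$; substituting $\bar{\vw}=\vX^\dag\bar{\vPi}^\T\vy$ and $\vy=\bar{\vPi}\vX\bar{\vw}$ gives $\vg=\vX^\dag(a_0\vI_n-\vA^\T\bar{\vPi})\vX\bar{\vw}$ with $\vA:=(a_{i,j})$. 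Conditioned on $\vx_1,\dots,\vx_n$, the scalar $\vg^\T\vx_0$ is $\Normal(0,\|\vg\|_2^2)$, so the probability it lands in $(-\veps,\veps)$ is at most $\sqrt{2/\pi}\,\veps/\|\vg\|_2$. It therefore suffices to: (i) observe that for every bad tuple the matrix $a_0\vI_n-\vA^\T\bar{\vPi}$ is nonzero --- since $a_0\vI_n=\vA^\T\bar{\vPi}$ would force $\vA=a_0\bar{\vPi}$, i.e.\ the tuple $=a_0(1,\vchi_{\cS_\star})$ --- and conclude (using $d\ge 2$) that $\vg$ is then a nonzero rational function of the entries of $\vx_1,\dots,\vx_n$; (ii) prove, on an event of probability at least $1-\delta/2$ and uniformly over the $2^{O(n^4)}$ bad tuples, a lower bound $\|\vg\|_2\ge\eta$, via a quantitative anti-concentration estimate for bounded-degree polynomials of Gaussians (e.g.\ Carbery--Wright applied to $\vX^\T(a_0\vI_n-\vA^\T\bar{\vPi})\vX\bar{\vw}$, combined with standard tail bounds on $\sigma_{\min}(\vX)$ and $\sigma_{\max}(\vX)$); and (iii) union-bound the Gaussian small-ball estimate over all bad tuples; the resulting quantities fix the $\veps$ appearing in \cref{eq:veps}. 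The delicate step is (ii): one must rule out, uniformly, that any of the exponentially many integer combinations makes $\vg$ nearly vanish. Finally, the $\poly(n,d)$ running time with quantized inputs follows because then $1/\veps$ and hence $\beta$ have $\poly(n,d,\log(1/\delta))$ bits, and the algorithm of \citet{lenstra1982factoring} runs in time polynomial in the input bit-length.
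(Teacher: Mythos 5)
Your proposal is structurally the same as the paper's proof, and the plan is sound. The completeness part coincides with the paper's Lemma~\ref{lem:correct-subset} (up to algebraic rearrangement: you write $y_0=\sum_i y_i\tilde\vx_{\bar\pi(i)}^\T\vx_0$ via $\bar\vw=\vX^\dag\bar\vPi^\T\vy$, while the paper uses $\vX^\dag\vX=\vI_d$; the two are equivalent). The soundness part --- conditioning on $\vX$, viewing $z_0 y_0-\sum Z_{i,j}c_{i,j}$ as $\vg^\T\vx_0$ with $\vg=\vX^\dag(z_0\vI_n-\vZ^\T\bar\vPi)\vX\bar\vw$, applying one-dimensional Gaussian anti-concentration, lower-bounding $\|\vg\|_2$ for every bad integer tuple using $\|z_0\vI_n-\vZ^\T\bar\vPi\|_F\ge 1$ together with singular-value control of $\vX$, and union-bounding over $2^{O(n^4)}$ tuples --- is exactly the paper's Lemma~\ref{lem:incorrect-coefficients}, and the final assembly with the \citet{lagarias1985solving,frieze1986lagarias} machinery (which you re-derive rather than cite as the paper's Theorem~\ref{thm:subsetsum}) gives the theorem.

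The one place where you depart from the paper and where your argument is left unfinished is your step (ii): the uniform lower bound on $\|\vg\|_2$. You propose Carbery--Wright plus singular-value bounds; the paper instead proves a tailored lemma (\Cref{lem:gaussian-quadratic}) for $\|\vA^\T\vM\vA\vx\|_2$ that decomposes the random bilinear form via an orthonormal basis containing $\vx/\|\vx\|_2$, reducing to a product of a one-dimensional Gaussian anti-concentration bound (for $\|\vM\vg_1\|_2$ using $\|\vM\|_2^2\ge\|\vM\|_F^2/n$) and a $\chi^2_{d-1}$ anti-concentration bound. This yields the explicit $\sqrt{(d-1)\pi/(8n)}\cdot\eta^{1+1/(d-1)}$ factor in \eqref{eq:veps}, which feeds directly into the precise setting of $\beta$. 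Carbery--Wright would need to be applied to a degree-4 scalar polynomial such as $\|\vX^\T\vM\vX\bar\vw\|_2^2$, and one would then have to control its variance, so while the route is plausible, the constants and the form of $\veps$ would come out differently. Since the theorem statement requires matching the specific $\veps$ of \eqref{eq:veps}, as stated your sketch does not yet close the loop; you would either need to derive a bound of the same form, or restate the theorem with your own $\veps$. You correctly identify step (ii) as the crux; the paper's Lemma~\ref{lem:incorrect-coefficients} is precisely the content you leave open.
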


\begin{remark}
  The value of $\veps$ from~\Cref{eq:veps} is directly proportional to
  $\norm{\bar{\vw}}_2$, and \Cref{alg:perm} requires a lower bound on $\veps$
  (in the setting of the lattice parameter $\beta$).
  Hence, it suffices to determine a lower bound on $\norm{\bar{\vw}}_2$.
  Such a bound can be obtained from the measurement values: a standard tail
  bound (\Cref{lem:chi2-conc} in \Cref{sec:prob}) shows that with high
  probability, $\sqrt{\sum_{i=1}^n y_i^2 / (2n)}$ is a lower bound on
  $\norm[0]{\bar{\vw}}_2$, and is within a constant factor of it as well.
\end{remark}

\begin{remark}
  \Cref{alg:perm} strongly exploits the assumption of noiseless measurements,
  which is expected given the $\snr$ lower bounds of~\citet{pananjady2016linear}
  for recovering $\bar{\pi}$.
  The \namecref{alg:perm}, however, is also very brittle and very likely fails
  in the presence of noise.
\end{remark}

\begin{remark}
  The recovery result does not contradict the results of
  \citet{unnikrishnan2015unlabeled}, which show that a collection of $2d$
  measurement vectors are necessary for recovering all $\bar{\vw}$, even in the
  noiseless measurement model of~\eqref{eq:noiseless-measurements}.
  Indeed, our result shows that for a \emph{fixed} $\bar{\vw} \in \R^d$, with
  high probability $d+1$ measurements in the model
  of~\eqref{eq:noiseless-measurements} suffice to permit exactly recovery of
  $\bar{\vw}$, but this same set of measurement vectors (when $d+1 < 2d$) will
  fail for some other $\bar{\vw}'$.
\end{remark}

The proof of \Cref{thm:lattice} is based on the following
\namecref{thm:subsetsum}---essentially due to \citet{lagarias1985solving} and
\citet{frieze1986lagarias}---concerning certain structured instances of Subset
Sum that can be solved using the lattice basis reduction algorithm of
\citet{lenstra1982factoring}.
Given a basis $\vB = [ \vb_1 | \vb_2 | \dotsb | \vb_k ] \in \R^{m \times k}$ for
a lattice
\begin{equation*}
  \cL(\vB)
  \ := \
  \cbr{ \sum_{i=1}^k z_i \vb_i : z_1,z_2,\dotsc,z_k \in \Z }
  \ \subset \ \R^m
  \,,
\end{equation*}
this algorithm can be used to find a non-zero vector $\vv \in \cL(\vB) \setminus
\cbr[0]{\v0}$ whose length is at most $2^{(k-1)/2}$ times that of the shortest
non-zero vector in the lattice
\begin{equation*}
  \lambda_1(\vB)
  \ := \
  \min_{\vv \in \cL(\vB) \setminus \cbr[0]{\v0}}
  \norm{\vv}_2
  \,.
\end{equation*}

\begin{theorem}[\citep{lagarias1985solving,frieze1986lagarias}]
  \label{thm:subsetsum}
  Suppose the Subset Sum instance specified by source numbers $\cbr[0]{c_i}_{i
  \in \cI} \subset \R$ and target sum $t \in \R$ satisfy the following
  properties.
  \begin{enumerate}[leftmargin=2em]
    \item
      There is a subset $\cS^\star \subseteq \cI$ such that $\sum_{i \in
      \cS^\star} c_i = t$.

    \item
      Define
      $R := 2^{\abs{\cI}/2} \sqrt{\abs{\cS^\star}+1}$ and
      $\cZ_R
        := \cbr[0]{
          (z_0,\vz) \in \Z \times \Z^{\cI} :
          0 < z_0^2 + \sum_{i \in \cI} z_i^2 \leq R^2
        }$.
      There exists $\veps > 0$ such that
      $\abs[0]{
          z_0 \cdot t - \sum_{i \in \cI} z_i \cdot c_i
        }
        \geq
        \veps$
      for each $(z_0,\vz) \in \cZ_R$ that is not an integer multiple of
      $(1,\vchi^\star)$, where $\vchi^\star \in \cbr[0]{0,1}^{\cI}$ is the
      characteristic vector for $\cS^\star$.

  \end{enumerate}
  Let $\vB$
  be the lattice basis $\vB$ constructed by \Cref{alg:subsetsum}, and assume
  $\beta \geq 2^{|\cI|/2}/\veps$.
  Then every non-zero vector in the lattice $\Lambda(\vB)$ with length at most
  $2^{\abs{\cI}/2}$ times the length of the shortest non-zero vector in
  $\Lambda(\vB)$ is an integer multiple of the vector $(1,\vchi_{\cS^\star},0)$,
  and the basis reduction algorithm of~\citet{lenstra1982factoring} returns such
  a non-zero vector.
\end{theorem}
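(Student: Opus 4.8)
The plan is to show that the LLL-reduced vector returned by \citet{lenstra1982factoring} must be an integer multiple of $(1,\vchi_{\cS^\star},0)$, by sandwiching $\lambda_1(\vB)$ between an upper bound coming from $\cS^\star$ itself and a lower bound coming from every lattice vector \emph{not} proportional to $(1,\vchi_{\cS^\star},0)$. First I would describe a generic nonzero lattice vector. Since $\vB$ has the identity block on top, a lattice vector has the form $\vB(z_0,\vz)^\T = (z_0, \vz, \beta(z_0 t - \sum_{i\in\cI} z_i c_i))^\T$ for integers $(z_0,\vz)\in\Z\times\Z^{\cI}$. In particular, $(z_0,\vz)=(1,\vchi^\star)$ gives the vector $\vu^\star := (1,\vchi_{\cS^\star},0)^\T$ (the last coordinate vanishes precisely because $\sum_{i\in\cS^\star} c_i = t$, which is Property 1), and $\norm{\vu^\star}_2^2 = 1 + \abs{\cS^\star}$. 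Hence $\lambda_1(\vB) \le \sqrt{\abs{\cS^\star}+1}$, and so any vector of length at most $2^{\abs{\cI}/2}\lambda_1(\vB)$ has length at most $R := 2^{\abs{\cI}/2}\sqrt{\abs{\cS^\star}+1}$.

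Next I would argue that any lattice vector $\vv = \vB(z_0,\vz)^\T$ of length $\le R$ must in fact be an integer multiple of $\vu^\star$. Suppose not. If $(z_0,\vz) = \v0$ then $\vv = \v0$, excluded; so $(z_0,\vz)$ is a nonzero integer point, and since $\norm{\vv}_2 \le R$ we certainly have $0 < z_0^2 + \sum_i z_i^2 \le R^2$, i.e.\ $(z_0,\vz)\in\cZ_R$. By hypothesis $(z_0,\vz)$ is not an integer multiple of $(1,\vchi^\star)$, so Property 2 applies and gives $\abs{z_0 t - \sum_i z_i c_i} \ge \veps$. Therefore the last coordinate of $\vv$ has absolute value $\beta\,\abs{z_0 t - \sum_i z_i c_i} \ge \beta\veps \ge 2^{\abs{\cI}/2} \ge \sqrt{\abs{\cS^\star}+1}\cdot 2^{\abs{\cI}/2}/\sqrt{\abs{\cS^\star}+1}$; more simply, $\beta\veps \ge 2^{\abs{\cI}/2} > R$ once we note $R \le 2^{\abs{\cI}/2}$ would be false in general — so I should instead compare directly: $\norm{\vv}_2 \ge \beta\veps \ge 2^{\abs{\cI}/2}/1$, and since $R = 2^{\abs{\cI}/2}\sqrt{\abs{\cS^\star}+1}$, I need $\beta\veps > R$. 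The clean way is to observe the shortest vector already satisfies $\lambda_1(\vB)\le\sqrt{\abs{\cS^\star}+1}$, so a contradiction follows if $\beta\veps > 2^{\abs{\cI}/2}\lambda_1(\vB)$; using $\lambda_1(\vB)\le\sqrt{\abs{\cS^\star}+1}$ this is implied by $\beta \ge 2^{\abs{\cI}/2}\sqrt{\abs{\cS^\star}+1}/\veps$. (If one wants the slightly weaker stated hypothesis $\beta \ge 2^{\abs{\cI}/2}/\veps$, one instead compares $\norm{\vv}_2 \ge \beta\veps \ge 2^{\abs{\cI}/2}$ against $2^{\abs{\cI}/2}\lambda_1(\vB)$ and uses that any reduced vector has length $\le 2^{\abs{\cI}/2}\lambda_1(\vB)$, concluding $\lambda_1(\vB)\ge 1$, hence the reduced vector has length $\le 2^{\abs{\cI}/2}\lambda_1(\vB)$ and we need the net bound $R$; the bookkeeping here is the one delicate point.) Either way, the conclusion is that every lattice vector of length $\le 2^{\abs{\cI}/2}\lambda_1(\vB)$ is of the form $\vv = k\vu^\star$ for some $k\in\Z$.

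Finally, the LLL algorithm of \citet{lenstra1982factoring} returns, in polynomial time, a nonzero lattice vector $\vv$ with $\norm{\vv}_2 \le 2^{(k'-1)/2}\lambda_1(\vB)$ where $k' = \abs{\cI}+1$ is the lattice rank, and $2^{(k'-1)/2} = 2^{\abs{\cI}/2}$; hence $\vv$ satisfies the length bound of the previous paragraph and is therefore an integer multiple of $(1,\vchi_{\cS^\star},0)$, as claimed. I expect the main obstacle to be purely the constant-chasing in the second step: making the inequalities $\beta\veps$ vs.\ $2^{\abs{\cI}/2}\lambda_1(\vB)$ vs.\ $R$ line up so that the stated hypothesis $\beta \ge 2^{\abs{\cI}/2}/\veps$ (rather than a slightly larger bound) genuinely suffices — everything else is immediate from the structure of $\vB$ and the two stated properties.
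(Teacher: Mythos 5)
The paper does not actually prove this theorem; it states it with a citation to \citet{lagarias1985solving} and \citet{frieze1986lagarias} and uses it as a black box, so there is no ``paper's own proof'' to compare against. Your argument supplies the missing self-contained proof, and its skeleton is the right one: (i) describe an arbitrary lattice vector as $\vB(z_0,\vz)^\T = (z_0,\vz,\beta(z_0 t - \sum_i z_i c_i))^\T$; (ii) observe that Property~1 makes $\vu^\star := (1,\vchi_{\cS^\star},0)^\T$ a lattice vector of length $\sqrt{|\cS^\star|+1}$, so $\lambda_1(\vB) \le \sqrt{|\cS^\star|+1}$ and every vector within a factor $2^{|\cI|/2}$ of $\lambda_1(\vB)$ has length at most $R$; (iii) for any $\vv$ of length at most $R$, the integer coordinate vector lands in $\cZ_R$, so Property~2 forces the last coordinate to be at least $\beta\veps$ unless $(z_0,\vz)$ is a multiple of $(1,\vchi^\star)$; (iv) invoke the LLL guarantee with lattice rank $|\cI|+1$.

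You are also right to flag the constant-chasing as the one genuine gap, and your diagnosis is correct: the hypothesis $\beta \ge 2^{|\cI|/2}/\veps$ as stated does not close the argument. After step (ii) one has $\lambda_1(\vB) = \sqrt{|\cS^\star|+1}$ (any non-multiple of $\vu^\star$ in the ball of radius $R$ already has length $\ge \beta\veps \ge 2^{|\cI|/2} \ge \sqrt{|\cS^\star|+1}$), so the threshold to beat is $2^{|\cI|/2}\lambda_1(\vB) = R = 2^{|\cI|/2}\sqrt{|\cS^\star|+1}$; step (iii) only gives $\|\vv\|_2 \ge \beta\veps \ge 2^{|\cI|/2}$, which falls short by the factor $\sqrt{|\cS^\star|+1}$ whenever $\cS^\star \ne \emptyset$. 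The parenthetical attempt to salvage the weaker bound (concluding $\lambda_1(\vB)\ge 1$ and then ``we need the net bound $R$'') does not resolve this, as you yourself seem to sense. The hypothesis that actually makes the contradiction go through is $\beta\veps > R$, i.e.\ $\beta \ge 2^{|\cI|/2}\sqrt{|\cS^\star|+1}/\veps$, which is exactly what you identify. This looks like a small typo in the theorem statement rather than a substantive error in the paper: the paper's own use of the theorem in the proof of \Cref{thm:lattice} sets $\beta \ge 2^{n^2}/\veps$ with $|\cI| = n^2$ and $|\cS^\star| = n$, and $2^{n^2} \ge 2^{n^2/2}\sqrt{n+1}$ for all $n\ge 1$, so the downstream argument is unaffected. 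With the corrected hypothesis $\beta \ge R/\veps$, your proof is complete and correct.
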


The Subset Sum instance constructed in \Cref{alg:perm} has $n^2$ source numbers
$\cbr[0]{ c_{i,j} : (i,j) \in [n] \times [n]}$ and target sum $y_0$.
We need to show that it satisfies the two conditions of \Cref{thm:subsetsum}.

Let $\cS_{\bar{\pi}} := \cbr[0]{ (i,j) : \bar{\pi}(i) = j } \subset [n] \times
[n]$, and let $\bar{\vPi} = (\bar{\Pi}_{i,j})_{(i,j)\in[n] \times [n]} \in
\cP_n$ be the permutation matrix with $\bar{\Pi}_{i,j} := \ind{\bar{\pi}(i) =
j}$ for all $(i,j) \in [n] \times [n]$.
Note that $\bar{\vPi}$ is the ``characteristic vector'' for $\cS_{\bar{\pi}}$.
Define $R := 2^{n^2/2} \sqrt{n+1}$ and
\begin{align*}
  \cZ_R
  & \ := \ \cbr[4]{
    (z_0,\vZ) \in \Z \times \Z^{n \times n} :
    0 < z_0^2 + \sum_{1 \leq i,j \leq n} Z_{i,j}^2 \leq R^2
  }
  \,.
\end{align*}
A crude bound shows that $\abs{\cZ_R} \leq 2^{O(n^4)}$.

The following \namecref{lem:correct-subset} establishes the first required
property in \Cref{thm:subsetsum}.

\begin{lemma}
  \label{lem:correct-subset}
  The random matrix $\vX$ has rank $d$ almost surely, and the subset
  $\cS_{\bar{\pi}}$
  satisfies $y_0 = \sum_{(i,j) \in \cS_{\bar{\pi}}} c_{i,j}$.
\end{lemma}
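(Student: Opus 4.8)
The plan is to establish the two claims separately: the rank statement by a standard genericity argument, and the subset-sum identity by a direct computation with the pseudoinverse.

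For the rank claim, note that the rows of $\vX$ are $n \geq d$ i.i.d.\ draws from $\Normal(\v0,\vI_d)$, so the law of $\vX$ is absolutely continuous with respect to Lebesgue measure on $\R^{n \times d}$. The set of $n \times d$ matrices of rank less than $d$ is the common zero locus of all $d \times d$ minors, hence a proper algebraic subvariety of $\R^{n \times d}$ (proper because, e.g., a matrix whose first $d$ rows form $\vI_d$ has rank $d$), and therefore has Lebesgue measure zero. Thus $\vX$ has rank $d$ almost surely, and I would carry out the rest on this almost-sure event: there $\vX$ has full column rank, so $\vX^\dag = (\vX^\T\vX)^{-1}\vX^\T$ and in particular $\vX^\dag\vX = \vI_d$.

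For the identity, I would first put the measurement model in matrix form. Write $\vr := (y_1,\dotsc,y_n)^\T \in \R^n$ for the response subvector over $[n]$. Since $\bar{\Pi}_{i,j} = \ind{\bar{\pi}(i) = j}$, the relations $y_i = \bar{\vw}^\T\vx_{\bar{\pi}(i)} = (\vX\bar{\vw})_{\bar{\pi}(i)}$ say exactly that $\vr = \bar{\vPi}\vX\bar{\vw}$, so $\bar{\vPi}^\T\vr = \vX\bar{\vw}$ because $\bar{\vPi}$ is a permutation matrix. Next, re-indexing the sum over $\cS_{\bar{\pi}}$ by $i$ (using that $(i,j) \in \cS_{\bar{\pi}}$ iff $j = \bar{\pi}(i)$) and substituting $c_{i,j} = y_i\,\tilde{\vx}_j^\T\vx_0$,
\[
  \sum_{(i,j) \in \cS_{\bar{\pi}}} c_{i,j}
  \ = \
  \Bigl( \sum_{i=1}^n y_i\,\tilde{\vx}_{\bar{\pi}(i)} \Bigr)^{\!\T} \vx_0 .
\]
The vector in parentheses equals $\vX^\dag\bar{\vPi}^\T\vr$: indeed $\tilde{\vx}_j$ is the $j$-th column of $\vX^\dag$ and $(\bar{\vPi}^\T\vr)_j = y_{\bar{\pi}^{-1}(j)}$, so $\vX^\dag\bar{\vPi}^\T\vr = \sum_j y_{\bar{\pi}^{-1}(j)}\,\tilde{\vx}_j = \sum_{i=1}^n y_i\,\tilde{\vx}_{\bar{\pi}(i)}$. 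Substituting $\bar{\vPi}^\T\vr = \vX\bar{\vw}$ and using $\vX^\dag\vX = \vI_d$ collapses this vector to $\bar{\vw}$, whence the displayed sum equals $\bar{\vw}^\T\vx_0 = y_0$, as claimed.

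I do not expect a genuine obstacle here; the computation is short. The points that need care are (i) that it is \emph{full column rank} --- guaranteed only by the rank claim, which must therefore be proved first --- that licenses $\vX^\dag\vX = \vI_d$ (this fails if $\vX$ is rank-deficient), and (ii) bookkeeping of indices so that the response subvector, the permutation matrix $\bar{\vPi}$, and the columns of $\vX^\dag$ are consistently aligned --- the key being that the $y_i$ are matched to exactly the \emph{true} permutation $\bar{\pi}$, which is what makes $\vX^\dag\bar{\vPi}^\T\bar{\vPi}\vX\bar{\vw} = \bar{\vw}$ telescope.
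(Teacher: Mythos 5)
Your proof is correct and uses essentially the same approach as the paper: both rely on $\vX$ having full column rank almost surely (so $\vX^\dag\vX = \vI_d$) and then substitute the model identity $y_i = \bar{\vw}^\T\vx_{\bar\pi(i)}$ to collapse the sum. Your version just organizes the bookkeeping in matrix--vector form with $\bar{\vPi}$, whereas the paper works directly with the scalar expansion $\vX^\dag\vX = \sum_j \tilde{\vx}_j\vx_j^\T$.
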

\begin{proof}
  That $\vX$ has rank $d$ almost surely follows from the fact that the
  probability density of $\vX$ is supported on all of $\R^{n \times d}$.
  This implies that $\vX^\dag \vX = \sum_{j=1}^n \tilde\vx_j \vx_j^\T = \vI_d$,
  and
  \begin{align*}
    y_0
    & \ = \
    \sum_{j=1}^n \vx_0^\T\tilde{\vx}_j \vx_j^\T\bar{\vw}
    \ = \
    \sum_{1 \leq i,j \leq n} \vx_0^\T\tilde{\vx}_j \cdot y_i \cdot
    \ind{\bar{\pi}(i)=j}
    \ = \
    \sum_{1 \leq i,j \leq n} c_{i,j} \cdot \ind{\bar{\pi}(i)=j}
    \,.
    \qedhere
  \end{align*}
\end{proof}

The next \namecref{lem:incorrect-coefficients} establishes the second required
property in \Cref{thm:subsetsum}.
Here, we use the fact that the Frobenius norm $\norm{z_0\bar{\vPi} - \vZ}_F$ is
at least one whenever $(z_0,\vZ) \in \Z \times \Z^{n \times n}$ is not an
integer multiple of $(1,\bar{\vPi})$.

\begin{lemma}
  \label{lem:incorrect-coefficients}
  Pick any $\eta, \eta' > 0$ such that $3\abs{\cZ_R}\eta + \eta' < 1$.
  With probability at least $1-3\abs{\cZ_R}\eta - \eta'$, every $(z_0,\vZ) \in
  \cZ_R$ with $\vZ = (Z_{i,j})_{(i,j) \in [n] \times [n]}$ satisfies
  \begin{align*}
    \abs[4]{z_0 \cdot y_0 - \sum_{i,j} Z_{i,j} \cdot c_{i,j}}
    & \ \geq \
    \frac{
      \displaystyle
      (\pi/4) \cdot \sqrt{(d-1)/n} \cdot \eta^{2+\frac1{d-1}}
    }{
      \del{\sqrt{n} + \sqrt{d} + \sqrt{2\ln(1/\eta')}}^2
    }
    \cdot \norm{z_0\bar{\vPi} - \vZ}_F \cdot \norm{\bar{\vw}}_2
    \,.
  \end{align*}
\end{lemma}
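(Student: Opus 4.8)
The plan is to rewrite $z_0 y_0 - \sum_{i,j} Z_{i,j} c_{i,j}$ as a linear form in the independent Gaussian vector $\vx_0$, apply Gaussian anti-concentration conditionally on $\vX = [\vx_1 \mid \dots \mid \vx_n]^\T$, and then control the resulting ``variance'' $\|\vp\|_2^2$ using the residual randomness in $\vX$. Put $\vM := z_0\bar{\vPi} - \vZ \in \Z^{n \times n}$ and $\vy' := (y_1,\dots,y_n)^\T$. If $\bar{\vw} = \v0$ or $\vZ = z_0\bar{\vPi}$ the claimed inequality reads $0 \ge 0$ (the latter by \Cref{lem:correct-subset}), so assume $\bar{\vw} \ne \v0$, whence $\|\vM\|_F \ge 1$. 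Using \Cref{lem:correct-subset} to write $z_0 y_0 = \sum_{i,j} z_0 \bar{\Pi}_{i,j} c_{i,j}$, and then $c_{i,j} = y_i \tilde{\vx}_j^\T \vx_0$, $\vy' = \bar{\vPi}\vX\bar{\vw}$, and $\vX^\dag = [\tilde{\vx}_1 \mid \dots \mid \tilde{\vx}_n]$,
\[
  z_0 y_0 - \sum_{i,j} Z_{i,j} c_{i,j}
  \;=\; \sum_{i,j} M_{i,j}\, y_i\, \tilde{\vx}_j^\T \vx_0
  \;=\; \bigl(\vX^\dag \vM^\T \vy'\bigr)^\T \vx_0
  \;=\; \vp^\T \vx_0,
  \qquad \vp := \vX^\dag \vM^\T \bar{\vPi} \vX \bar{\vw} .
\]
As $\vx_0 \sim \Normal(\v0,\vI_d)$ is independent of $\vX$, conditionally on $\vX$ this quantity is $\Normal(0,\|\vp\|_2^2)$, so Gaussian anti-concentration gives $\Pr[\,|\vp^\T\vx_0| < t \mid \vX\,] \le \sqrt{2/\pi}\, t / \|\vp\|_2$. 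Everything reduces to a high-probability lower bound on $\|\vp\|_2$.

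Since $\vX$ has full column rank a.s.\ (\Cref{lem:correct-subset}), $\vp = (\vX^\T\vX)^{-1} \vG \bar{\vw}$ with $\vG := \vX^\T\vM^\T\bar{\vPi}\vX$, so $\|\vp\|_2 \ge \|\vG\bar{\vw}\|_2/\sigma_{\max}(\vX)^2$; on $E_0 := \{\sigma_{\max}(\vX) \le \rho\}$ with $\rho := \sqrt n + \sqrt d + \sqrt{2\ln(1/\eta')}$ --- which has probability $\ge 1-\eta'$ by the standard Gaussian operator-norm tail bound --- this yields the denominator of the claimed bound. Next pick coordinates with $\bar{\vw} = \|\bar{\vw}\|_2\, \ve_1$ and split $\vX = [\vu \mid \vX']$ into its first column $\vu$ and its remaining $d-1$ columns $\vX'$, so that $\vX\bar{\vw} = \|\bar{\vw}\|_2\, \vu$, hence $\vG\bar{\vw} = \|\bar{\vw}\|_2\, \vX^\T(\vM^\T\bar{\vPi}\vu)$ and, dropping the first entry, $\|\vG\bar{\vw}\|_2 \ge \|\bar{\vw}\|_2\, \|(\vX')^\T\vM^\T\bar{\vPi}\vu\|_2$. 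Since $\vu$ and $\vX'$ are independent standard Gaussians, I would bound the last factor in two stages: (i) $\bar{\vPi}\vu \sim \Normal(\v0,\vI_n)$, so $\|\vM^\T\bar{\vPi}\vu\|_2 \ge \sigma_{\max}(\vM)\, |\va^\T\bar{\vPi}\vu|$ for a unit leading left-singular vector $\va$ of $\vM$, and since $\va^\T\bar{\vPi}\vu \sim \Normal(0,1)$ and $\sigma_{\max}(\vM) \ge \|\vM\|_F/\sqrt n$, anti-concentration gives $\|\vM^\T\bar{\vPi}\vu\|_2 \ge \eta\sqrt{\pi/2}\, \|\vM\|_F/\sqrt n$ with probability $\ge 1-\eta$; (ii) conditionally on $\vu$, the vector $(\vX')^\T(\vM^\T\bar{\vPi}\vu)$ is $\Normal(\v0, \|\vM^\T\bar{\vPi}\vu\|_2^2\, \vI_{d-1})$, so $\|(\vX')^\T\vM^\T\bar{\vPi}\vu\|_2^2 / \|\vM^\T\bar{\vPi}\vu\|_2^2 \sim \chi^2_{d-1}$, and the lower-tail bound $\Pr[\chi^2_{d-1} \le s] \le (es/(d-1))^{(d-1)/2}$ (cf.\ \Cref{lem:chi2-conc}) gives $\|(\vX')^\T\vM^\T\bar{\vPi}\vu\|_2 \ge \sqrt{(d-1)/e}\; \eta^{1/(d-1)}\, \|\vM^\T\bar{\vPi}\vu\|_2$ with probability $\ge 1-\eta$. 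Chaining (i) and (ii), off an event of probability $\le 2\eta$ and on $E_0$, we obtain $\|\vp\|_2 \ge L := \rho^{-2}\sqrt{\pi/(2e)}\, \sqrt{(d-1)/n}\; \eta^{1+1/(d-1)}\, \|\bar{\vw}\|_2\|\vM\|_F$.

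To assemble, let $t_\star$ denote the right-hand side of the claimed inequality; one checks $t_\star / L = (\sqrt{2\pi e}/4)\,\eta$, so on $\{\|\vp\|_2 \ge L\}$ the anti-concentration estimate gives $\Pr[\,|\vp^\T\vx_0| < t_\star \mid \vX\,] \le \sqrt{2/\pi}\cdot(\sqrt{2\pi e}/4)\,\eta = (\sqrt e/2)\,\eta \le \eta$ (using $e < 4$). Thus for each fixed $(z_0,\vZ) \in \cZ_R$ the probability that the claimed inequality fails is at most $\Pr[\overline{E_0}] + 2\eta + \eta$, the middle term from stages (i)--(ii) and the last from anti-concentration on the good event. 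Since $\overline{E_0}$ is common to all $(z_0,\vZ)$, pulling it out of the union over $\cZ_R$ and union-bounding the remaining $3\eta$ over the $|\cZ_R|$ choices gives total failure probability at most $\Pr[\overline{E_0}] + 3|\cZ_R|\eta \le \eta' + 3|\cZ_R|\eta$, as required.

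The step I expect to be the crux is the lower bound on $\|\vp\|_2$: $\vM$ ranges over arbitrary integer $n\times n$ matrices, possibly nearly rank-deficient or adversarially aligned with the \emph{fixed} direction $\bar{\vw}$, so no $\sigma_{\min}$-type inequality is available. The saving facts are that $\vX$ occurs twice in $\vp$ and that $\bar{\vPi}\vu$ is a \emph{fresh} isotropic Gaussian independent of $\vX'$: the former lets $\|\vM^\T\bar{\vPi}\vu\|_2$ recover the full Frobenius norm of $\vM$ up to the unavoidable $1/\sqrt n$ loss ($\sigma_{\max}(\vM) \ge \|\vM\|_F/\sqrt n$) and one anti-concentration factor, and the latter lets the $(d-1)$-dimensional Gaussian $\vX'$ disperse that mass at a cost of only $\eta^{1/(d-1)}$ from the $\chi^2_{d-1}$ lower tail --- together these account for the $\eta^{2+1/(d-1)}$ and $\sqrt{(d-1)/n}$ factors in the statement. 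The remaining ingredients (the algebraic identity, Gaussian anti-concentration, and the operator-norm and $\chi^2$ tail bounds) are routine.
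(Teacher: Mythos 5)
Your proof is correct and follows essentially the same route as the paper: the identical algebraic identity $z_0y_0 - \sum Z_{i,j}c_{i,j} = \vx_0^\T\vp$ with $\vp = \vX^\dag\vM^\T\bar\vPi\vX\bar\vw$, Gaussian anti-concentration conditionally on $\vX$, the operator-norm upper bound on $\|\vX\|_2^2$, and a rotate-and-split lower bound on $\|\vX^\T\vM^\T\bar\vPi\vX\bar\vw\|_2$ via one-dimensional anti-concentration plus a $\chi^2_{d-1}$ lower tail --- the paper packages this last step as a standalone lemma (\Cref{lem:gaussian-quadratic}), but the internal argument there is precisely the one you give inline. A minor detail: the $\chi^2$ lower-tail estimate you invoke is \Cref{lem:chi2-anticonc} rather than \Cref{lem:chi2-conc}; you use a slightly sharper constant ($e$ in place of the paper's $4$), which is valid and explains why your final anti-concentration step has a small slack factor $\sqrt e/2<1$, whereas the paper's constants chain through with equality.
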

\begin{proof}
  By \Cref{lem:correct-subset}, the matrix $\bar{\vPi}$ satisfies
  $y_0 = \sum_{i,j} \bar{\Pi}_{i,j} \cdot c_{i,j}$.
  Fix any $(z_0,\vZ) \in \cZ_R$ with $\vZ = (Z_{i,j})_{(i,j) \in [n] \times
  [n]}$.
  Then
  \begin{align*}
    z_0 \cdot y_0 - \sum_{i,j} Z_{i,j} \cdot c_{i,j}
    & \ = \
    \sum_{i,j} (z_0 \cdot \bar{\Pi}_{i,j} - Z_{i,j})
    \cdot \vx_0^\T\tilde{\vx}_j
    \cdot \bar{\vw}^\T\vx_{\bar{\pi}(i)}
    \,.
  \end{align*}
  Using matrix and vector notations, this can be written compactly as the inner
  product $\vx_0^\T(\vX^\dag (z_0\bar{\vPi} - \vZ)^\T \bar{\vPi} \vX \bar{\vw} )$.
  Since $\vx_0 \sim \Normal(\v0,\vI_d)$ and is independent of $\vX$, the
  distribution of the inner product is normal with mean zero and standard
  deviation equal to $\norm[0]{\vX^\dag (z_0\bar{\vPi} - \vZ)^\T \bar{\vPi} \vX
  \bar{\vw}}_2$.
  By \Cref{lem:gaussian-anticonc} (in \Cref{sec:prob}), with probability at
  least $1-\eta$,
  \begin{align}
    \abs[1]{\vx_0^\T \del[1]{ \vX^\dag (z_0\bar{\vPi} - \vZ)^\T \bar{\vPi} \vX \bar{\vw} }}
    & \ \geq \
    \norm[0]{ \vX^\dag (z_0\bar{\vPi} - \vZ)^\T \bar{\vPi} \vX \bar{\vw} }_2
    \cdot \sqrt{\frac{\pi}{2}} \cdot \eta
    \,.
    \label{eq:ip}
  \end{align}
  Observe that $\vX^\dag = (\vX^\T\vX)^{-1} \vX^\T$ since $\vX$ has rank $d$ by
  \Cref{lem:correct-subset}, so
  \begin{align}
    \norm[0]{\vX^\dag (z_0\bar{\vPi} - \vZ)^\T \bar{\vPi} \vX \bar{\vw}}_2
    & \ \geq \
    \frac
    {\norm[0]{\vX^\T (z_0\bar{\vPi} - \vZ)^\T \bar{\vPi} \vX \bar{\vw}}_2}
    {\norm[0]{\vX}_2^2}
    \,.
    \label{eq:ratio}
  \end{align}
  By \Cref{thm:gaussian-largest-singular-values} (in \Cref{sec:prob}), with
  probability at least $1-\eta'$,
  \begin{align}
    \norm[0]{\vX}_2^2
    & \ \leq \ \del{\sqrt{n} + \sqrt{d} + \sqrt{2\ln(1/\eta')}}^2
    \,.
    \label{eq:denominator}
  \end{align}
  And by \Cref{lem:gaussian-quadratic} (in \Cref{sec:prob}), with probability at
  least $1-2\eta$,
  \begin{align}
    \norm[0]{\vX^\T (z_0\bar{\vPi} - \vZ)^\T \bar{\vPi} \vX \bar{\vw}}_2
    & \ \geq \
    \norm{(z_0\bar{\vPi} - \vZ)^\T \bar{\vPi}}_F
    \cdot \norm{\bar{\vw}}_2
    \cdot \sqrt{\frac{(d-1)\pi}{8n}}
    \cdot \eta^{1+1/(d-1)}
    \,.
    \label{eq:numerator}
  \end{align}
  Since $\bar{\vPi}$ is orthogonal, we have that $\norm[0]{(z_0\bar{\vPi} -
  \vZ)^\T \bar{\vPi}}_F
  = \norm[0]{z_0\bar{\vPi} - \vZ}_F$.
  Combining this with \eqref{eq:ip}, \eqref{eq:ratio}, \eqref{eq:denominator}, and
  \eqref{eq:numerator}, and union bounds over all $(z_0,\vZ) \in \cZ_R$ proves
  the claim.
\end{proof}

\begin{proof}[Proof of \Cref{thm:lattice}]
  \Cref{lem:correct-subset} and \Cref{lem:incorrect-coefficients} (with $\eta' :=
  \delta/2$ and $\eta := \delta/(6\abs{\cZ_R})$) together imply that with
  probability at least $1-\delta$,
  the source numbers $\cbr[0]{ c_{i,j} : (i,j) \in [n] \times [n] }$ and
  target sum $y_0$ satisfy the conditions of \Cref{thm:subsetsum} with
  \begin{align}
    \cS^\star
    & \ := \
    \cbr[0]{ (i,j) \in [n] \times [n] : \bar{\pi}(i) = j }
    \,,
    \nonumber
    \\
    \veps
    & \ := \
    \frac{
      \displaystyle
      (\pi/4) \cdot \sqrt{(d-1)/n}
      \cdot (\delta/(6\abs{\cZ_R}))^{2+\frac1{d-1}}
    }{
      \del{\sqrt{n} + \sqrt{d} + \sqrt{2\ln(2/\delta)}}^2
    }
    \cdot \norm{\bar{\vw}}_2
    \ \geq \
    2^{\displaystyle-\poly(n,\log(1/\delta))}
    \cdot \norm{\bar{\vw}}_2
    \,.
    \label{eq:veps}
  \end{align}
  Thus, in this event, \Cref{alg:subsetsum} (with $\beta$ satisfying $\beta \geq
  2^{n^2/2}/\veps$) returns $\hat{\cS} = \cS^\star$, which uniquely determines
  the permutation $\hat{\pi} = \bar{\pi}$ returned by \Cref{alg:perm}.
\end{proof}

\paragraph{Running time.}

The basis reduction algorithm of \citet{lenstra1982factoring} is iterative, with
each iteration primarily consisting of Gram-Schmidt orthogonalization and
another efficient linear algebraic process called ``size reduction''.
The total number of iterations required is
\begin{equation*}
  O\del{
    \frac{k(k+1)}{2}
    \log\del{
      \sqrt{k}
      \cdot
      \frac
      {\max_{i \in [k]} \norm{\vb_i}_2}
      {\lambda_1(\vB)}
    }
  }
  \,.
\end{equation*}
In our case, $k = n^2$ and $\lambda_1(\vB) = \sqrt{n+1}$; and by
\Cref{lem:lattice-size} (in \Cref{sec:prob}), each of the basis vectors
constructed has squared length at most $1 + \beta^2 \cdot
\poly(d,\log(n),1/\delta) \cdot \norm{\bar{\vw}}_2^2$.
Using the tight setting of $\beta$ required in \Cref{thm:lattice}, this gives a
$\poly(n,d,\log(1/\delta))$ bound on the total number of iterations as well as
on the total running time.

However, the basis reduction algorithm requires both arithmetic and rounding
operations, which are typically only available for finite precision rational
inputs.
Therefore, a formal running time analysis would require the idealized
real-valued covariate vectors $(\vx_i)_{i=0}^n$ and unknown target vector
$\bar{\vw}$ to be quantized to finite precision values.
This is doable, and is similar to using a discretized Gaussian distribution for
the distribution of the covariate vectors (and assuming $\bar{\vw}$ is a vector
of finite precision values), but leads to a messier analysis incomparable to the
setup of previous works.
Nevertheless, it would be desirable to find a different algorithm that avoids
lattice basis reduction that still works with just $d+1$ measurements.

\section{Lower bounds on signal-to-noise for approximate recovery}
\label{sec:lower}

In this section, we consider the measurement model from~\eqref{eq:measurements}
where $(\vx_i)_{i=1}^n$ are i.i.d.~draws from either $\Normal(\v0,\vI_d)$ or the
uniform distribution on $[-1/2,1/2]^d$, and $(\veps_i)_{i=1}^n$ are i.i.d.~draws from
$\Normal(0,\sigma^2)$.
We establish lower bounds on the signal-to-noise ratio ($\snr$),
\begin{equation*}
  \snr \ = \ \frac{\norm{\bar{\vw}}_2^2}{\sigma^2} \,,
\end{equation*}
required by any estimator $\hat{\vw} = \hat{\vw}((\vx_i)_{i=1}^n,(y_i)_{i=1}^n)$
for $\bar{\vw}$ to approximately recover $\bar{\vw}$ in expectation.
The estimators may have \emph{a priori} knowledge of the values of
$\norm{\bar{\vw}}_2$ and $\sigma^2$.

\begin{theorem}
  \label{thm:lb}
  Assume $(\veps_i)_{i=1}^n$ are i.i.d.~draws from $\Normal(0,\sigma^2)$.
  \begin{enumerate}[leftmargin=2em]
    \item
      There is an absolute constant $C>0$ such that the following holds.
      If $n\geq3$, $d\geq22$, $(\vx_i)_{i=1}^n$ are i.i.d.~draws from
      $\Normal(\v0,\vI_d)$, $(y_i)_{i=1}^n$ follow the measurement model
      from~\eqref{eq:measurements}, and
      \begin{equation*}
        \snr \ \leq \ C \cdot \min\cbr{ \frac{d}{\log\log(n)} ,\, 1 } \,,
      \end{equation*}
      then for any estimator $\hat{\vw}$, there exists some $\bar{\vw} \in \R^d$
      such that
      \begin{equation*}
        \E\sbr{ \norm{\hat{\vw} - \bar{\vw}}_2 }
        \ \geq \
        \frac1{24}
        \norm{\bar{\vw}}_2
        \,.
      \end{equation*}

    \item
      If $(\vx_i)_{i=1}^n$ are i.i.d.~draws from the uniform distribution on
      $[-1/2,1/2]^d$, and $(y_i)_{i=1}^n$ follow the measurement model
      from~\eqref{eq:measurements}, and
      \begin{equation*}
        \snr \ \leq \ 2 \,,
      \end{equation*}
      then for any estimator $\hat{\vw}$, there exists some $\bar{\vw} \in \R^d$
      such that
      \begin{equation*}
        \E\sbr{ \norm{\hat{\vw} - \bar{\vw}}_2 }
        \ \geq \
        \frac12 \del{ 1 - \frac1{\sqrt2} }
        \norm{\bar{\vw}}_2
        \,.
      \end{equation*}

  \end{enumerate}
\end{theorem}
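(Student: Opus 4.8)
The plan is to use a standard information-theoretic (Fano- or Le Cam-style) reduction: exhibit a small family of candidate vectors $\bar{\vw}$ that are well-separated in $\ell_2$ distance but induce distributions on the observed data $((\vx_i)_{i=1}^n,(y_i)_{i=1}^n)$ that are statistically close, so that no estimator can reliably distinguish them. Since the correspondence $\bar{\pi}$ is a nuisance parameter, the key leverage is that permuting the responses can ``hide'' a lot of information about $\bar{\vw}$. Concretely, for a fixed $\bar{\vw}$, the observed responses $(y_i)_{i=1}^n$ are (up to the unknown permutation, which washes out order) an unordered collection of $n$ i.i.d.\ samples from $\Normal(0, \norm{\bar{\vw}}_2^2 + \sigma^2)$ when the $\vx_i$ are Gaussian --- note crucially that the response distribution depends on $\bar{\vw}$ \emph{only through} $\norm{\bar{\vw}}_2$. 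So if we restrict the candidate family to vectors of a common norm, the marginal law of the unordered responses is \emph{identical} across the family; the only remaining information about the direction of $\bar{\vw}$ comes from the joint law of $(\vx_i)_{i=1}^n$ together with $(y_i)_{i=1}^n$, i.e.\ from the coupling induced by $\bar{\pi}$.

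First I would set up the two-point or multi-point family. For part~1 (Gaussian covariates), take $\bar{\vw}$ uniform over a packing of the sphere of radius $\sqrt{\snr}\,\sigma$ in $\R^d$ (this is where $d\geq 22$ and the $d/\log\log n$ scaling enter: a larger $d$ gives an exponentially large packing, which is what lets $\snr$ grow with $d$). For a fixed realization of the unordered response multiset, the posterior over which $\bar{\vw}$ generated it is governed by how well the linear system $y_{\sigma(i)} \approx \bar{\vw}^\T\vx_i$ can be satisfied over permutations $\sigma$; the point is that when $\snr$ is small, $O(1)$ or smaller, a random permutation already fits the noisy data about as well as the true one, so the data are nearly useless. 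I would make this quantitative by bounding a KL or total-variation distance between the joint data distributions $P_{\bar{\vw}}$ and $P_{\bar{\vw}'}$ --- likely by conditioning on $(\vx_i)_{i=1}^n$, writing the conditional response law as a mixture over $n!$ permutations of product Gaussians, and showing this mixture is insensitive to the direction of $\bar{\vw}$ when $\snr \lesssim 1$. The $\log\log n$ factor presumably comes from the fact that among $n$ Gaussian samples the maximum is of order $\sqrt{\log n}$, so a directional perturbation of size $\snr$ can be detected only when $\snr \gtrsim$ something like $1/\log\log n$ relative to $d$; tracking this dependence carefully is the delicate part. Once $\tv(P_{\bar{\vw}}, P_{\bar{\vw}'})$ (or the averaged version over the family) is bounded below $1$, Le Cam / Fano gives that some $\bar{\vw}$ in the family has $\E\norm{\hat{\vw}-\bar{\vw}}_2 \geq \frac{1}{24}\norm{\bar{\vw}}_2$.

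For part~2 (uniform covariates on $[-1/2,1/2]^d$), the argument should be cleaner and dimension-free: here I would use a two-point family, e.g.\ $\bar{\vw}$ versus $-\bar{\vw}$ (or versus a sign-flip on one coordinate), exploiting that the covariate distribution is symmetric about $\v0$ and bounded, so that when $\snr \leq 2$ the noise $\Normal(0,\sigma^2)$ with $\sigma^2 \geq \norm{\bar{\vw}}_2^2/2$ is large enough relative to the bounded signal $\bar{\vw}^\T\vx_{\bar{\pi}(i)} \in [-\frac12\norm{\bar{\vw}}_1, \frac12\norm{\bar{\vw}}_1]$ to make $P_{\bar{\vw}}$ and $P_{-\bar{\vw}}$ close in total variation; then Le Cam's two-point method yields the stated constant $\frac12(1-1/\sqrt2)$ (the $1/\sqrt2$ is the signature of a two-point Le Cam bound with $\tv \le 1 - 1/\sqrt{2}$-type slack). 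The main obstacle throughout is controlling the divergence between the \emph{permutation-mixed} response distributions: one cannot simply apply data-processing to the ordered data, since the permutation is part of the model, so I expect to need a careful symmetrization or a direct second-moment/$\chi^2$ computation on the mixture over $\cP_n$, and in part~1 to combine this with a sphere-packing count to extract the $\min\{d/\log\log n,\,1\}$ threshold.
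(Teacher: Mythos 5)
Your high-level framework (generalized Fano with a sphere packing for part~1, Le~Cam two-point for part~2, and the observation that the unordered response multiset is marginally the same across all $\bar{\vw}$ of a given norm) matches the paper. But you miss the central technical device, and you explicitly assert something that is the opposite of what makes the paper's argument work. You write that ``one cannot simply apply data-processing to the ordered data, since the permutation is part of the model,'' and therefore reach for a direct $\chi^2$ or second-moment computation on the mixture over all $n!$ permutations. The paper does exactly the thing you rule out. The trick is to fix a \emph{canonical} ordering: because the observable is the unordered multiset $\bag{y_i}$, the model $y_i = \bar{\vw}^\T\vx_{\bar{\pi}(i)} + \veps_i$ is distributionally equivalent to the generative process in which one first sorts the clean signals, setting $\vh_{\bar{\vw}}^\uparrow := ((\bar{\vw}^\T\vx_i)_{i})^\uparrow$, then adds fresh Gaussian noise, and finally forgets the order. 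Under this reparameterization, the conditional law $\tilde Q_{\bar{\vw}}$ of the ordered vector $\vy = \vh_{\bar{\vw}}^\uparrow + \vveps$ given the covariates is a plain Gaussian with mean $\vh_{\bar{\vw}}^\uparrow$, and the unordered multiset is a deterministic function of $\vy$, so data processing gives
\begin{equation*}
\KL(Q_{\vu},Q_{\vu'}) \ \leq \ \KL(\tilde Q_{\vu},\tilde Q_{\vu'}) \ = \ \frac{1}{2\sigma^2}\norm{\vh_{\vu}^\uparrow - \vh_{\vu'}^\uparrow}_2^2 .
\end{equation*}
This reduces the entire KL control to bounding the $\ell_2$ distance between the \emph{sorted} versions of $\vX\vu$ and $\vX\vu'$, which (via \Cref{lem:w2-rep}) is $n$ times the squared Wasserstein-2 distance between the two empirical measures of $n$ standard normals. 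The paper then invokes the Bobkov--Ledoux bound $\E W_2(\mu_n,\mu)^2 \leq C\log\log(n)/n$ for the standard Gaussian, plus Gaussian concentration of the ($2$-Lipschitz) map $\vX \mapsto \norm{(\vX\vu)^\uparrow - (\vX\vu')^\uparrow}_2$, and conditions the Fano bound on the high-probability event that this distance is small for all pairs in the packing. Your intuition that the $\log\log n$ comes from the size of the Gaussian maximum is in the right neighborhood, but the actual source is this Wasserstein-2 rate, not a direct order-statistic calculation.

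Without the sorted-signal reparameterization, your proposed route through the full $n!$-component mixture has no clear path to the stated rates: a $\chi^2$ or mixture computation over $\cP_n$ at this scale is genuinely hard, and it is not apparent how the $\log\log n$ factor or the clean constants would emerge. The same device is what drives part~2: there the two points are $\pm\ve_1$, the relevant quantity is $\norm{\vh_{\ve_1}^\uparrow - \vh_{-\ve_1}^\uparrow}_2^2 = \sum_i (U_{(i)} + U_{(n+1-i)})^2$ for uniform order statistics $U_{(i)}$, and a short moment calculation (\Cref{lem:uniform-order,lem:w2-uniform-emp}) shows this is at most $1$ with probability at least $1/2$, after which Pinsker and Le~Cam finish. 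So the overall skeleton of your proposal is fine, but the load-bearing idea --- canonical sorting plus data processing plus Wasserstein-2 concentration --- is missing, and the claim that data processing is unavailable would have led you away from it.
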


Note that in the classical linear regression model where $y_i =
\bar{\vw}^\T\vx_i + \veps_i$ for $i \in [n]$, the maximum likelihood estimator
$\hat{\vw}_{\mle}$ satisfies $\E\norm{\hat{\vw}_{\mle} - \bar{\vw}}_2 \leq C\sigma
\sqrt{d/n}$, where $C>0$ is an absolute constant.
Therefore, the $\snr$ requirement to approximately recover $\bar{\vw}$ up
to (say) Euclidean distance $\norm{\bar{\vw}}_2/24$ is $\snr \geq 24^2C d/n$.
Compared to this setting, \Cref{thm:lb} implies that with the measurement model
of~\eqref{eq:measurements}, the $\snr$ requirement (as a function of $n$) is at
substantially higher ($d/\log\log(n)$ in the normal covariate case, or a
constant not even decreasing with $n$ in the uniform covariate case).

For the normal covariate case, \citet{pananjady2016linear} show that if $n > d$,
$\epsilon < \sqrt{n}$, and
\begin{equation*}
  \snr \ \geq \ n^{c \cdot \frac{n}{n-d} + \epsilon} \,,
\end{equation*}
then the maximum likelihood estimator $(\hat{\vw}_{\mle},\hat{\pi}_{\mle})$
(i.e., any minimizer of~\eqref{eq:mle}) satisfies $\hat{\pi}_{\mle} = \bar{\pi}$
with probability at least $1 - c' n^{-2\epsilon}$.
(Here, $c>0$ and $c'>0$ are absolute constants.)
It is straightforward to see that, on the same event, we have
$\norm{\hat{\vw}_{\mle} - \bar{\vw}}_2 \leq C\sigma\sqrt{d/n}$ for some absolute
constant $C>0$.
Therefore, the necessary and sufficient conditions on $\snr$ for approximate
recovery of $\bar{\vw}$ lie between $C' d/\log\log(n)$ and $n^{C''}$ (for
absolute constants $C', C'' > 0$).
Narrowing this range remains an interesting open problem.

A sketch of the proof in the normal covariate case is as follows.
Without loss of generality, we restrict attention to the case where $\bar{\vw}$
is a unit vector.
We construct a $1/\sqrt{2}$-packing of the unit sphere in $\R^d$; the target
$\bar{\vw}$ will be chosen from from this set.
Observe that for any distinct $\vu, \vu' \in U$, each of $(\vx_i^\T\vu)_{i=1}^n$
and $(\vx_i^\T\vu')_{i=1}^n$ is an i.i.d.~sample from $\Normal(0,1)$ of size
$n$; we prove that they therefore determine empirical distributions that are
close to each other in Wasserstein-2 distance with high probability.
We then prove that conditional on this event, the resulting distributions of
$(y_i)_{i=1}^n$ under $\bar{\vx} = \vu$ and $\bar{\vx} = \vu'$ (for any pair
$\vu, \vu' \in U$) are close in Kullback-Leibler divergence.
Hence, by (a generalization of) Fano's inequality~\citep[see,
e.g.,][]{verdu1994generalizing}, no estimator can determine the correct $\vu \in
U$ with high probability.

The proof for the uniform case is similar, using $U = \cbr{ \ve_1, -\ve_1 }$
where $\ve_1 = (1,0,\dotsc,0)^\T$.
The full proof of \Cref{thm:lb} is given in \Cref{sec:lower-proof}.

\section*{Acknowledgments}

We are grateful to Ashwin Pananjady, Micha{\l} Derezi{\'n}ski, and Manfred Warmuth for helpful discussions.
DH was supported in part by NSF awards DMR-1534910 and IIS-1563785, a Bloomberg Data Science Research Grant, and a Sloan Research Fellowship.
XS was supported in part by a grant from the Simons Foundation (\#320173 to Xiaorui Sun).
This work was done in part while DH and KS were research visitors and XS was a
research fellow at the Simons Institute for the Theory of Computing.

\bibliographystyle{plainnat}
\bibliography{main}

\newpage

\appendix

\section{Strong NP-hardness of the least squares problem}
\label{sec:npc}

For a vector $\vb = (b_1,b_2,\dotsc,b_n)$ and a permutation $\pi$ on $[n]$, let $\vb_\pi := (b_{\pi(1)},b_{\pi(2)},\dotsc,b_{\pi(n)})^\T$.

Recall that in the \TP problem, the input is $d = 3k$ integers $z_1,z_2,\dotsc,z_d \in \bbZ$ that sum to $Ck$ and satisfy $C/4 < z_i < C/2$ for all $i \in [d]$, and the problem is to decide if there is a partition of $[d]$ into $k$ subsets $S_1,S_2,\dotsc,S_k \subseteq [d]$ such that $|S_j| = 3$ and $\sum_{i \in S_j} z_i = C$ for each $j \in [k]$. \TP is NP-complete in the strong sense of~\citep[Section 4.2.2]{garey1979computers}.

The \PLS problem (also considered by \citet{pananjady2016linear}) is defined as follows. The input is a matrix $\vA \in \bbZ^{n \times d}$, and a vector $\vb \in \bbQ^n$. The problem is to decide if there exist a vector $\vx \in \bbQ^d$ and a permutation $\pi$ on $[n]$ such that $\vA\vx = \vb_\pi$.

\begin{proposition} \label{prop:pls-npc}
  \PLS is strongly NP-complete.
\end{proposition}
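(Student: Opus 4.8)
The plan is to reduce from \TP, following the general strategy that \citet{pananjady2016linear} used to reduce from \PARTITION, but engineering the instance so that the strong NP-completeness of \TP is preserved (i.e., the reduction is polynomial in the \emph{unary} encoding of the $z_i$, which is automatic here since the numbers $z_i$ are polynomially bounded in $d$ for \TP instances). Membership in NP is immediate: a certificate is the pair $(\vx,\pi)$, and checking $\vA\vx = \vb_\pi$ is a polynomial-time rational linear-algebra computation. So the work is entirely in the hardness direction.

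Given a \TP instance $z_1,\dotsc,z_d$ (with $d = 3k$, $\sum_i z_i = Ck$, and $C/4 < z_i < C/2$), I would build a \PLS instance whose solvability is equivalent to the existence of a valid $3$-partition. The natural encoding is to have $d$ ``variable'' coordinates $x_1,\dotsc,x_d \in \bbQ$ and to force, via a block-structured matrix $\vA$ and right-hand side $\vb$, the following to hold simultaneously: (i) each $x_i \in \{0,1\}$ (or, more precisely, each $x_i$ takes one of two values indexing which ``side'' coordinate $i$ lands on); and (ii) for each of the $k$ target sums, the coordinates assigned to it sum in $z$-weighted fashion to exactly $C$, while also exactly $3$ coordinates are assigned to it. The permutation $\pi$ is what lets us ``choose'' the assignment: rows of $\vA$ will be set up so that matching a row of $\vA\vx$ against the right multiset entry of $\vb$ is possible precisely when the $0/1$ pattern of $\vx$ describes a legal partition. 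A clean way to enforce the cardinality-$3$ constraint together with the sum-$C$ constraint is to use the defining inequality $C/4 < z_i < C/2$: any sub-multiset of the $z_i$ summing to exactly $C$ must have size exactly $3$, so the cardinality constraint comes for free once the sum constraint is enforced, exactly as in the standard reduction from \TP to bin-packing-type problems. This is the key structural observation I would lean on.

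Concretely, I expect the matrix to have a gadget per index $i \in [d]$ (a pair of rows and a ``selector'' column enforcing $x_i \in \{0,1\}$ up to scaling — e.g. rows encoding $x_i + s_i = 1$ and $x_i - s_i' = 0$ won't literally work with a permutation, so instead one uses the trick of duplicating entries in $\vb$ so that the only consistent matching forces $x_i$ to a binary value), plus a gadget per target block $j \in [k]$ encoding $\sum_{i} z_i \cdot [\text{$i$ assigned to block $j$}] = C$. The permutation ranges over rows; the entries of $\vb$ are chosen with enough numerical separation (polynomially many distinct rational values, all of bit-length $\poly(d)$) that a valid matching exists iff the intended combinatorial structure does. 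One then argues both directions: a valid $3$-partition yields an explicit $(\vx,\pi)$; conversely, any $(\vx,\pi)$ solving the linear system, when read off through the gadgets, yields integer-valued $0/1$ assignments respecting all block-sum constraints, hence (by the $C/4 < z_i < C/2$ observation) a legal $3$-partition.

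The main obstacle will be getting the gadgets to interact correctly: a permutation acts globally on all $n$ rows at once, so the ``local'' constraints (one per $i$, one per $j$) must be decoupled by the choice of entries in $\vb$ — I will need to ensure that the only row-to-value matchings consistent with $\vA\vx = \vb_\pi$ are block-diagonal, i.e. the permutation cannot ``cheat'' by matching a row of one gadget against a value meant for another. This is the standard but fiddly part of such reductions; I would handle it by assigning each gadget a distinct ``tag'' value (a large distinct integer summand appearing in both the relevant rows of $\vA$ and the relevant entries of $\vb$) so that cross-gadget matchings are numerically impossible, while keeping all tags of bit-length $\poly(d)$ so the reduction remains polynomial and the strong NP-completeness transfers. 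Once the decoupling is in place, each gadget's analysis is an elementary finite case check.
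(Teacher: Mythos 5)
You correctly identify the source problem (\TP), correctly note that \PLS is in NP, and correctly spot the key numerical fact that $C/4 < z_i < C/2$ makes the cardinality-$3$ constraint automatic once the sum-$C$ constraint is enforced. However, the construction you sketch is both more complicated than and conceptually different from the paper's, and as sketched it has a genuine gap. You propose ``$d$ variable coordinates'' with $x_i \in \{0,1\}$ ``indexing which side coordinate $i$ lands on'' --- but $3$-Partition assigns each element to one of $k$ blocks, not one of two sides, so a single $0/1$ bit per element cannot encode the assignment. You would need something like a $k$-ary indicator per element, and you never resolve how to force even binary (let alone $k$-ary) values via a permutation constraint; you explicitly flag this (``won't literally work with a permutation'') and wave at unspecified tagging gadgets. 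This is where the proposal stops being a proof.

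The paper's construction sidesteps the gadget problem entirely by letting the permutation itself do the assignment. Take $n = d + k$, set $\vA = \left[ \begin{smallmatrix} \vI_d \\ \vB \end{smallmatrix} \right]$ where the $j$-th row of $\vB$ has ones in columns $3j-2, 3j-1, 3j$ and zeros elsewhere, and set $\vb = (z_1,\dotsc,z_d,C,\dotsc,C)^\T$. The first $d$ rows simply read off $x_i = b_{\pi(i)}$, so $\vx$ is a permuted copy of $d$ of the entries of $\vb$, and the last $k$ rows then enforce $b_{\pi(3j-2)}+b_{\pi(3j-1)}+b_{\pi(3j)} = b_{\pi(d+j)}$. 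Because $b_i = C$ for $i > d$ while each $z$-entry is strictly between $C/4$ and $C/2$, no $C$-valued entry can appear among any summing triple without overshooting $C$, which forces $\pi([d]) = [d]$ and $b_{\pi(d+j)} = C$; the constraints become $z_{\pi(3j-2)}+z_{\pi(3j-1)}+z_{\pi(3j)} = C$, i.e., exactly a $3$-partition. There are no $0/1$ variables and no decoupling tags; the numbers involved are the original $z_i$ and $C$, so strong NP-hardness transfers trivially. I would encourage you to internalize this idea: in permuted-linear-system reductions, the permutation is your assignment mechanism, and an identity block in $\vA$ lets you pull permuted right-hand-side entries directly into the variables.
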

Because \PLS is equivalent to deciding if the optimal value of the least squares problem from \eqref{eq:mle} is zero, \Cref{prop:pls-npc} implies that the least squares problem from \eqref{eq:mle} is strongly NP-hard.

\begin{proof}[Proof of \Cref{prop:pls-npc}]
  It is clear that \PLS is in NP. We give an efficient reduction from \TP to \PLS. Given an instance $z_1,z_2,\dotsc,z_d$ of \TP, we construct the matrix $\vA \in \bbZ^{n \times d}$ and vector $\vb \in \bbZ^n$ with $n = d + k$ as follows:
  $$
    \vA \ := \
    \left[
      \begin{array}{cccccccccc}
        1 &   &   &   &   &   &        &   &   & \\
          & 1 &   &   &   &   &        &   &   & \\
          &   & 1 &   &   &   &        &   &   & \\
          &   &   & 1 &   &   &        &   &   & \\
          &   &   &   & 1 &   &        &   &   & \\
          &   &   &   &   & 1 &        &   &   & \\
          &   &   &   &   &   & \ddots &   &   & \\
          &   &   &   &   &   &        & 1 &   & \\
          &   &   &   &   &   &        &   & 1 & \\
          &   &   &   &   &   &        &   &   & 1 \\
        \hline
        1 & 1 & 1 &   &   &   &        &   &   & \\
          &   &   & 1 & 1 & 1 &        &   &   & \\
          &   &   &   &   &   & \ddots &   &   & \\
          &   &   &   &   &   &        & 1 & 1 & 1
      \end{array}
    \right]
    \,, \quad
    \vb \ := \
    \left[
      \begin{array}{c}
        z_1 \\
        z_2 \\
        \\
        \\
        \\
        \vdots \\
        \\
        \\
        \\
        z_d \\
        \hline
        C \\
        C \\
        \vdots \\
        C
      \end{array}
    \right]
    \,.
  $$
  The system of equations $\vA\vx = \vb_\pi$ has a solution if and only if
  $$
    b_{\pi(3j-2)} + b_{\pi(3j-1)} + b_{\pi(3j)} \ = \ C
    \,, \quad j \in [k] \,.
  $$
  Any permutation $\pi$ on $[n]$ satisfying these equations must satisfy the following two properties:
  \begin{enumerate}
    \item $\pi([d]) = [d]$.

      This holds because for $i > d$, we have $b_i = C$, and adding such $b_i$ to any other $b_{i'}$ and $b_{i''}$ gives a sum larger than $C$.

    \item $z_{\pi(3j-2)} + z_{\pi(3j-1)} + z_{\pi(3j)} = C$ for each $j \in [k]$.

      This holds because since $b_i = z_i$ for $i \in [d]$.
  \end{enumerate}
  Any permutation $\pi$ on $[n]$ with the two properties shown above gives $k$ subsets $S_j = \{ \pi(3j-2), \pi(3j-1), \pi(3j) \}$ for $j \in [k]$ such that $\sum_{i \in S_j} z_i = C$.
\end{proof}

\section{Additional details for approximation algorithm}
\label{sec:approx-details}

This section provides some additional details on subroutines used in
\Cref{alg:approx}.

\paragraph{Row sampling.}

First, we give the details of the ``Row Sampling'' algorithm of
\citet{boutsidis2013near} used in \Cref{sec:approx}.
The pseudocode is presented as Algorithm~\ref{alg:row-sampling}, and uses the
following notations:
\begin{itemize}[leftmargin=2em]
  \item
    For each $i \in [n]$, $\ve_i$ is the $i$-th coordinate basis vector in
    $\R^n$.

  \item
    $\displaystyle
      L(\vx, \delta_L, \vA, \ell) \ := \ \frac{\vx^{\T}(\vA - (\ell +
      \delta_L)\vI_k)^{-2} \vx}{\phi(\ell + \delta_L, \vA)- \Phi(\ell, \vA)}  -
      (\ell + \delta_L)\vI_k)^{-1} \vx \,,
    $

    where
    $\phi(\ell, \vA) := \sum_{i = 1}^k \frac{1}{\lambda_i(\vA) - \ell}$
    and $(\lambda_i(\vA))_{i=1}^k$ are the eigenvalues of $\vA$.

  \item
    $\displaystyle
      \hat U(\vx, \delta, \vB, u) \ := \ \frac{\vx^\T (\vB - u'
      \vI_{r})^{-2}\vx}{\phi'(u, \vB) - \phi'(u', \vB)} - \vx^\T (\vB - u'
      \vI_{r})^{-1}\vx \,,
    $

    where $u' = u + \delta$ and $\phi'(u, \vB) := \sum_{i = 1}^r \frac{1}{u -
    \lambda_i(\vB)}$ and $(\lambda_i(\vB))_{i=1}^k$ are the eigenvalues of
    $\vB$.

\end{itemize}

\begin{algorithm}[h]
  \renewcommand\algorithmicrequire{\textbf{input}}
  \renewcommand\algorithmicensure{\textbf{output}}
  \caption{``Row Sampling'' algorithm of \citet{boutsidis2013near}}
  \label{alg:row-sampling}
  \begin{algorithmic}[1]
    \REQUIRE
    Matrix $\vX = [ \vx_1 | \vx_2 | \dotsb | \vx_n ]^\T \in \R^{n
    \times k}$ such that $\vX^\T\vX = \vI_k$; integer $r \geq k$.

    \ENSURE
    Matrix $\vS = (S_{i,j})_{(i,j) \in [r] \times [n]} \in \R^{r \times n}$.

    \STATE
    Set $\vA_0 = \v0_{k \times k}$, $\vB_0 = \v0_{n \times n}$,
     $\vS = \v0_{r \times n}$, $\delta = (1 + n / r)(1 - \sqrt{k / r})^{-1}$ and $\delta_L = 1$.

    \FOR{$\tau = 0$ \TO $r - 1$}
    	\STATE 
	Let $\ell_\tau = \tau - \sqrt{rk}$ and $u_\tau = \delta (\tau + \sqrt{nr})$.
	\STATE
	Select $i_\tau \in [n]$ and number $t_\tau > 0$ such that 
	$\hat U(\ve_{i_\tau}, \delta, \vB_\tau, u_\tau) \leq \frac{1}{t_\tau} \leq L(\vx_{i_\tau}, \delta_L, \vA_\tau, \ell_\tau)$.
	\STATE
	Set $\vA_{\tau + 1} = \vA_{\tau}  + t_\tau \vx_{i_\tau}\vx_{i_\tau}^\T$, $\vB_{\tau + 1} = \vB_{\tau}  + t_\tau \ve_{i_\tau}\ve_{i_\tau}^\T$ and 
	$S_{\tau + 1, i_\tau} = \sqrt{r^{-1}(1 - \sqrt{k / r})} / \sqrt{t_\tau}$.
    \ENDFOR
    
    \RETURN $\vS$.
  \end{algorithmic}
\end{algorithm}

One may also consider using levarage score sampling (i.e., sample a row of $\vX$
proportional to its squared length) instead of this Row Sampling algorithm.
This would work, but would require selecting $O(k \log k)$ rows as opposed to
just $O(k)$~\citep{woodruff2014sketching}; this leads to an overall running time of $(n/\epsilon)^{O(k \log k)} + \poly(n,d)$.
Finally, as already mentioned in \Cref{sec:approx}, it also suffices to simply enumerate all $\binom{n}{k}$ subsets of $k$ rows of $\vX$. This is slower than \Cref{alg:row-sampling} but yields a better approximation guarantee (specifically, the factor $c$ from \Cref{thm:row-sampling} can be replaced by $k+1$ on account of a result of \citet{derezinski2017unbiased}). However, the overall approximation guarantee and asymptotic running time of \Cref{alg:approx} is the same.

\paragraph{One-dimensional permutation problem.}

Next, we explain how to solve the optimization problem
\begin{equation*}
  \min_{\vPi \in \cP_n} \norm{\va - \vPi^\T\vb}_2^2
\end{equation*}
for any given $\va, \vb \in \R^n$.
Let $(a_{(i)})_{i=1}^n$ denote the non-decreasing ordering $a_{(1)} \leq
a_{(2)} \leq \dotsb \leq a_{(n)}$ of the entries of $\va$, and let
$(b_{(i)})_{i=1}^n$ be analogously defined.
By \Cref{lem:w2-rep}, we have
\begin{equation*}
  \min_{\vPi \in \cP_n} \norm{\va - \vPi^\T\vb}_2^2
  \ = \
  \sum_{i=1}^n \del[1]{ a_{(i)} - b_{(i)} }^2
  \,.
\end{equation*}
Hence, if $\vPi_{\va}$ (respectively, $\vPi_{\vb}$) is the permutation matrix that rearranges the entires of
$\va$ (respectively, $\vb$) in non-decreasing order, then
\begin{equation*}
  \sum_{i=1}^n \del[1]{ a_{(i)} - b_{(i)} }^2
  \ = \
  \norm[1]{\vPi_{\va}\va - \vPi_{\vb}\vb}_2^2
  \ = \
  \norm[1]{\vPi_{\va}^\T\del{ \vPi_{\va}\va - \vPi_{\vb}\vb} }_2^2
  \ = \
  \norm[1]{\va - \vPi_{\va}^\T\vPi_{\vb}\vb }_2^2
  \,,
\end{equation*}
where the second and third equalities use the fact that permutation matrices are
orthogonal.
Thus, the minimizing permutation matrix is $\vPi = \vPi_{\vb}^\T\vPi_{\va}$.
This can be found by sorting the entries of $\va$ and of $\vb$ in $O(n \log n)$
time.

\section{Probability inequalities}
\label{sec:prob}

This section collects several probability inequalities used in the analysis of
\Cref{alg:perm}.
Let $\sigma_i(\vM)$ denote the $i$-th largest singular value of the matrix
$\vM$.

\paragraph{Extreme singular values of Gaussian random matrices.}

\begin{lemma}[Eq.~3.2 in \citep{rudelson2010non}]
  \label{thm:gaussian-smallest-singular-values}
  Let $\vA$ be an $n \times d$ matrix whose entries are i.i.d.~$\Normal(0,1)$
  random variables and $n\geq d$.
  For any $\eta \in \intoo{0,1}$,
  \begin{align*}
    \Pr\del{ \sigma_d(\vA) \leq \frac{\eta}{\sqrt{d}} }
    & \ \leq \ \eta
    \,.
  \end{align*}
\end{lemma}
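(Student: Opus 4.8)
The plan is to reduce the rectangular case ($n \geq d$) to the square case ($n = d$) and then invoke the classical exact estimate for square Gaussian matrices. First I would observe that deleting rows of $\vA$ can only decrease its smallest singular value: writing $\vA' \in \R^{d \times d}$ for the submatrix formed by the first $d$ rows of $\vA$, for every unit vector $\vu \in \R^d$ we have
\begin{equation*}
  \norm{\vA\vu}_2^2 \ = \ \sum_{i=1}^n \del{\va_i^\T\vu}^2 \ \geq \ \sum_{i=1}^d \del{\va_i^\T\vu}^2 \ = \ \norm{\vA'\vu}_2^2 \,,
\end{equation*}
so taking the minimum over unit $\vu$ gives $\sigma_d(\vA) \geq \sigma_d(\vA')$ and hence $\Pr\del{\sigma_d(\vA) \leq \eta/\sqrt d} \leq \Pr\del{\sigma_d(\vA') \leq \eta/\sqrt d}$. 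Since $\vA'$ is a $d\times d$ matrix with i.i.d.\ $\Normal(0,1)$ entries, it suffices to treat the square case. For a $d\times d$ standard Gaussian matrix $\vG$ the bound $\Pr\del{\sigma_d(\vG) \leq \eta/\sqrt d} \leq \eta$ is a classical fact --- it is precisely the inequality being proved, specialized to $n = d$ --- going back to the work of Edelman and of Szarek (see \citep{rudelson2010non} and the references therein), and I would invoke it directly.

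If one instead wants a short self-contained argument that avoids that exact computation, at the price of requiring $n \geq d+2$, I would bound $\sigma_d(\vA)^{-2} = \lambda_{\max}\del{(\vA^\T\vA)^{-1}} \leq \tr\del{(\vA^\T\vA)^{-1}}$ and use that $\vA^\T\vA$ is Wishart with $\E\sbr{(\vA^\T\vA)^{-1}} = (n-d-1)^{-1}\vI_d$, so that $\E\sbr{\tr\del{(\vA^\T\vA)^{-1}}} = d/(n-d-1)$; Markov's inequality at level $d/\eta^2$ then yields $\Pr\del{\sigma_d(\vA) \leq \eta/\sqrt d} \leq \eta^2/(n-d-1) \leq \eta$ for every $\eta \in \intoo{0,1}$.

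The main obstacle is precisely the (near-)square regime $n \in \cbr{d, d+1}$, into which the reduction funnels everything and where the second-moment argument just above degenerates (because $\E[\chi_1^{-2}] = \infty$). The purely elementary estimates also fall short by a factor of order $d$ there: one has $\sigma_d(\vA) \geq d^{-1/2}\min_{j \in [d]} \operatorname{dist}(\va_j, H_j)$, where $H_j$ is the span of the remaining columns of $\vA$, and since each $\operatorname{dist}(\va_j, H_j)^2$ stochastically dominates a $\chi^2_1$ random variable (so that $\Pr(\operatorname{dist}(\va_j, H_j) \leq \eta) \leq \eta$), a union bound over $j \in [d]$ yields only $\Pr\del{\sigma_d(\vA) \leq \eta/\sqrt d} = O(d\eta)$. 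Obtaining the sharp constant $1$ in this regime really does seem to require the exact marginal density of the smallest singular value of a square Gaussian matrix, which is why I would cite that result rather than reprove it.
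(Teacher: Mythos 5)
The paper does not prove this lemma; it simply cites Eq.~3.2 of Rudelson and Vershynin, which is the Edelman/Szarek bound for a square Gaussian matrix, and your proposal correctly invokes that same result after explicitly justifying the reduction from the rectangular to the square case via the monotonicity of $\sigma_d$ under row deletion. This is essentially the same approach as the paper (cite the sharp square-case estimate), with a welcome extra sentence making the $n \geq d$ reduction explicit, and your discussion of why the elementary Wishart/Markov route degenerates in the near-square regime correctly identifies why the exact square-case density is genuinely needed for the constant $1$.
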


\begin{lemma}[Theorem II.13 in \citep{davidson2001local}]
  \label{thm:gaussian-largest-singular-values}
  Let $\vA$ be an $n \times d$ matrix whose entries are i.i.d.~$\Normal(0,1)$
  random variables.
  For any $\eta \in \intoo{0,1}$,
  \begin{align*}
    \Pr\del{ \sigma_1(\vA) \geq \sqrt{n} + \sqrt{d} + \sqrt{2\ln(1/\eta)} }
    & \ \leq \ \eta
    \,.
  \end{align*}
\end{lemma}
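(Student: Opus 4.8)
The plan is to prove the bound by splitting it into a Gaussian concentration step and an expectation step, which together yield the dimension-free tail. For the concentration step, I would first observe that the operator norm $\vA \mapsto \sigma_1(\vA) = \norm{\vA}_2$ is a $1$-Lipschitz function of the $nd$ entries of $\vA$ with respect to the Euclidean (equivalently Frobenius) metric, since $\abs{\norm{\vA}_2 - \norm{\vB}_2} \leq \norm{\vA-\vB}_2 \leq \norm{\vA-\vB}_F$. Because the entries of $\vA$ form a standard Gaussian vector in $\R^{nd}$, the Gaussian concentration inequality for Lipschitz functions (Borell--Tsirelson--Ibragimov--Sudakov) gives, for every $t>0$,
\[
  \Pr\del{\sigma_1(\vA) \geq \E[\sigma_1(\vA)] + t} \ \leq \ e^{-t^2/2} \,.
\]
Taking $t = \sqrt{2\ln(1/\eta)}$ makes the right-hand side exactly $\eta$.

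For the expectation step, I would show $\E[\sigma_1(\vA)] \leq \sqrt n + \sqrt d$. Write $\sigma_1(\vA) = \sup_{\vu \in S^{n-1},\,\vv\in S^{d-1}} \vu^\T\vA\vv$, the supremum of the centered Gaussian process $X_{\vu,\vv} := \vu^\T\vA\vv$ indexed by the product of unit spheres, and introduce the comparison process $Y_{\vu,\vv} := \vg^\T\vu + \vh^\T\vv$, where $\vg \sim \Normal(\v0,\vI_n)$ and $\vh\sim\Normal(\v0,\vI_d)$ are independent. Expanding increment variances gives $\E(X_{\vu,\vv}-X_{\vu',\vv'})^2 = 2 - 2\langle\vu,\vu'\rangle\langle\vv,\vv'\rangle$ and $\E(Y_{\vu,\vv}-Y_{\vu',\vv'})^2 = (2-2\langle\vu,\vu'\rangle) + (2-2\langle\vv,\vv'\rangle)$, and the former is at most the latter for all index pairs because $(1-\langle\vu,\vu'\rangle)(1-\langle\vv,\vv'\rangle)\geq 0$ for unit vectors. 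The Sudakov--Fernique comparison inequality (which needs only the increment condition, not equal variances) then yields $\E\sup_{\vu,\vv}X_{\vu,\vv} \leq \E\sup_{\vu,\vv}Y_{\vu,\vv} = \E\norm{\vg}_2 + \E\norm{\vh}_2$, and Jensen's inequality gives $\E\norm{\vg}_2 \leq \sqrt{\E\norm{\vg}_2^2} = \sqrt n$ and likewise $\E\norm{\vh}_2 \leq \sqrt d$.

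Combining the two steps,
\[
  \Pr\del{\sigma_1(\vA) \geq \sqrt n + \sqrt d + \sqrt{2\ln(1/\eta)}}
  \ \leq \
  \Pr\del{\sigma_1(\vA) \geq \E[\sigma_1(\vA)] + \sqrt{2\ln(1/\eta)}}
  \ \leq \ \eta \,,
\]
which is the claim; note that no assumption $n \geq d$ is needed and the argument is symmetric in $n$ and $d$.

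The main obstacle is obtaining the sharp expectation bound with constant exactly $1$ in front of both $\sqrt n$ and $\sqrt d$: a self-contained $\epsilon$-net and union-bound argument yields only $\E[\sigma_1(\vA)] = O(\sqrt n + \sqrt d)$ with a loose constant, which is insufficient for the stated inequality. Hence the proof genuinely relies on a Gaussian comparison theorem (Sudakov--Fernique, or equivalently Gordon's min-max inequality), and the one nonroutine verification is the increment inequality $\E(X_{\vu,\vv}-X_{\vu',\vv'})^2 \leq \E(Y_{\vu,\vv}-Y_{\vu',\vv'})^2$ needed to invoke it; as noted above, this reduces to the elementary positivity of $(1-\langle\vu,\vu'\rangle)(1-\langle\vv,\vv'\rangle)$. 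Everything else --- Lipschitz continuity of the operator norm, Gaussian concentration of Lipschitz functions, and the Jensen step --- is standard, with the only minor technical point being that Sudakov--Fernique is applied after passing to a countable dense subset of the (compact, separable) index set.
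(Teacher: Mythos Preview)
Your proof is correct and is in fact the standard argument for this result; the paper does not supply its own proof but simply cites Davidson--Szarek, where essentially the same two-step approach (Gaussian concentration for the Lipschitz map $\vA\mapsto\norm{\vA}_2$, plus a Slepian/Sudakov--Fernique comparison to bound $\E[\sigma_1(\vA)]$ by $\sqrt{n}+\sqrt{d}$) appears. There is nothing to compare against in this paper, and your increment-variance verification $(1-\langle\vu,\vu'\rangle)(1-\langle\vv,\vv'\rangle)\geq0$ is exactly the right reduction.
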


\paragraph{Tail bounds for Gaussian and $\chi^2$ random variables.}

\begin{lemma}
  \label{lem:gaussian-conc}
  Let $Z \sim \Normal(0,1)$.
  For any $\eta \in \intoo{0,1}$, $\Pr(Z^2 \geq 2\ln(2/\delta)) \leq \eta$.
\end{lemma}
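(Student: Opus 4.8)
The plan is to reduce to the standard one-sided Gaussian tail bound via symmetry and then apply a Chernoff-type argument using the Gaussian moment generating function. (I note in passing that the event in the statement should read $Z^2 \geq 2\ln(2/\eta)$, to match the quantifier over $\eta \in (0,1)$; I proceed with $t := 2\ln(2/\eta)$, so that $t > 0$ whenever $\eta < 1$... in fact $\eta < 2$.)

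First I would use the symmetry of the standard normal distribution: for any $t \geq 0$,
\begin{equation*}
  \Pr(Z^2 \geq t) \ = \ \Pr(\abs{Z} \geq \sqrt{t}) \ = \ 2\,\Pr(Z \geq \sqrt{t})
  \,.
\end{equation*}
Next I would bound the one-sided tail by a Chernoff argument: for any $s \geq 0$ and any $\lambda > 0$, Markov's inequality applied to $e^{\lambda Z}$ gives $\Pr(Z \geq s) \leq e^{-\lambda s}\,\E[e^{\lambda Z}] = e^{-\lambda s + \lambda^2/2}$, using $\E[e^{\lambda Z}] = e^{\lambda^2/2}$ for $Z \sim \Normal(0,1)$. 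Taking $\lambda = s$ yields the familiar bound $\Pr(Z \geq s) \leq e^{-s^2/2}$, and hence $\Pr(Z^2 \geq t) \leq 2 e^{-t/2}$.

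Finally, substituting $t = 2\ln(2/\eta)$ gives $2 e^{-\ln(2/\eta)} = 2 \cdot (\eta/2) = \eta$, which is exactly the claimed bound. I do not expect any genuine obstacle: the entire argument is a two-line computation. The only point that requires the slightest care is the factor $2$ coming from the two tails of the distribution, but this is precisely cancelled by the $2$ appearing inside the logarithm, which is why the statement is phrased with $2\ln(2/\eta)$ rather than $2\ln(1/\eta)$.
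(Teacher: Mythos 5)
Your proof is correct and follows exactly what the paper intends by its one-line proof ``this follows from the standard Chernoff bounding method'': bound $\Pr(Z \geq s) \leq e^{-s^2/2}$ via the MGF and Markov, double for the two tails, and observe the factor of $2$ inside the logarithm absorbs the factor from symmetry. You also correctly spotted the typo in the statement ($\delta$ should be $\eta$).
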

\begin{proof}
  This follows from the standard Chernoff bounding method.
\end{proof}

\begin{lemma}[Lemma 1 in \citep{laurent2000adaptive}]
  \label{lem:chi2-conc}
  Let $W \sim \chi_k^2$.
  For any $\eta \in \intoo{0,1}$, $\Pr(W \geq k + 2\sqrt{k\ln(1/\eta)} +
  2\ln(1/\eta)) \leq \eta$.
\end{lemma}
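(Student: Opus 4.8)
The plan is to prove this by the classical Chernoff (Cram\'er) bounding method applied to $W = \sum_{i=1}^k Z_i^2$, where $Z_1,\dots,Z_k$ are i.i.d.\ $\Normal(0,1)$. First I would record the moment generating function $\E[e^{\lambda W}] = (1-2\lambda)^{-k/2}$, valid for $\lambda \in [0,1/2)$, which follows from $\E[e^{\lambda Z^2}] = (1-2\lambda)^{-1/2}$ for a single standard normal $Z$. Markov's inequality applied to $e^{\lambda W}$ then gives, for every $\lambda \in (0,1/2)$ and every $t > 0$,
\begin{equation*}
  \Pr(W \geq t) \ \leq \ \exp\del{ -\lambda t - \tfrac{k}{2}\ln(1-2\lambda) }
  \,.
\end{equation*}

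Next I would control the $-\tfrac{k}{2}\ln(1-2\lambda)$ term using the elementary inequality $-\tfrac12\ln(1-2\lambda) \leq \lambda + \tfrac{\lambda^2}{1-2\lambda}$ for $\lambda \in [0,1/2)$; this is verified by checking that $g(\lambda) := \tfrac{\lambda^2}{1-2\lambda} + \lambda + \tfrac12\ln(1-2\lambda)$ satisfies $g(0)=0$ and $g'(\lambda) = \tfrac{2\lambda^2}{(1-2\lambda)^2} \geq 0$. Writing $t = k + s$, this turns the Chernoff bound into $\Pr(W \geq k+s) \leq \exp\del{ \tfrac{k\lambda^2}{1-2\lambda} - \lambda s }$, which exhibits $W-k$ as a sub-gamma variable with variance factor $2k$ and scale $2$. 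It then remains to choose $\lambda$ well. Setting $x := \ln(1/\eta)$ and $s := 2\sqrt{kx} + 2x$, I would substitute $\mu := \lambda/(1-2\lambda)$ (so $\lambda = \mu/(1+2\mu)$ and $\tfrac{k\lambda^2}{1-2\lambda} = \tfrac{k\mu^2}{1+2\mu}$); the exponent becomes $\tfrac{k\mu^2 - \mu s}{1+2\mu}$, whose numerator equals $-x(1+2\mu)$ exactly when $k\mu^2 - 2\mu\sqrt{kx} + x = 0$, i.e.\ when $(\sqrt{k}\,\mu - \sqrt{x})^2 = 0$, i.e.\ $\mu = \sqrt{x/k}$. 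With this choice, equivalently $\lambda = \sqrt{x}/(\sqrt{k}+2\sqrt{x}) \in (0,1/2)$, the exponent equals exactly $-x$, giving $\Pr(W \geq k + 2\sqrt{kx} + 2x) \leq e^{-x} = \eta$, which is the claim with $x = \ln(1/\eta)$.

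There is no genuine obstacle here: the statement is the standard Laurent--Massart $\chi^2$ tail bound, and the only points needing care are the elementary logarithmic inequality (the one-line monotonicity argument above) and the $\lambda$-optimization. I would present the optimization through the $\mu$-substitution rather than brute-force calculus, since then the threshold $k + 2\sqrt{kx} + 2x$ is precisely what makes $k\mu^2 - 2\mu\sqrt{kx} + x$ a perfect square, so the optimal $\lambda$ and the exact exponent $-x$ fall out with no slack lost. Alternatively, one may simply invoke the generic sub-gamma tail bound (with the variance factor and scale identified above) or cite \citep{laurent2000adaptive} directly.
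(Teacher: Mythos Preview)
Your proof is correct and complete: the Chernoff bound with the elementary inequality $-\tfrac12\ln(1-2\lambda)\le \lambda+\tfrac{\lambda^2}{1-2\lambda}$ and the $\mu$-substitution is precisely the Laurent--Massart argument, and your derivative check and perfect-square identification of the optimal $\mu$ are accurate. The paper itself does not prove this lemma at all---it is simply stated with a citation to \citep{laurent2000adaptive}---so your write-up in fact supplies strictly more than the paper does; if you wish to match the paper's treatment, a bare citation suffices.
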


\paragraph{Anti-concentration bounds for Gaussian and $\chi^2$ random variables.}

\begin{lemma}
  \label{lem:gaussian-anticonc}
  Let $Z \sim \Normal(0,1)$.
  For any $\eta \in \intoo{0,1}$, $\Pr(Z^2 \leq \pi\eta^2/2) \leq \eta$.
\end{lemma}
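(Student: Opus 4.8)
The plan is to prove this with the elementary bound on the standard normal density. Write $\phi$ for the density of $Z$, so that $\phi(z) = (2\pi)^{-1/2} e^{-z^2/2}$ for $z \in \R$. The only fact needed is the pointwise bound $\phi(z) \leq \phi(0) = (2\pi)^{-1/2}$, which holds because $e^{-z^2/2} \leq 1$.

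Next observe that, since $\eta > 0$, the event $\cbr{Z^2 \leq \pi\eta^2/2}$ coincides with $\cbr{\abs{Z} \leq \sqrt{\pi/2}\,\eta}$, i.e.\ $Z$ lies in an interval of length $2\sqrt{\pi/2}\,\eta = \sqrt{2\pi}\,\eta$. Integrating the density bound over this interval,
\begin{equation*}
  \Pr\del{Z^2 \leq \tfrac{\pi\eta^2}{2}}
  \ = \
  \int_{-\sqrt{\pi/2}\,\eta}^{\sqrt{\pi/2}\,\eta} \phi(z)\,\mathrm{d}z
  \ \leq \
  \frac{1}{\sqrt{2\pi}} \cdot \sqrt{2\pi}\,\eta
  \ = \
  \eta
  \,.
\end{equation*}
There is no real obstacle here: the constant $\pi/2$ appearing in the statement is tuned exactly so that the interval length $\sqrt{2\pi}\,\eta$ cancels the density normalization $(2\pi)^{-1/2}$ to leave precisely $\eta$. (One could get a slightly tighter bound by using $\phi(z) \leq \phi(0)$ only on the interval and exploiting the decay of $\phi$, but that is unnecessary for the claimed inequality.)
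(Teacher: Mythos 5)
Your proof is correct and is exactly what the paper's one-line proof (``This follows from direct integration.'') is gesturing at: bound the density by its peak value $(2\pi)^{-1/2}$ and integrate over the interval $[-\sqrt{\pi/2}\,\eta, \sqrt{\pi/2}\,\eta]$ of length $\sqrt{2\pi}\,\eta$.
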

\begin{proof}
  This follows from direct integration.
\end{proof}

\begin{lemma}[Lemma 9 in \citep{pananjady2016linear}]
  \label{lem:chi2-anticonc}
  Let $W \sim \chi_k^2$.
  For any $\eta \in \intoo{0,1}$, $\Pr(W \leq k\eta^{2/k}/4) \leq \eta$.
\end{lemma}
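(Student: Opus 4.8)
The plan is to control the lower tail of $W = \sum_{i=1}^k Z_i^2$ (with $Z_i \sim \Normal(0,1)$ i.i.d.) by the standard Chernoff bounding method applied to $-W$. For any $s>0$, Markov's inequality gives
\begin{equation*}
  \Pr(W \leq t)
  \ = \
  \Pr\del{ e^{-sW} \geq e^{-st} }
  \ \leq \
  e^{st} \cdot \E\sbr{ e^{-sW} }
  \ = \
  e^{st} \cdot (1+2s)^{-k/2}
  \,,
\end{equation*}
using the moment generating function of the chi-squared distribution, which is finite for all $s > -1/2$ and in particular for all $s>0$.

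Next I would optimize the right-hand side over $s>0$. Writing $t := k\eta^{2/k}/4$, note $t < k/4 < k$ since $\eta < 1$; minimizing $st - \tfrac{k}{2}\ln(1+2s)$ over $s$ yields the stationary point $1+2s = k/t$, i.e.\ $s = (k-t)/(2t) > 0$, which is admissible. Substituting this choice gives
\begin{equation*}
  \Pr(W \leq t)
  \ \leq \
  e^{(k-t)/2} \cdot (t/k)^{k/2}
  \ \leq \
  e^{k/2} \cdot (t/k)^{k/2}
  \,,
\end{equation*}
where the last step drops the factor $e^{-t/2} \leq 1$.

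Finally I would plug in $t = k\eta^{2/k}/4$, so that $t/k = \eta^{2/k}/4$ and hence $(t/k)^{k/2} = \eta \cdot 2^{-k}$. This gives
\begin{equation*}
  \Pr(W \leq t)
  \ \leq \
  \eta \cdot \del{ \frac{\sqrt{e}}{2} }^{k}
  \ \leq \
  \eta
  \,,
\end{equation*}
since $\sqrt{e} < 2$ and $k \geq 1$. There is no real obstacle here; the only points requiring care are checking that the optimizing $s$ is strictly positive (which is why the bound is stated with the factor $1/4$, ensuring $t<k$) and the elementary numerical fact $\sqrt{e}<2$ that makes the leftover constant at most one.
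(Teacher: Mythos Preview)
Your argument is correct: the Chernoff bound applied to $-W$, optimized at $1+2s=k/t$, yields $\Pr(W\le t)\le e^{(k-t)/2}(t/k)^{k/2}$, and with $t=k\eta^{2/k}/4$ this is $\eta\cdot(\sqrt{e}/2)^k\le\eta$. The paper does not actually supply a proof of this lemma---it is simply quoted from~\citep{pananjady2016linear}---so there is no in-paper argument to compare against; your self-contained derivation is a standard and clean way to obtain the stated bound.
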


\begin{lemma}
  \label{lem:gaussian-quadratic}
  Let $\vx \in \R^d$ be any vector, $\vM \in \R^{n \times n}$ be any matrix, and
  $\vA$ a random $n \times d$ matrix of i.i.d.~$\Normal(0,1)$ random variables.
  For any $\eta \in \intoo{0,1/2}$,
  \begin{align*}
    \Pr\del{
      \norm[0]{\vA^\T \vM \vA \vx}_2
      \ \leq \
      \norm{\vM}_F
      \cdot \norm{\vx}_2
      \cdot \sqrt{\frac{(d-1)\pi}{8n}}
      \cdot \eta^{1+1/(d-1)}
    }
    & \ \leq \ 2\eta
    \,.
  \end{align*}
\end{lemma}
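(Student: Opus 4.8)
The plan is to collapse the quantity $\vA^\T\vM\vA\vx$, which is quadratic in $\vA$, onto a single Gaussian vector $\vg := \vA\vx$, and then exploit the independence between $\vg$ and the part of $\vA$ that acts on the orthogonal complement of $\vx$. By homogeneity in $\norm{\vx}_2$ it suffices to treat a unit vector $\vx$ (the degenerate cases $\vx = \v0$ or $\vM = \v0$ being vacuous). I would fix an orthonormal basis $\vx = \vq_1, \vq_2, \dotsc, \vq_d$ of $\R^d$ and set $\vQ := [\vq_2 | \dotsb | \vq_d] \in \R^{d \times (d-1)}$, so that $\vQ$ has orthonormal columns and $\vQ\vQ^\T = \vI_d - \vx\vx^\T$. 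Then $\vg := \vA\vx \sim \Normal(\v0, \vI_n)$, the matrix $\vB := \vA\vQ \in \R^{n \times (d-1)}$ has i.i.d.\ $\Normal(0,1)$ entries, and $\vg$ and $\vB$ are independent (jointly Gaussian and uncorrelated). Writing $\vA = \vg\vx^\T + \vB\vQ^\T$ gives $\vA^\T\vM\vA\vx = \vA^\T\vM\vg = \vx\,(\vg^\T\vM\vg) + \vQ\,(\vB^\T\vM\vg)$, in which the two summands are orthogonal since the columns of $\vQ$ are orthogonal to $\vx$; because $\vQ$ acts as an isometry, this gives $\norm{\vA^\T\vM\vA\vx}_2 \geq \norm{\vQ\,\vB^\T\vM\vg}_2 = \norm{\vB^\T\vh}_2$, where $\vh := \vM\vg$.

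It then remains to lower bound $\norm{\vB^\T\vh}_2$, which I would do with two independent anti-concentration steps. First, using the singular value decomposition $\vM = \sum_i \sigma_i \vu_i \vv_i^\T$, we have $\norm{\vh}_2^2 = \norm{\vM\vg}_2^2 \geq \sigma_1(\vM)^2 (\vv_1^\T\vg)^2$; since $\vv_1^\T\vg \sim \Normal(0,1)$, \Cref{lem:gaussian-anticonc} gives $(\vv_1^\T\vg)^2 \geq \pi\eta^2/2$ except with probability $\eta$, and $\sigma_1(\vM) = \norm{\vM}_2 \geq \norm{\vM}_F / \sqrt{n}$ because an $n \times n$ matrix has at most $n$ nonzero singular values (so $\norm{\vM}_F^2 \leq n\,\norm{\vM}_2^2$). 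Hence $\norm{\vh}_2 \geq (\norm{\vM}_F / \sqrt{n})\,\sqrt{\pi/2}\,\eta$ with probability at least $1-\eta$, an event depending only on $\vg$. Second, conditionally on $\vg$ (hence on $\vh$), $\vB$ remains i.i.d.\ Gaussian and independent, so $\vB^\T\vh \sim \Normal(\v0, \norm{\vh}_2^2\,\vI_{d-1})$ and $\norm{\vB^\T\vh}_2^2 / \norm{\vh}_2^2 \sim \chi_{d-1}^2$ (the bound below being trivial when $\vh = \v0$); \Cref{lem:chi2-anticonc} with $k = d-1$ then yields $\norm{\vB^\T\vh}_2 \geq \norm{\vh}_2 \cdot \frac{1}{2}\sqrt{d-1}\,\eta^{1/(d-1)}$ with conditional probability at least $1-\eta$.

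To finish I would multiply the two bounds on the intersection of the two events and check the constant: $\frac{1}{\sqrt{n}}\cdot\sqrt{\pi/2}\cdot\frac{\sqrt{d-1}}{2} = \sqrt{(d-1)\pi/(8n)}$ and $\eta \cdot \eta^{1/(d-1)} = \eta^{1+1/(d-1)}$, so that $\norm{\vA^\T\vM\vA\vx}_2 \geq \norm{\vM}_F\,\sqrt{(d-1)\pi/(8n)}\,\eta^{1+1/(d-1)}$ on that intersection; a union bound (the second event fails with probability at most $\eta$ conditionally on $\vg$, hence at most $\eta$ unconditionally) bounds the total failure probability by $2\eta$, and rescaling by $\norm{\vx}_2$ recovers the general statement. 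I do not expect a genuine obstacle: the one point requiring care is the conditioning structure --- verifying that $\vB = \vA\vQ$ is truly independent of $\vg = \vA\vx$, so that the $\chi_{d-1}^2$ step may be run conditionally on $\vg$ --- together with the bookkeeping that makes the product of the two anti-concentration constants land exactly on $\sqrt{(d-1)\pi/(8n)}$.
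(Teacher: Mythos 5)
Your proposal is correct and takes essentially the same route as the paper's proof: both decompose $\vA$ along an orthonormal basis starting from the direction of $\vx$ to obtain independent Gaussian columns, drop the diagonal term to leave a $\chi^2_{d-1}$ factor conditionally on $\vg = \vA\vx$, and lower bound $\norm{\vM\vg}_2$ via the top singular value together with $\norm{\vM}_2 \geq \norm{\vM}_F/\sqrt{n}$ and the Gaussian anti-concentration lemma. The notational packaging (your $\vQ,\vB,\vh$ versus the paper's $\vu_i,\vg_i$) is the only difference.
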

\begin{proof}
  Let $\vu_1 := \vx / \norm{\vx}_2$, and extend to an orthonormal basis $\vu_1,
  \vu_2, \dotsc, \vu_d$ for $\R^d$.
  Let $\vg_i := \vA\vu_i$ for each $i \in [d]$, so $\vg_1, \vg_2, \dotsc, \vg_d$
  are i.i.d.~$\Normal(\v0,\vI_n)$ random vectors.
  We first show that
  \begin{align}
    \Pr\del{
      \norm{\vM\vg_1}_2
      \ \leq \
      \norm{\vM}_F \cdot \sqrt{\frac{\pi}{2n}} \cdot \eta
    }
    & \ \leq \ \eta
    \,.
    \label{eq:g1}
  \end{align}
  To see this, note that the distribution of $\norm{\vM\vg_1}_2^2$ is the same
  as that of $\sum_{i=1}^n \sigma_i(\vM)^2 \cdot Z_i^2$, where $Z_1, Z_2,
  \dotsc, Z_n$ are i.i.d.~$\Normal(0,1)$ random variables.
  Therefore, \Cref{lem:gaussian-anticonc} and the fact $\norm{\vM}_2^2 \geq
  \norm{\vM}_F^2 / n$ proves the claim in~\eqref{eq:g1}.

  Next, observe that
  \begin{align}
    \norm[0]{\vA^\T \vM \vA \vx}_2^2
    & \ = \
    \norm{\vx}_2^2 \cdot
    \vu_1^\T \vA^\T \vM^\T \vA
    \del[4]{ \sum_{i=1}^d \vu_i \vu_i^\T }
    \vA^\T \vM \vA \vu_1
    \nonumber \\
    & \ = \
    \norm{\vx}_2^2 \cdot
    \vg_1^\T
    \vM^\T
    \del[4]{ \sum_{i=1}^d \vg_i \vg_i^\T }
    \vM \vg_1
    \nonumber \\
    & \ \geq \
    \norm{\vx}_2^2 \cdot
    \sum_{i=2}^d \del{\vg_i^\T\vM\vg_1}^2
    \,.
    \label{eq:chi2}
  \end{align}
  Conditional on $\vg_1$, the final right-hand side in~\eqref{eq:chi2} has the
  same distribution as $\norm{\vx}_2^2 \cdot \norm{\vM\vg_1}_2^2 \cdot W$, where
  $W \sim \chi_{d-1}^2$ is a chi-squared random variable with $d-1$ degrees of
  freedom.
  Therefore, \Cref{lem:chi2-anticonc} implies
  \begin{align*}
    \Pr\del{
      \norm[0]{\vA^\T \vM \vA \vx}_2
      \ \leq \
      \norm{\vx}_2 \cdot \norm{\vM\vg_1}_2 \cdot \frac{\sqrt{d-1}}{2} \cdot
      \eta^{1/(d-1)}
    }
    & \ \leq \
    \eta
    \,.
  \end{align*}
  Combining this inequality with the inequality from~\eqref{eq:g1} and a union
  bound proves the claim.
\end{proof}

\paragraph{Lattice basis size.}

The following \namecref{lem:lattice-size} is used to bound the size of the
lattice basis vectors constructed by \Cref{alg:perm} (via \Cref{alg:subsetsum}).
Recall that there are $n^2+1$ basis vectors; one has length $\sqrt{1 +
\beta^2y_0^2}$, and the remaining $n^2$ have length $\sqrt{1 +
\beta^2c_{i,j}^2}$ for $(i,j) \in [n] \times [n]$.

\begin{lemma}
  \label{lem:lattice-size}
  For any $\eta \in \intoo{0,1/5}$, with probability at least $1-5\eta$,
  \begin{align*}
    |y_0|
    & \ \leq \
    \norm[0]{\bar{\vw}}_2 \sqrt{2\ln(2/\eta)}
    \,,
    \\
    \abs{c_{i,j}}
    & \ \leq \
    \norm[0]{\bar{\vw}}_2
    \cdot \sqrt{2\ln(2n/\eta)}
    \cdot \frac{d}{\eta^2}
    \cdot \sqrt{d + 2\sqrt{d\ln(n/\eta)} + 2\ln(n/\eta)}
    \cdot \sqrt{2\ln(2n/\eta)}
    \,,
    \quad
    (i,j) \in [n] \times [n]
    \,.
  \end{align*}
\end{lemma}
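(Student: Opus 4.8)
The plan is to bound each of the two quantities by analyzing its definition and invoking the probability inequalities already collected in \Cref{sec:prob}. First, recall that $y_0 = \bar{\vw}^\T\vx_0$ with $\vx_0 \sim \Normal(\v0,\vI_d)$ independent of everything else, so $y_0 \sim \Normal(0, \norm{\bar{\vw}}_2^2)$. Thus $y_0^2 / \norm{\bar{\vw}}_2^2 \sim \chi_1^2$, and \Cref{lem:gaussian-conc} (with failure probability $\eta$) immediately gives $|y_0| \leq \norm{\bar{\vw}}_2 \sqrt{2\ln(2/\eta)}$ on an event of probability at least $1-\eta$. This is the easy half.

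For the bound on $\abs{c_{i,j}} = \abs{y_i \tilde{\vx}_j^\T\vx_0}$, I would factor the quantity into three pieces and control each with a union bound over the $n$ indices $i$ (for the $y_i$ factor) and over the $n$ indices $j$ (for the $\tilde\vx_j$ factor). First, $|y_i| = \abs{\bar{\vw}^\T\vx_{\bar\pi(i)}}$; since each $\vx_{\bar\pi(i)} \sim \Normal(\v0,\vI_d)$, each $y_i^2/\norm{\bar{\vw}}_2^2 \sim \chi_1^2$, so by \Cref{lem:gaussian-conc} and a union bound over $i \in [n]$, with probability at least $1-\eta$ we get $\max_i |y_i| \leq \norm{\bar{\vw}}_2\sqrt{2\ln(2n/\eta)}$. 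Second, $\abs{\tilde\vx_j^\T\vx_0} \leq \norm{\tilde\vx_j}_2 \norm{\vx_0}_2$; since $\vx_0 \sim \Normal(\v0,\vI_d)$, $\norm{\vx_0}_2^2 \sim \chi_d^2$, and \Cref{lem:chi2-conc} gives $\norm{\vx_0}_2 \leq \sqrt{d + 2\sqrt{d\ln(1/\eta)} + 2\ln(1/\eta)}$ with probability $1-\eta$ (I may rename $\eta \to \eta/n$ here to match the stated bound's logarithmic arguments, absorbing the extra union-bound factors). Third, I need to bound $\norm{\tilde\vx_j}_2$, the norm of the $j$-th column of $\vX^\dag = (\vX^\T\vX)^{-1}\vX^\T$. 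Here $\norm{\tilde\vx_j}_2 = \norm{\vX^\dag \ve_j}_2 \leq \norm{\vX^\dag}_2 = 1/\sigma_d(\vX)$, and by \Cref{thm:gaussian-smallest-singular-values}, $\sigma_d(\vX) \geq \eta/\sqrt{d}$ with probability $1-\eta$, giving $\norm{\tilde\vx_j}_2 \leq \sqrt{d}/\eta$ simultaneously for all $j$ (the bound on $\sigma_d(\vX)$ is a single event, not one per $j$). Multiplying these three bounds and carefully collecting the numerical constants and logarithmic factors should reproduce the claimed expression, with a total failure probability of at most $5\eta$ after the union bound over the (at most) five bad events.

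The main obstacle — really more of a bookkeeping nuisance than a genuine difficulty — is matching the precise form of the stated bound, in particular getting the factor $\frac{d}{\eta^2}\sqrt{d + 2\sqrt{d\ln(n/\eta)}+2\ln(n/\eta)}$ together with the two copies of $\sqrt{2\ln(2n/\eta)}$ to come out exactly. The factor $d/\eta^2$ presumably arises as $\norm{\tilde\vx_j}_2^2 \leq d/\eta^2$ used once, or as a product $\sqrt{d}/\eta$ from the column-norm bound times some other $\sqrt{d}/\eta$-type term; I would need to recheck whether one of the $\sqrt{2\ln(2n/\eta)}$ factors is the $\max_i|y_i|$ bound and the other is something I've double-counted, or whether the intended proof pulls the response bound in twice for a looser but cleaner statement. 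Since the lemma is only used to certify that the lattice basis vectors have length $2^{\poly(n,\log(1/\delta))}\cdot\poly(\norm{\bar{\vw}}_2)$ for the running-time analysis, any polynomial-in-$(d,\log n, 1/\eta)$ factor times $\norm{\bar{\vw}}_2$ suffices, so I would not worry about optimizing constants — I would just verify that the product of the four tail bounds above is dominated by the stated right-hand side and leave the routine arithmetic to the reader.
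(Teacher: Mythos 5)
Your overall plan — union-bound over a small number of tail events and then multiply deterministic factorizations of $c_{i,j}$ — is the same as the paper's, and your treatment of $y_0$ and of $\max_i|y_i|$ matches the paper exactly. Where you diverge is in the two remaining factors, and the divergence is genuine. For $\abs{\tilde\vx_j^\T\vx_0}$ you use Cauchy--Schwarz and then bound $\norm{\vx_0}_2$ by \Cref{lem:chi2-conc}; the paper instead conditions on $\vX$, notes $\tilde\vx_j^\T\vx_0 \mid \vX \sim \Normal(0,\norm{\tilde\vx_j}_2^2)$, and applies \Cref{lem:gaussian-conc} to get $\abs{\tilde\vx_j^\T\vx_0} \le \norm{\tilde\vx_j}_2\sqrt{2\ln(2n/\eta)}$ — that is the source of the \emph{second} copy of $\sqrt{2\ln(2n/\eta)}$ you were unsure about. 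For $\norm{\tilde\vx_j}_2$ you use $\norm{\tilde\vx_j}_2 \le \norm{\vX^\dag}_2 = 1/\sigma_d(\vX) \le \sqrt{d}/\eta$; the paper instead writes $\tilde\vx_j = (\vX^\T\vX)^{-1}\vX^\T\ve_j$ and applies submultiplicativity, giving $\norm{\tilde\vx_j}_2 \le \norm{(\vX^\T\vX)^{-1}}_2 \cdot \norm{\vx_j}_2 \le (d/\eta^2)\sqrt{d + 2\sqrt{d\ln(n/\eta)}+2\ln(n/\eta)}$, which is where the factor $d/\eta^2$ and the chi-squared term with $\ln(n/\eta)$ come from (the union bound over $j\in[n]$ is on $\norm{\vx_j}_2$, not on $\norm{\vx_0}_2$). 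Your decomposition is valid, uses only four bad events rather than five, and after the conservative $\sigma_d(\vX)\ge\eta/\sqrt{d}$ bound it actually yields a smaller right-hand side, so it implies the stated inequality; you are right that only a $\poly(d,\log n,1/\eta)\cdot\norm{\bar{\vw}}_2$ bound is needed downstream, so the discrepancy is harmless. The paper's route is what literally reproduces the displayed expression.
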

\begin{proof}
  By \Cref{thm:gaussian-smallest-singular-values}, \Cref{lem:gaussian-conc}, and
  \Cref{lem:chi2-conc}, with probability at least $1-5\eta$,
  \begin{align*}
    \norm[1]{(\vX^\T\vX)^{-1}}_2
    & \ \leq \ \frac{d}{\eta^2}
    \,,
    \\
    \abs[0]{\vx_0^\T\bar{\vw}}
    & \ \leq \
    \norm[0]{\bar{\vw}}_2 \sqrt{2\ln(2/\eta)}
    \,,
    \\
    \abs[0]{\vx_{\bar{\pi}(i)}^\T\bar{\vw}}
    & \ \leq \
    \norm[0]{\bar{\vw}}_2 \sqrt{2\ln(2n/\eta)}
    \,,
    \quad
    i \in [n]
    \,,
    \\
    \abs[0]{\tilde{\vx}_j^\T\vx_0}
    & \ \leq \
    \norm[0]{\tilde{\vx}_j}_2 \sqrt{2\ln(2n/\eta)}
    \,,
    \quad j \in [n]
    \,,
    \\
    \norm[0]{\vx_j}_2
    & \ \leq \
    \sqrt{d + 2\sqrt{d\ln(n/\eta)} + 2\ln(n/\eta)}
    \,,
    \quad
    j \in [n]
    \,.
  \end{align*}
  In this event, we have for each $(i,j) \in [n] \times [n]$,
  \begin{align*}
    \abs{c_{i,j}}
    & \ = \
    \abs[0]{\vx_{\bar{\pi}(i)}^\T\bar{\vw}}
    \cdot \abs[0]{\tilde{\vx}_j^\T\vx_0}
    \\
    & \ \leq \
    \norm[0]{\bar{\vw}}_2
    \cdot \sqrt{2\ln(2n/\eta)}
    \cdot \norm[0]{\vX^\dag\ve_j}_2
    \cdot \sqrt{2\ln(2n/\eta)}
    \\
    & \ = \
    \norm[0]{\bar{\vw}}_2
    \cdot \sqrt{2\ln(2n/\eta)}
    \cdot \norm[0]{(\vX^\T\vX)^{-1}\vX^\T\ve_j}_2
    \cdot \sqrt{2\ln(2n/\eta)}
    \\
    & \ \leq \
    \norm[0]{\bar{\vw}}_2
    \cdot \sqrt{2\ln(2n/\eta)}
    \cdot \frac{d}{\eta^2}
    \cdot \sqrt{d + 2\sqrt{d\ln(n/\eta)} + 2\ln(n/\eta)}
    \cdot \sqrt{2\ln(2n/\eta)}
    \,,
  \end{align*}
  and $\abs[0]{y_0} \leq \norm[0]{\bar{\vw}}_2 \sqrt{2\ln(2/\eta)}$.
\end{proof}

\section{Proof of signal-to-noise lower bounds}
\label{sec:lower-proof}

This section provides the proof of \Cref{thm:lb}.

Below, for any vector $\va = (a_1,a_2,\dotsc,a_n)^\T$, we use the notation
$(a_{(i)})_{i=1}^n$ to denote the non-decreasing ordering $a_{(1)} \leq a_{(2)}
\leq \dotsb \leq a_{(n)}$ of its entries, and $(\va)^\uparrow :=
(a_{(1)},a_{(2)},\dotsc,a_{(n)})^\T$ to denote the vector of the entries in this
order.

We use the following representation for the Kantorovich transport distance with
respect to Euclidean metric (i.e., Wasserstein-2 distance, denoted by $W_2$).

\begin{lemma}[Lemma 4.1 in~\citep{bobkov2014one}]
  \label{lem:w2-rep}
  Let $\mu_n$ be the empirical measure on $a_1, a_2, \dotsc, a_n \in \R$, and
  $\nu_n$ be the empirical measure on $b_1, b_2, \dotsc, b_n \in \R$.
  Then
  \begin{equation*}
    W_2(\mu_n,\nu_n)^2
    \ = \
    \min_\pi
    \frac1n \sum_{i=1}^n
    (a_i - b_{\pi(i)})^2
    \ = \
    \frac1n \sum_{i=1}^n
    (a_{(i)} - b_{(i)})^2
    \,,
  \end{equation*}
  where $\min_\pi$ denotes minimization over permutations $\pi$ on $[n]$.
\end{lemma}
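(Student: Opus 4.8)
The plan is to prove the two equalities separately: first that $W_2(\mu_n,\nu_n)^2$ equals the minimum over permutations, and then that this minimum is realized by the monotone (sorted) matching. For the first equality I would start from the Kantorovich definition $W_2(\mu_n,\nu_n)^2 = \inf_\gamma \int (x-y)^2\,d\gamma(x,y)$, where the infimum runs over couplings $\gamma$ of $\mu_n$ and $\nu_n$. Because $\mu_n$ and $\nu_n$ each spread mass $1/n$ over their indexed support points, every coupling has the form $\gamma = \frac1n \sum_{i,j\in[n]} P_{ij}\,\delta_{(a_i,b_j)}$ for a nonnegative matrix $P=(P_{ij})$ with all row and column sums equal to $1$ --- that is, a doubly stochastic matrix --- and conversely each doubly stochastic $P$ yields such a coupling; ties among the $a_i$ or $b_i$ cause no difficulty, since one simply keeps the indices distinct. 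Hence $W_2(\mu_n,\nu_n)^2 = \min_{P}\frac1n\sum_{i,j}P_{ij}(a_i-b_j)^2$, a linear program over the Birkhoff polytope. By the Birkhoff--von Neumann theorem its vertices are exactly the permutation matrices, and a linear functional on a polytope attains its minimum at a vertex, so the optimal $P$ may be taken to be $P_{ij}=\ind{\pi(i)=j}$ for some permutation $\pi$, giving $W_2(\mu_n,\nu_n)^2 = \min_\pi \frac1n\sum_{i=1}^n (a_i - b_{\pi(i)})^2$.

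For the second equality I would use an exchange argument. Let $\pi$ minimize $\sum_i (a_i - b_{\pi(i)})^2$, and suppose it is not the monotone matching, so there exist indices with $a_i \le a_j$ but $b_{\pi(i)} > b_{\pi(j)}$. Swapping the assignments of $i$ and $j$ changes the cost by
\[
  \bigl[(a_i - b_{\pi(j)})^2 + (a_j - b_{\pi(i)})^2\bigr] - \bigl[(a_i - b_{\pi(i)})^2 + (a_j - b_{\pi(j)})^2\bigr] = -2(a_j - a_i)(b_{\pi(i)} - b_{\pi(j)}) \le 0 ,
\]
so the swap does not increase the cost. Choosing such a pair to be adjacent in the $a$-sorted order and iterating (by induction on the number of inversions, which then strictly decreases) shows that an optimal $\pi$ can be taken to sort $(b_{\pi(i)})$ into the same order as $(a_i)$; after relabeling this reads $\min_\pi \sum_i (a_i - b_{\pi(i)})^2 = \sum_{i=1}^n (a_{(i)} - b_{(i)})^2$, and dividing by $n$ finishes the proof. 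Equivalently, one can expand $\sum_i (a_i - b_{\pi(i)})^2 = \sum_i a_i^2 + \sum_i b_i^2 - 2\sum_i a_i b_{\pi(i)}$, observe that the first two sums do not depend on $\pi$, and invoke the rearrangement inequality to conclude that $\sum_i a_i b_{\pi(i)}$ is maximized by the monotone matching.

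I do not expect a genuine obstacle: both halves are classical. The only places needing a little care are (i) the claim that couplings of two uniform $n$-atom measures are precisely the doubly stochastic matrices --- so that Birkhoff--von Neumann applies and the infimum over couplings is actually a minimum over permutations --- and (ii) making the exchange argument fully rigorous, i.e.\ bounding the inversion count and handling ties among the entries. Neither step is deep, so the substance of the proof is really just recording these two standard reductions.
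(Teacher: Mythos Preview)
Your argument is correct and entirely standard. Note, however, that the paper does not actually prove this lemma: it is quoted verbatim as Lemma~4.1 of \citep{bobkov2014one} and used as a black box, so there is no ``paper's own proof'' to compare against. Your two-step reduction --- Birkhoff--von~Neumann to pass from couplings to permutations, then the rearrangement/exchange argument to identify the monotone matching as optimal --- is exactly the classical route and would serve perfectly well as a self-contained proof were one needed.
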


For probability measures $\mu$ and $\nu$, we use $\KL(\mu,\nu)$ to denote the
Kullback-Leibler divergence between $\mu$ and $\nu$, and $\norm{\mu -
\nu}_{\tv}$ to denote the total variation distance between $\mu$ and $\nu$.

Since $\bar{\pi}$ is unknown in the measurement model
from~\eqref{eq:measurements}, we may assume that $y_1, y_2, \dotsc, y_n$ are
provided as an unordered multiset, denoted by $\bag{y_i}_{i=1}^n$.
In fact, we shall use the following equivalent generative process:
\begin{enumerate}
  \item
    Draw $(\vx_i)_{i=1}^n$ i.i.d.~from either $\Normal(\v0,\vI_d)$ (in \Cref{sec:lower-proof-normal}) or the uniform distribution on $[-1/2,1/2]^d$ (in \Cref{sec:lower-proof-uniform}), and independently, draw $\vveps \sim \Normal(\v0,\sigma^2\vI_n)$.

  \item
    Set $\vh_{\bar{\vw}} := (\bar{\vw}^\T \vx_1,\bar{\vw}^\T \vx_2,\dotsc,\bar{\vw}^\T \vx_n)^\T$.

  \item
    Set $\vy := \vh_{\bar{\vw}}^\uparrow + \vveps$.

\end{enumerate}
It is clear that $((\vx_i)_{i=1}^n,\bag{y_i}_{i=1}^n)$ has the same distribution under this model as under that from~\eqref{eq:measurements}.

\subsection{Normal case}
\label{sec:lower-proof-normal}

We first consider the case where $(\vx_i)_{i=1}^n$ are i.i.d.~draws from
$\Normal(\v0,\vI_d)$.
By homogeneity, we may assume without loss of generality that
$\norm{\bar{\vw}}_2 = 1$, so $\snr = 1/\sigma^2$.

The proof is based on the Generalized Fano method of
\citet{verdu1994generalizing} as described by \citet{yu1997assouad}.
\begin{lemma}[Lemma 3 in~\citep{yu1997assouad}]
  \label{lem:fano}
  Let $(\Theta,\rho)$ be a pseudometric space, and let $\widetilde{\Theta}
  \subseteq \Theta$ index a collection of probability measures
  $(P_\theta)_{\theta \in \widetilde{\Theta}}$ such that $\rho(\theta, \theta')
  \geq \alpha$ and $\KL(P_{\theta}, P_{\theta}) \leq \beta$ for all distinct
  pairs $\theta, \theta' \in \widetilde{\Theta}$.
  Then for any estimator $\hat{\theta}$ taking values in $\Theta$,
  \begin{equation*}
    \max_{\theta \in \widetilde{\Theta}} \E_{P_{\theta}}
    \sbr[1]{
      \rho(\hat{\theta},\theta)
    }
    \ \geq \
    \frac{\alpha}{2}
    \del{ 1 - \frac{\beta + \ln2}{\ln |\widetilde{\Theta}|} }
    \,,
  \end{equation*}
  where $\E_{P_{\theta}}$ denotes expectation with respect to data drawn from
  $P_{\theta}$.
\end{lemma}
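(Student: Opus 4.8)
The plan is to carry out the standard reduction from estimation to multiple hypothesis testing and then invoke Fano's inequality. Fix an arbitrary estimator $\hat\theta$ taking values in $\Theta$, and define the minimum-distance test $\psi := \argmin_{\theta \in \widetilde\Theta} \rho(\hat\theta,\theta)$, with ties broken by a fixed rule (the $\argmin$ is attained since $\widetilde\Theta$ is finite). I will first show that the risk of $\hat\theta$ is lower-bounded by the testing error of $\psi$, and then lower-bound that testing error by an information-theoretic argument.

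\emph{From estimation to testing.} Fix $\theta \in \widetilde\Theta$ and suppose $\psi \ne \theta$ on some outcome. By the definition of $\psi$ we have $\rho(\hat\theta,\psi) \le \rho(\hat\theta,\theta)$, and the (pseudometric) triangle inequality gives $\alpha \le \rho(\theta,\psi) \le \rho(\hat\theta,\theta) + \rho(\hat\theta,\psi) \le 2\,\rho(\hat\theta,\theta)$, so $\rho(\hat\theta,\theta) \ge \alpha/2$ on that outcome. Hence $\{\psi \ne \theta\} \subseteq \{\rho(\hat\theta,\theta) \ge \alpha/2\}$, and Markov's inequality yields
\begin{equation*}
  \E_{P_\theta}\sbr[1]{\rho(\hat\theta,\theta)}
  \ \ge \
  \frac{\alpha}{2}\, P_\theta\del[1]{\rho(\hat\theta,\theta) \ge \alpha/2}
  \ \ge \
  \frac{\alpha}{2}\, P_\theta(\psi \ne \theta)
  \,.
\end{equation*}
Taking the maximum over $\theta \in \widetilde\Theta$, then bounding the max by the average over the uniform prior on $\widetilde\Theta$, it suffices to lower-bound the Bayes error $\bar p_e := |\widetilde\Theta|^{-1}\sum_{\theta \in \widetilde\Theta} P_\theta(\psi \ne \theta)$ of $\psi$.

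\emph{Fano step.} Let $\vartheta$ be uniform on $\widetilde\Theta$ and let $X \mid \vartheta \sim P_\vartheta$. The classical Fano inequality, applied to the test $\psi(X)$, gives
\begin{equation*}
  \bar p_e \ \ge \ 1 - \frac{I(\vartheta;X) + \ln 2}{\ln|\widetilde\Theta|}
  \,,
\end{equation*}
where $I(\vartheta;X)$ is the mutual information. It remains to show $I(\vartheta;X) \le \beta$. Writing $\bar P := |\widetilde\Theta|^{-1}\sum_{\theta'} P_{\theta'}$ for the marginal law of $X$, one has the identity $I(\vartheta;X) = |\widetilde\Theta|^{-1}\sum_{\theta} \KL(P_\theta,\bar P)$; and since $\KL(P_\theta,\cdot)$ is convex, Jensen's inequality gives $\KL(P_\theta,\bar P) \le |\widetilde\Theta|^{-1}\sum_{\theta'}\KL(P_\theta,P_{\theta'}) \le \max_{\theta' \ne \theta}\KL(P_\theta,P_{\theta'}) \le \beta$ (the $\theta'=\theta$ term vanishing). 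Substituting $I(\vartheta;X) \le \beta$ and chaining with the reduction above proves the claim.

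The reduction and the algebra are routine; the one point deserving care is the mutual-information bound, where the mixture $\bar P$ must be shown to be at least as good a ``center'' as any single $P_{\theta'}$ — this is exactly convexity of $\KL$ in its second argument. I would also flag that the hypothesis as displayed should read $\KL(P_\theta,P_{\theta'}) \le \beta$ for distinct $\theta,\theta'$ (the printed $\KL(P_\theta,P_\theta)$ being a typo), since that is precisely the quantity the argument above controls.
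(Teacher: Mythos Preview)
Your argument is correct and is the standard proof of the generalized Fano inequality: reduce estimation to testing via the minimum-distance rule and the triangle inequality, then apply Fano's inequality with the mutual information bounded by the pairwise $\KL$ assumption via convexity in the second argument. You also correctly flag the typo in the hypothesis (it should read $\KL(P_\theta,P_{\theta'}) \le \beta$).

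There is nothing to compare against in the paper itself: this lemma is stated without proof and attributed to \citet{yu1997assouad} (Lemma~3). The paper treats it as a black box and simply invokes it in \Cref{sec:lower-proof-normal}, so your write-up in fact supplies more than the paper does here.
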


We apply \Cref{lem:fano} with $(\Theta,\rho) = (S^{d-1},\norm{\cdot}_2)$.
We construct a packing $U$ of the unit sphere $S^{d-1} := \cbr[0]{ \vu \in \R^d
: \norm{\vu}_2 = 1 }$ using the following variant of the Gilbert-Varshamov
bound.

\begin{lemma}[Lemma 4.10 in~\citep{massart2007concentration}]
  \label{lem:gv}
  For every $h \in [d]$ such that $h \leq d/4$, there exists a subset $C$ of
  $\cbr[0]{0,1}^d$ such that (i) the Hamming weight of each $\vc \in C$ is $h$,
  (ii) the Hamming distance between every distinct pair $\vc, \vc' \in C$ is
  more than $h/2$, and (iii) the cardinality of $C$ satisfies $\ln|C| \geq 0.233
  h \ln(d/h)$.
\end{lemma}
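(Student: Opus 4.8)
The plan is to build $C$ greedily within the set of Hamming-weight-$h$ vectors, and to control the resulting cardinality via Vandermonde's identity together with a Chernoff bound for the hypergeometric distribution — essentially the constant-weight-code version of the Gilbert--Varshamov construction. Concretely, I would work inside the set of $\binom{d}{h}$ vectors in $\{0,1\}^d$ of Hamming weight exactly $h$: repeatedly pick a remaining such vector, add it to $C$, and discard every weight-$h$ vector at Hamming distance at most $h/2$ from it. By symmetry the number $N$ of weight-$h$ vectors at distance at most $h/2$ from a fixed weight-$h$ vector is the same for every such vector, so the process terminates with $|C| \ge \binom{d}{h}/N$, and properties (i) and (ii) hold by construction. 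A neighbor at distance at most $h/2$ is obtained by flipping some $i$ coordinates from $1$ to $0$ and the same number $i$ from $0$ to $1$ (the distance is then $2i \le h/2$, so $i \le \lfloor h/4\rfloor$), whence $N = \sum_{i=0}^{\lfloor h/4\rfloor}\binom{h}{i}\binom{d-h}{i}$.

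To lower bound $\binom{d}{h}/N$, I would invoke the Vandermonde identity $\binom{d}{h} = \sum_{i=0}^{h}\binom{h}{h-i}\binom{d-h}{i} = \sum_{i=0}^{h}\binom{h}{i}\binom{d-h}{i}$, which exhibits $N/\binom{d}{h}$ as a tail probability $\Pr[Y \ge h - \lfloor h/4\rfloor]$, where $Y$ counts the marked elements in a sample of size $h$ drawn without replacement from a population of $d$ elements of which $h$ are marked; thus $Y$ is hypergeometric with $\E Y = h^2/d$. Since $h \le d/4$ we have $h - \lfloor h/4\rfloor \ge 3h/4 \ge \E Y$, so $N/\binom{d}{h} = \Pr[Y \ge h - \lfloor h/4\rfloor] \le \Pr[Y \ge 3h/4]$; applying the multiplicative Chernoff bound for the hypergeometric distribution at the threshold $3h/4 = (1+\delta)\E Y$ (so $1+\delta = 3d/(4h)$) and simplifying then yields $\ln|C| \ge \tfrac{3h}{4}\ln\tfrac{d}{h} - \tfrac{3h}{4}\bigl(1 + \ln\tfrac43\bigr) + \tfrac{h^2}{d}$.

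Finally I would deduce (iii). Setting $\beta := d/h \ge 4$, the displayed lower bound is at least $0.233\,h\ln(d/h)$ exactly when $g(\beta) := 0.517\ln\beta - \tfrac34\bigl(1+\ln\tfrac43\bigr) + 1/\beta \ge 0$, and one checks that $g(4) > 0$ while $g'(\beta) = (0.517\beta - 1)/\beta^2 > 0$ for all $\beta \ge 4$, so $g \ge 0$ on $[4,\infty)$. I expect this numerical verification to be the only delicate step: the target inequality is essentially tight at $d = 4h$ — the constant $0.233$ sits just below the exact threshold $(\tfrac34\ln 3 - \tfrac12)/\ln 4 \approx 0.2337$ — so the Chernoff estimate must be carried out with essentially no slack, and in particular the $+h^2/d$ term cannot be discarded when $d$ is as small as $4h$.
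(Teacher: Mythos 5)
Your proposal is correct and, up to cosmetic differences in how the tail bound is packaged, is essentially the standard (and Massart's) proof of the Varshamov--Gilbert bound for constant-weight codes: greedy ball-removal within the weight-$h$ slice, $|C| \geq \binom{d}{h}/N$ with $N = \sum_{i=0}^{\lfloor h/4\rfloor}\binom{h}{i}\binom{d-h}{i}$, identification of $N/\binom{d}{h}$ as a hypergeometric upper tail via Vandermonde, a Chernoff bound (valid for the hypergeometric by Hoeffding's MGF domination), and a final monotonicity check in $\beta = d/h \geq 4$. Your numerics are also right, including the observation that the $h^2/d$ term in the Chernoff exponent is not optional: the exact threshold at $\beta = 4$ is $(\tfrac34\ln 3 - \tfrac12)/\ln 4 \approx 0.2337$, so $0.233$ leaves essentially no slack.
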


We take $C \subseteq \cbr{0,1}^d$ as guaranteed by \Cref{lem:gv}
with $h := \lfloor d/4 \rfloor$, and let
\begin{equation*}
  U \ := \ \cbr{ \vc / \sqrt{h} : \vc \in C } \ \subset \ S^{d-1} \,.
\end{equation*}
Observe that $U$ is a $(1/\sqrt{2})$-packing of $S^{d-1}$ (i.e., every distinct
pair $\vu, \vu' \in U$ satisfies $\norm{\vu-\vu'}_2 > 1/\sqrt{2}$), and
\begin{equation*}
  \ln |U|
  \ \geq \
  0.233 \del{ \frac{d}{4} - 1 } \ln 4
  \,.
\end{equation*}

For each $\vu \in U$, let $P_{\vu}$ denote the probability distribution of
$((\vx_i)_{i=1}^n, \bag{y_i}_{i=1}^n)$ when $\bar{\vw} = \vu$.
Also, define $Q_{\vu}$ to be the corresponding conditional distribution of
$\bag{y_i}_{i=1}^n$ given $(\vx_i)_{i=1}^n$, and $\tilde Q_{\vu}$ to be the
corresponding conditional distribution of $\vy$ given $(\vx_i)_{i=1}^n$.

For any $\vu, \vu' \in U$,
\begin{equation}
  \KL(Q_{\vu},Q_{\vu'})
  \ \leq \
  \KL(\tilde Q_{\vu},\tilde Q_{\vu'})
  \ = \
  \frac1{2\sigma^2}
  \norm{ \vh_{\vu}^\uparrow - \vh_{\vu'}^\uparrow }_2^2
  \label{eq:kl-data-processing}
\end{equation}
by the data processing inequality for $\KL$-divergence and the properties of the multivariate Gaussian distribution.
We define $\cE$ to be the event in which
\begin{equation*}
  \norm{ \vh_{\vu}^\uparrow - \vh_{\vu'}^\uparrow }_2^2
  \ \leq \
  \del{\sqrt{C_0 \log\log(n)} + \sqrt{8 \ln(|U|^2)} }^2
\end{equation*}
for all distinct $\vu, \vu' \in U$, where $C_0>0$ is the absolute constant from
\Cref{lem:w2-normal-emp} (below).
By~\Cref{eq:kl-data-processing}, \Cref{lem:w2-normal-emp}, and a union bound, we
have $\Pr(\cE) \geq 1/2$.
Therefore, by \Cref{lem:fano}, for any estimator $\hat{\vw}$,
\begin{align*}
  \max_{\vu \in U} \E_{P_{\vu}}
  \sbr{
    \norm{\hat{\vw}-\vu}_2
  }
  & \ \geq \
  \max_{\vu \in U} \E_{P_{\vu}}
  \sbr{
    \norm{\hat{\vw}-\vu}_2
    \mid \cE
  } \cdot \Pr(\cE)
  \\
  & \ \geq \
  \frac1{2\sqrt{2}}
  \del{
    1 -
    \frac{
      C_0 \log\log(n) + 16 \ln|U|
    }{
      \sigma^2\ln|U|
    }
    -
    \frac{\ln2}{\ln|U|}
  }
  \cdot
  \frac12
  \\
  & \ = \
  \frac1{4\sqrt{2}}
  \del{
    1 - \frac{C_0\log\log(n)}{\sigma^2\ln|U|} - \frac{16}{\sigma^2} -
    \frac{\ln2}{\ln|U|}
  }
  \,.
\end{align*}
Plugging in the lower bound for $\ln|U|$ and the upper bound on $\snr =
1/\sigma^2$ completes the proof.
\hfill\qed

\subsection{Uniform case}
\label{sec:lower-proof-uniform}

We now consider the case where $(\vx_i)_{i=1}^n$ are drawn i.i.d.~from the
uniform distribution on $[-1/2,1/2]^d$.\footnote{%
  We actually just need that the marginal distribution of the first coordinate of
  each $\vx_i$ be uniform on $[-1/2,1/2]$.%
}
Again, by homogeneity, we assume without loss of generality that
$\norm{\bar{\vw}}_2 = 1$, so $\snr = 1/\sigma^2$.

The proof is based on the two-point method of \citet{lecam1973convergence} as
described by \citet{yu1997assouad}.
\begin{lemma}[Lemma 1 in~\citep{yu1997assouad}]
  \label{lem:lecam}
  Let $(\Theta,\rho)$ be a pseudometric space, and let $\theta_1, \theta_2 \in
  \Theta$ correspond to probability measures $P_{\theta_1}$ and $P_{\theta_2}$
  on the same space.
  Then for any estimator $\hat{\theta}$ taking values in $\Theta$,
  \begin{equation*}
    \max_{\theta \in \cbr{\theta_1,\theta_2}} \E_{P_{\theta}}
    \sbr[1]{
      \rho(\hat{\theta},\theta)
    }
    \ \geq \
    \frac{1}{2}
    \rho(\theta_1,\theta_2)
    \del{ 1 - \norm{P_{\theta_1} - P_{\theta_2}}_{\tv} }
    \,,
  \end{equation*}
  where $\E_{P_{\theta}}$ denotes expectation with respect to data drawn from
  $P_{\theta}$.
\end{lemma}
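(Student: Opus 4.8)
The plan is to prove the bound directly from the triangle inequality for the pseudometric $\rho$ and the variational characterization of total variation distance, without passing through an explicit hypothesis test. Fix an arbitrary (measurable) estimator $\hat{\theta}$ and abbreviate $r := \rho(\theta_1,\theta_2)$. If either $\E_{P_{\theta_1}}[\rho(\hat{\theta},\theta_1)]$ or $\E_{P_{\theta_2}}[\rho(\hat{\theta},\theta_2)]$ is $+\infty$ the inequality is trivial, so assume both are finite. Let $\mu := P_{\theta_1} + P_{\theta_2}$, which dominates both measures, and write $f_j := dP_{\theta_j}/d\mu$ for $j \in \cbr{1,2}$; here I would invoke the standard identity $\norm{P_{\theta_1} - P_{\theta_2}}_{\tv} = 1 - \int \min(f_1,f_2)\,d\mu$, which follows from $\min(a,b) = \tfrac12(a + b - \abs{a-b})$ together with $\int f_1\,d\mu = \int f_2\,d\mu = 1$.

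The one substantive step is a pointwise bound: for every point $x$ of the sample space, the pseudometric triangle inequality gives $\rho(\hat{\theta}(x),\theta_1) + \rho(\hat{\theta}(x),\theta_2) \ge \rho(\theta_1,\theta_2) = r$. Integrating against $\mu$, and using $\rho \ge 0$ together with $0 \le \min(f_1,f_2) \le f_j$, I would then obtain
\begin{align*}
  \E_{P_{\theta_1}}[\rho(\hat{\theta},\theta_1)] + \E_{P_{\theta_2}}[\rho(\hat{\theta},\theta_2)]
  &\ = \ \int \del{ \rho(\hat{\theta},\theta_1)\,f_1 + \rho(\hat{\theta},\theta_2)\,f_2 }\,d\mu \\
  &\ \ge \ \int \del{ \rho(\hat{\theta},\theta_1) + \rho(\hat{\theta},\theta_2) }\min(f_1,f_2)\,d\mu \\
  &\ \ge \ r \int \min(f_1,f_2)\,d\mu
  \ = \ r\,\del{ 1 - \norm{P_{\theta_1}-P_{\theta_2}}_{\tv} } .
\end{align*}
Since $\max\cbr{a,b} \ge \tfrac12(a+b)$, the left-hand side is at most $2\max_{\theta \in \cbr{\theta_1,\theta_2}} \E_{P_{\theta}}[\rho(\hat{\theta},\theta)]$; dividing the displayed inequality by $2$ yields exactly the claimed bound $\tfrac12\,\rho(\theta_1,\theta_2)\,(1 - \norm{P_{\theta_1}-P_{\theta_2}}_{\tv})$.

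There is no genuine obstacle: the proof is only a few lines, and its ``difficulty'' is entirely in choosing the right argument — weighting the triangle inequality by the overlap density $\min(f_1,f_2)$, rather than first reducing to a test and losing a factor of two. The only points meriting a moment's care are (i) adopting the $[0,1]$-valued convention for $\norm{\cdot}_{\tv}$, so that $1 - \norm{P_{\theta_1}-P_{\theta_2}}_{\tv}$ is exactly the overlap mass $\int\min(f_1,f_2)\,d\mu$ appearing above; and (ii) measurability of $x \mapsto \rho(\hat{\theta}(x),\theta_j)$, which holds because $\hat{\theta}$ is measurable and $\rho(\cdot,\theta_j)$ is $1$-Lipschitz (by the reverse triangle inequality) and hence Borel. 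To allow randomized estimators, note that the pointwise inequality $\rho(\hat{\theta},\theta_1)+\rho(\hat{\theta},\theta_2) \ge r$ continues to hold for every realization of the external randomness, so the same conclusion follows after one further application of Fubini's theorem.
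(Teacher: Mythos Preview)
Your proof is correct. Note, however, that the paper does not supply its own proof of this lemma: it is quoted as Lemma~1 of \citet{yu1997assouad} and used as a black box in \Cref{sec:lower-proof-uniform}. So there is no ``paper's approach'' to compare against; your argument---weighting the pointwise triangle inequality by the overlap density $\min(f_1,f_2)$ and invoking the identity $\int\min(f_1,f_2)\,d\mu = 1 - \norm{P_{\theta_1}-P_{\theta_2}}_{\tv}$---is the standard short proof of Le~Cam's two-point bound and would be a perfectly acceptable inclusion.
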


We apply \Cref{lem:lecam} with $(\Theta,\rho) = (S^{d-1},\norm{\cdot}_2)$.
As before, we define for each $\vu \in \cbr[0]{ \ve_1, -\ve_1 }$:
\begin{itemize}
  \item
    $P_{\vu}$, the distribution of $((\vx_i)_{i=1}^n,\bag{y_i}_{i=1}^n)$ when
    $\bar{\vw} = \vu$;

  \item
    $Q_{\vu}$, the corresponding conditional distribution of
    $\bag{y_i}_{i=1}^n$ given $(\vx_i)_{i=1}^n$;

  \item
    $\tilde Q_{\vu}$, the corresponding conditional distribution of $\vy$ given
    $(\vx_i)_{i=1}^n$.

\end{itemize}

Let $\cE$ be the event in which
\begin{equation*}
  \norm{\vh_{\ve_1}^\uparrow - \vh_{-\ve_1}^\uparrow}_2^2
  \ \leq \
  1
  \,.
\end{equation*}
By \Cref{lem:w2-uniform-emp} (below), $\Pr(\cE) \geq 1/2$.
Moreover, since $P_{\ve_1}(\cE) = P_{-\ve_1}(\cE) = \Pr(\cE)$,
\begin{align*}
  \norm{P_{\ve_1} - P_{-\ve_1}}_{\tv}
  & \ \leq \
  \norm{P_{\ve_1}(\cdot \mid \cE) - P_{-\ve_1}(\cdot \mid \cE)}_{\tv}
  \Pr(\cE)
  + (1 - \Pr(\cE))
  \\
  & \ \leq \
  \sqrt{ \frac12 \KL(P_{\ve_1}(\cdot \mid \cE), P_{-\ve_1}(\cdot \mid \cE)) }
  \Pr(\cE)
  + (1 - \Pr(\cE))
  \\
  & \ \leq \
  \sqrt{\frac12 \cdot \frac1{2\sigma^2}} \Pr(\cE) + (1 - \Pr(\cE))
  \\
  & \ \leq \
  \frac12\del{ 1 + \frac1{\sqrt2} }
  \,.
\end{align*}
Above, the second inequality follows from Pinsker's inequality; the third
inequality uses \eqref{eq:kl-data-processing} and the fact
$\norm[0]{\vh_{\ve_1}^\uparrow - \vh_{-\ve_1}^\uparrow}_2^2 \leq 1$ on the
event $\cE$, the fourth inequality uses the assumption that $\snr = 1/\sigma^2
\leq 2$ and the fact $\Pr(\cE) \geq 1/2$.
We conclude by \Cref{lem:lecam} that
\begin{equation*}
  \max_{\vu \in \cbr{\ve_1,-\ve_2}} \E_{P_{\vu}}
  \sbr{
    \norm{\hat{\vw}-\vu}_2
  }
  \ \geq \
  \frac12 \cdot 2 \cdot \del{ 1 - \frac12\del{ 1 + \frac1{\sqrt2} } }
  \ = \
  \frac12 \del{ 1 - \frac1{\sqrt2} }
  \,,
\end{equation*}
completing the proof.
\hfill\qed

\subsection{Auxiliary results}

\begin{lemma}
  \label{lem:w2-normal-emp}
  There is an absolute constant $C_0>0$ such that the following holds.
  Let $n \geq 3$, and let $\vX$ be a random $n \times d$ matrix of
  i.i.d.~$\Normal(0,1)$ random variables.
  For any unit vectors $\vu, \vu' \in S^{d-1}$ and $\delta \in \intoo{0,1}$,
  \begin{equation*}
    \Pr\del{
      \norm{(\vX\vu)^\uparrow - (\vX\vu')^\uparrow}_2
      \geq \sqrt{C_0 \log\log(n)} + \sqrt{8 \ln(1/\delta)}
    }
    \ \leq \
    \delta
    \,.
  \end{equation*}
\end{lemma}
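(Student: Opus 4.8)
The plan is to read $\|(\vX\vu)^\uparrow - (\vX\vu')^\uparrow\|_2$ as a rescaled Wasserstein-$2$ distance between two empirical measures, bound its expectation by triangulating through the true $\Normal(0,1)$ law, and then pass to a high-probability statement via Gaussian concentration of Lipschitz functions. First, let $\mu_n$ and $\nu_n$ denote the empirical measures of the $n$ entries of $\vX\vu$ and of $\vX\vu'$ respectively. Since $\vu,\vu'$ are unit vectors and the rows of $\vX$ are i.i.d.\ $\Normal(\v0,\vI_d)$, each of $\vX\vu$ and $\vX\vu'$ has i.i.d.\ $\Normal(0,1)$ entries (though the two vectors are correlated), and by \Cref{lem:w2-rep} we have $\|(\vX\vu)^\uparrow - (\vX\vu')^\uparrow\|_2^2 = n\, W_2(\mu_n,\nu_n)^2$. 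So it suffices to control the random variable $g(\vX) := \sqrt n\, W_2(\mu_n,\nu_n)$.

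Second, I would show $g$ is $2$-Lipschitz as a function of $\vX \in \R^{n \times d}$ with the Frobenius norm. The only elementary fact needed is that the sorting map $\va \mapsto \va^\uparrow$ is $1$-Lipschitz on $(\R^n,\|\cdot\|_2)$, which is immediate from \Cref{lem:w2-rep} (sorting both vectors can only decrease the $\ell_2$ distance). Combining this with the triangle inequality for $W_2$ and with $\|(\vX-\vX')\vu\|_2 \le \|\vX-\vX'\|_2 \le \|\vX-\vX'\|_F$ (and likewise for $\vu'$) gives
\[
  |g(\vX) - g(\vX')| \ \le \ \|(\vX-\vX')\vu\|_2 + \|(\vX-\vX')\vu'\|_2 \ \le \ 2\|\vX-\vX'\|_F \,.
\]
Since the entries of $\vX$ are i.i.d.\ standard normal, $\vX$ is a standard Gaussian vector in $\R^{nd}$, so the standard concentration inequality for Lipschitz functions of a Gaussian vector yields $\Pr(g(\vX) \ge \E[g(\vX)] + t) \le \exp(-t^2/8)$ for all $t \ge 0$; taking $t = \sqrt{8\ln(1/\delta)}$ reduces the whole lemma to showing $\E[g(\vX)] \le \sqrt{C_0\log\log(n)}$.

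Third, I would bound the expectation. Let $\gamma := \Normal(0,1)$ and let $\hat\gamma_n$ denote the empirical measure of $n$ i.i.d.\ draws from $\gamma$. By the triangle inequality for $W_2$ through $\gamma$, linearity of expectation, and Jensen's inequality,
\[
  \E[g(\vX)] \ \le \ \E[\sqrt n\, W_2(\mu_n,\gamma)] + \E[\sqrt n\, W_2(\nu_n,\gamma)] \ = \ 2\,\E[\sqrt n\, W_2(\hat\gamma_n,\gamma)] \ \le \ 2\sqrt{\,n\,\E[W_2(\hat\gamma_n,\gamma)^2]\,} \,.
\]
The input here is the classical sharp estimate $n\,\E[W_2(\hat\gamma_n,\gamma)^2] = O(\log\log(n))$ for $n \ge 3$, i.e.\ the known rate for the one-dimensional quadratic transport cost of the empirical measure to a Gaussian, established by \citet{bobkov2014one} (the same reference as \Cref{lem:w2-rep}); this is also what forces the hypothesis $n \ge 3$. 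Plugging it in gives $\E[g(\vX)] \le \sqrt{C_0\log\log(n)}$ for a suitable absolute constant $C_0$, which combined with the concentration bound above proves the claimed tail estimate.

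The main obstacle is the third step: one genuinely needs the $\log\log(n)$ rate rather than the easier $\log(n)$ rate that a generic sub-Gaussian argument would give, and this sharper rate is special to the Gaussian tail. If a self-contained derivation is preferred to citing \citet{bobkov2014one}, one writes $W_2(\hat\gamma_n,\gamma)^2 = \sum_{k=1}^n \int_{(k-1)/n}^{k/n}(X_{(k)} - \Phi^{-1}(t))^2\,dt$ in terms of the order statistics $X_{(1)}\le\dots\le X_{(n)}$ and the normal quantile function $\Phi^{-1}$, splits the sum into a bulk part and the two extreme parts (indices $k$ within $O(1)$ of $1$ or $n$), and estimates each with standard moment bounds for Gaussian order statistics; the $\log\log(n)$ emerges from the spacing of the largest order statistics near $\pm\sqrt{2\ln(n)}$. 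That calculation is where essentially all the work lies — the Wasserstein reformulation, the Lipschitz estimate, and the Gaussian concentration step are all routine.
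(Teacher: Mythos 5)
Your proposal is correct and follows essentially the same route as the paper's proof: rewrite $\norm{(\vX\vu)^\uparrow - (\vX\vu')^\uparrow}_2$ as $\sqrt{n}\,W_2(\mu_n,\nu_n)$ via \Cref{lem:w2-rep}, bound the expectation by triangulating through the Gaussian law and invoking the $O(\log\log n / n)$ estimate of \citet{bobkov2014one} (\Cref{lem:w2-normal}), show the map is $2$-Lipschitz in Frobenius norm using the $1$-Lipschitz property of sorting (again from \Cref{lem:w2-rep}), and conclude with Gaussian concentration for Lipschitz functions (\Cref{lem:normal-conc}). The only cosmetic difference is that you phrase the Lipschitz step via the triangle inequality for $W_2$ rather than the reverse triangle inequality for norms, and you sketch a self-contained derivation of the Bobkov--Ledoux $\log\log n$ rate, which the paper simply cites.
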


The proof of \Cref{lem:w2-normal-emp} uses the following lemmas.

\begin{lemma}[Corollary 6.14 in~\citep{bobkov2014one}]
  \label{lem:w2-normal}
  There is an absolute constant $C>0$ such that the following holds.
  If $n\geq3$, $\mu$ is the standard Gaussian measure on $\R$, and $\mu_n$ is
  the empirical measure for a size-$n$ i.i.d.~sample from $\mu$, then
  \begin{equation*}
    \E\sbr{ W_2(\mu_n,\mu)^2 }
    \ \leq \
    \frac{C \log\log(n)}{n}
    \,.
  \end{equation*}
\end{lemma}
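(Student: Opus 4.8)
The plan is to reduce the bound to a computation with order statistics and the Gaussian quantile function. Let $\Phi$ and $\phi$ be the CDF and density of $\mu$. Since optimal transport on the line is realized by the monotone coupling, $W_2(\mu_n,\mu)^2 = \int_0^1 \bigl(F_n^{-1}(t)-\Phi^{-1}(t)\bigr)^2\,dt$, where the empirical quantile function $F_n^{-1}$ is the step function equal to the $k$-th order statistic $X_{(k)}$ of the sample on $((k-1)/n,k/n]$. Coupling the sample through uniforms, write $X_{(k)}=\Phi^{-1}(U_{(k)})$ with $U_{(1)}\le\dots\le U_{(n)}$ the order statistics of $n$ i.i.d.\ Uniform$[0,1]$ variables, so that $U_{(k)}\sim\mathrm{Beta}(k,n+1-k)$ has mean $t_k:=k/(n+1)$ and variance at most $t_k(1-t_k)/n$. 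This gives
\[
  \E\bigl[W_2(\mu_n,\mu)^2\bigr]
  \ = \ \sum_{k=1}^{n} \E\int_{(k-1)/n}^{k/n}\bigl(\Phi^{-1}(U_{(k)})-\Phi^{-1}(t)\bigr)^2\,dt,
\]
and it remains to estimate each summand. The basic tool is the identity $(\Phi^{-1})'(v)=1/\phi(\Phi^{-1}(v))$ together with the asymptotics $(\Phi^{-1})'(v)\asymp\bigl(\min(v,1-v)\,\sqrt{1+\log(1/\min(v,1-v))}\bigr)^{-1}$.

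For a ``bulk'' index $k$, meaning $\min(k,n+1-k)\ge K$ for a suitable absolute constant $K$, I would use Cauchy--Schwarz on the mean value representation,
\[
  \bigl(\Phi^{-1}(U_{(k)})-\Phi^{-1}(t)\bigr)^2 \ \le \ |U_{(k)}-t|\int_{U_{(k)}\wedge t}^{U_{(k)}\vee t}\bigl[(\Phi^{-1})'(v)\bigr]^2\,dv,
\]
and then split on the event $\cG_k$ that $\min(U_{(k)},1-U_{(k)})$ is within a constant factor of $p_k:=\min(t_k,1-t_k)$. On $\cG_k$ the integrand $[(\Phi^{-1})']^2$ over the relevant range is $\lesssim [(\Phi^{-1})'(t_k)]^2\asymp \bigl(p_k^2(1+\log(1/p_k))\bigr)^{-1}$, and $\E|U_{(k)}-t|^2\lesssim p_k/n$ (from the Beta variance and $|t-t_k|\le 1/n$), so the contribution of index $k$ to $\E[W_2^2]$ is $\lesssim n^{-2}\bigl(p_k(1+\log(1/p_k))\bigr)^{-1}$. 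On $\cG_k^c$, whose probability I would bound by a Bernstein/Chernoff estimate for the Binomial$(n,p_k)$ that governs $U_{(k)}$, I would use the crude deterministic bound $(\Phi^{-1}(u))^2=O(\log n)$ on the range together with $\E[X_{(k)}^2]=O(\log n)$ to show this part is negligible.

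Summing the bulk estimates: since consecutive $t_k$ are spaced by $1/(n+1)$, $\sum_{k}n^{-2}\bigl(p_k(1+\log(1/p_k))\bigr)^{-1}\lesssim n^{-1}\int_{K/n}^{1/2}\bigl(t(1+\log(1/t))\bigr)^{-1}\,dt$, and the substitution $u=1+\log(1/t)$ shows this integral is $O(\log\log n)$; this is the origin of the iterated logarithm, and is precisely the truncation at resolution $1/n$ of the divergent integral $\int_0 \frac{dt}{t\log(1/t)}$ (a distribution whose tails were light enough would instead give the clean rate $1/n$). For the $O(K)$ extreme indices ($k\le K$ or $k>n-K$), both $X_{(k)}$ and $\Phi^{-1}(t)$ on the relevant interval have magnitude $\Theta(\sqrt{\log n})$, but because $U_{(k)}$ and $t$ both lie in an interval of length $O(1/n)$ with overwhelming probability, the leading $\sqrt{2\log n}$ terms cancel in $X_{(k)}-\Phi^{-1}(t)$, leaving an $O(1/\sqrt{\log n})$ fluctuation whose square, integrated over $t$ and averaged, contributes only $O(1/(n\log n))$ per index. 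Adding the two parts yields $\E[W_2(\mu_n,\mu)^2]=O(\log\log n/n)$.

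I expect the main obstacle to be the near-tail regime, namely indices $k$ with $\min(k,n+1-k)$ between the constant $K$ and, say, $\sqrt n$: there the Binomial$(n,p_k)$ is close to Poisson and does not concentrate in the relative sense, so the event split above is delicate and the ``bad'' contribution must be controlled by a more careful argument, for instance by Fubini-integrating $[(\Phi^{-1})'(v)]^2$ against the exact tail probabilities $\Pr(U_{(k)}\ge v)$ rather than through a single event. An alternative, more black-box route is to invoke the general upper bound of \citet{bobkov2014one}, $\E[W_2(\mu_n,\mu)^2]\le \tfrac{C}{n}\bigl(1+\int_{1/n}^{1-1/n}\tfrac{t(1-t)}{\phi(\Phi^{-1}(t))^2}\,dt\bigr)$ valid for log-concave $\mu$, and then simply evaluate the Gaussian integral, which is $\Theta(\log\log n)$ by the same substitution.
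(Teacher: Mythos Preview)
The paper does not prove this lemma at all: it is stated as a direct citation (Corollary~6.14 of \citet{bobkov2014one}) and used as a black box in the proof of \Cref{lem:w2-normal-emp}. So there is no ``paper's own proof'' to compare against.

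Your sketch is a reasonable reconstruction of how such a bound is actually established in the Bobkov--Ledoux monograph. The quantile representation, the coupling through uniform order statistics, the asymptotics $(\Phi^{-1})'(v)\asymp\bigl(\min(v,1-v)\sqrt{\log(1/\min(v,1-v))}\bigr)^{-1}$, and the identification of the $\log\log n$ as coming from $\int_{c/n}^{1/2}\frac{dt}{t\log(1/t)}$ are all correct and standard. You are also right that the near-tail regime (indices with $\min(k,n+1-k)$ of moderate size) is the delicate part and that a naive good/bad event split does not quite close there; the Fubini-against-tail-probabilities fix you describe is one way through. The ``alternative, more black-box route'' you mention at the end --- invoking the general log-concave upper bound $\E[W_2(\mu_n,\mu)^2]\le Cn^{-1}\bigl(1+\int_{1/n}^{1-1/n}\frac{t(1-t)}{\phi(\Phi^{-1}(t))^2}\,dt\bigr)$ and evaluating the Gaussian integral --- is in fact precisely the content of the cited corollary, so that route is not so much an alternative as the actual statement being quoted.

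In short: your proposal is correct in spirit and would yield a self-contained proof, but for the purposes of this paper none of it is needed; the lemma is simply imported from the literature.
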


\begin{lemma}[Eq.~2.35 in~\citep{ledoux2000concentration}]
  \label{lem:normal-conc}
  Let $\vZ \sim \Normal(\v0,\vI_p)$ be a standard normal random vector in
  $\R^p$, and $f \colon \R^p \to \R$ be $L$-Lipschitz with respect to the
  Euclidean metric.
  Then for any $t>0$,
  \begin{equation*}
    \Pr\del{ f(\vZ) \geq \E f(\vZ) + t }
    \ \leq \ e^{-t^2/(2L^2)} \,.
  \end{equation*}
\end{lemma}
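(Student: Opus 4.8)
The plan is to derive this sharp Gaussian concentration bound from the Gaussian logarithmic Sobolev inequality via the Herbst entropy argument. By scaling it suffices to treat $L=1$, and by a routine smoothing step we may assume $f$ is $C^\infty$ with $\norm{\nabla f}_2\le 1$ everywhere: replace $f$ by its convolution with a narrow smooth mollifier (equivalently, by $P_s f$ for the heat semigroup with small $s$), which is again $1$-Lipschitz, is smooth, converges to $f$ pointwise, and has $\E$ converging to $\E f(\vZ)$; since all the $f$'s in sight are uniformly $1$-Lipschitz, the dominated convergence theorem lets the moment generating function bound below pass to the limit, and Chernoff's bound then gives the tail for the original $f$.

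The first real step is the Gaussian log-Sobolev inequality: for the standard Gaussian measure on $\R^p$ and every smooth $g$ with $\E[g(\vZ)^2]<\infty$,
\[
  \operatorname{Ent}\del{g^2}
  \ :=\ \E\sbr{g(\vZ)^2\log g(\vZ)^2}-\E\sbr{g(\vZ)^2}\log\E\sbr{g(\vZ)^2}
  \ \le\ 2\,\E\sbr{\norm{\nabla g(\vZ)}_2^2}\,.
\]
I would prove this by first noting that the entropy functional is subadditive over product measures (tensorization), which reduces the claim to dimension $p=1$, and then obtaining the one-dimensional inequality as a limit, via the central limit theorem, of the elementary two-point logarithmic Sobolev inequality on $\cbr{-1,+1}$ applied to $g$ evaluated at normalized $\pm1$ sums. (An equally good route is a direct Ornstein--Uhlenbeck semigroup computation: differentiate $t\mapsto\E[Q_t(g^2)\log Q_t(g^2)]$ along the OU semigroup $Q_t$ and use the commutation relation $\nabla Q_t=e^{-t}Q_t\nabla$.)

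With the log-Sobolev inequality in hand, set $H(\lambda):=\E[e^{\lambda f(\vZ)}]$ (finite for all $\lambda$ since $f$ is Lipschitz and $\E e^{\lambda\norm{\vZ}_2}<\infty$) and apply the inequality to $g:=e^{\lambda f/2}$. Then $\norm{\nabla g}_2^2=\frac{\lambda^2}{4}e^{\lambda f}\norm{\nabla f}_2^2\le\frac{\lambda^2}{4}e^{\lambda f}$ and $\operatorname{Ent}(e^{\lambda f})=\lambda H'(\lambda)-H(\lambda)\log H(\lambda)$, so the inequality reads $\lambda H'(\lambda)-H(\lambda)\log H(\lambda)\le\frac{\lambda^2}{2}H(\lambda)$. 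Dividing by $\lambda^2 H(\lambda)$ shows that $K(\lambda):=\lambda^{-1}\log H(\lambda)$ satisfies $K'(\lambda)\le\frac12$ on $(0,\infty)$, while a Taylor expansion of $H$ at $0$ gives $K(\lambda)\to\E f(\vZ)$ as $\lambda\downarrow0$; hence $\log H(\lambda)\le\lambda\,\E f(\vZ)+\lambda^2/2$ for all $\lambda>0$. Markov's inequality then yields $\Pr(f(\vZ)\ge\E f(\vZ)+t)\le e^{-\lambda t+\lambda^2/2}$ for every $\lambda>0$, and optimizing at $\lambda=t$ gives $e^{-t^2/2}$; rescaling reinstates the $1/(2L^2)$ in the exponent.

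The main obstacle is the Gaussian log-Sobolev inequality itself --- the Herbst step, the Chernoff bound, and the smoothing reduction are all routine, but the log-Sobolev inequality carries the entire content and requires one of the genuine arguments sketched above. One could instead base the whole proof on the Gaussian isoperimetric inequality of Sudakov--Tsirelson and Borell, from which the stated bound (indeed the stronger version with $\E f(\vZ)$ replaced by a median and $e^{-t^2/2}$ replaced by the Gaussian tail $\Pr(\Normal(0,1)>t)$) follows by the standard passage from Lipschitz superlevel sets to half-space enlargements --- but then proving Gaussian isoperimetry becomes the hard step. A purely elementary rotation-interpolation argument between $\vZ$ and an independent copy is also available and avoids both, but it only produces the weaker constant $e^{-2t^2/\pi^2}$ and so does not establish the statement as worded.
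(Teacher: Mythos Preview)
Your argument is correct: the log-Sobolev inequality combined with the Herbst differential inequality is the standard sharp route, and your execution of each step (the smoothing reduction, the entropy computation $\operatorname{Ent}(e^{\lambda f})=\lambda H'(\lambda)-H(\lambda)\log H(\lambda)$, the integration of $K'(\lambda)\le 1/2$, and the Chernoff optimization) is accurate.

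There is nothing to compare against in the paper itself: the lemma is quoted verbatim as Eq.~2.35 of Ledoux's concentration survey and is invoked without proof. Your sketch is in fact the very argument Ledoux gives there, so you have not taken a different route so much as supplied the proof the paper outsources. Your side remarks are also accurate --- the Borell--Sudakov--Tsirelson isoperimetric approach yields the stronger median-centered bound, and the Maurey--Pisier interpolation gives only $e^{-2t^2/\pi^2}$ --- but neither is needed here.
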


\begin{proof}[Proof of \Cref{lem:w2-normal-emp}]
  Fix unit vectors $\vu$ and $\vu'$.
  Observe that the entries of each of $\vX\vu$ and $\vX\vu'$ comprises an
  i.i.d.~sample from $\Normal(0,1) =: \mu$; let $\mu_n$ and $\nu_n$ denote the
  respective empirical measures.
  Define the function $f \colon \bbR^{n \times d} \to \bbR$ by
  \begin{equation*}
    f(\vA) \ := \ 
    \norm{(\vA\vu)^\uparrow - (\vA\vu')^\uparrow}_2
    \,.
  \end{equation*}
  Then, by \Cref{lem:w2-rep}, the triangle inequality, Jensen's inequality, and
  \Cref{lem:w2-normal},
  \begin{equation*}
    \frac{\E f(\vX)}{\sqrt{n}}
    \ = \
    \E W_2(\mu_n,\nu_n)
    \ \leq \
    \E W_2(\mu_n,\mu)
    + \E W_2(\nu_n,\mu)
    \ \leq \
    2 \sqrt{ \E W_2(\mu_n,\mu)^2 }
    \ \leq \
    \sqrt{\frac{C_0 \log\log(n)}{n}}
    \,.
  \end{equation*}
  Moreover, for any $\vA, \vA' \in \R^{n \times d}$,
  \begin{align*}
    f(\vA) - f(\vA')
    & \ \leq \
    \norm{(\vA\vu)^\uparrow - (\vA\vu')^\uparrow - (\vA'\vu)^\uparrow + (\vA'\vu')^\uparrow}_2
    \\
    & \ \leq \
    \norm{(\vA\vu)^\uparrow - (\vA'\vu)^\uparrow}_2
    +
    \norm{(\vA\vu')^\uparrow - (\vA'\vu')^\uparrow}_2
    \\
    & \ \leq \
    \norm{\vA\vu - \vA'\vu}_2
    +
    \norm{\vA\vu' - \vA'\vu'}_2
    \\
    & \ \leq \
    2\norm{\vA-\vA'}_F
    \,,
  \end{align*}
  where the first two steps follow from the triangle inequality, the third step
  uses \Cref{lem:w2-rep}, and $\norm{\cdot}_F$ denotes the Frobenius norm.
  Therefore, $f$ is $2$-Lipschitz with respect to the Euclidean metric on $\R^{n
  \times d}$.
  By \Cref{lem:normal-conc}, for any $\delta \in \intoo{0,1}$,
  \begin{equation*}
    \Pr\del{ f(\vX) \geq \E f(\vX) + \sqrt{8\ln(1/\delta)} } \ \leq \ \delta
    \,.
  \end{equation*}
  Combining this with the upper bound on $\E f(\vX)$ completes the proof.
\end{proof}

\begin{lemma}[Eqs.~1.7.3 and 1.7.5 in \citep{reiss2012approximate}]
  \label{lem:uniform-order}
  Let $X_1, X_2, \dotsc, X_n$ be i.i.d.~draws from the uniform distribution on
  $[0,1]$.
  For any $r \in [n]$,
  \begin{equation*}
    \E[ X_{(r)} ] \ = \ \frac{r}{n+1} \,,
  \end{equation*}
  and for any $r, s \in [n]$ with $r \leq s$,
  \begin{equation*}
    \cov(X_{(r)},X_{(s)}) \ = \ \frac{r}{n+1} \cdot \del{ 1 - \frac{s}{n+1} }
    \cdot \frac{1}{n+2} \,.
  \end{equation*}
\end{lemma}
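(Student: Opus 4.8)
The plan is to derive both identities from the classical Rényi representation of uniform order statistics in terms of i.i.d.\ exponential spacings, which reduces everything to elementary moment computations for Gamma variables. Let $G_1, G_2, \dotsc, G_{n+1}$ be i.i.d.~$\mathrm{Exp}(1)$ random variables and set $S_k := G_1 + \dotsb + G_k$ for $k \in \{1, \dotsc, n+1\}$. The structural fact I would invoke is that $(X_{(1)}, \dotsc, X_{(n)})$ has the same joint law as $(S_1/S_{n+1}, \dotsc, S_n/S_{n+1})$, and moreover that this vector of normalized partial sums is \emph{independent} of the total $S_{n+1}$. Granting this, I note that $S_k \sim \Gamma(k,1)$, so $\E[S_k] = k$ and $\E[S_k^2] = k(k+1)$, and that $S_r$ is independent of $S_s - S_r$ whenever $r \le s$.

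For the mean: since $X_{(r)}$ has the same law as $S_r/S_{n+1}$ and the latter is independent of $S_{n+1}$, I get $\E[S_r] = \E[X_{(r)}]\,\E[S_{n+1}]$, hence $\E[X_{(r)}] = r/(n+1)$. (Equivalently, one could simply note that $X_{(r)}$ has the $\mathrm{Beta}(r, n+1-r)$ distribution and read off its mean.)

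For the covariance with $r \le s$: I would write $S_r S_s = (S_r/S_{n+1})(S_s/S_{n+1})\,S_{n+1}^2$, take expectations, and use independence of the normalized sums from $S_{n+1}$ to obtain $\E[X_{(r)} X_{(s)}] = \E[S_r S_s]/\E[S_{n+1}^2]$. Then $\E[S_r S_s] = \E[S_r^2] + \E[S_r]\,\E[S_s - S_r] = r(r+1) + r(s-r) = r(s+1)$ and $\E[S_{n+1}^2] = (n+1)(n+2)$, so $\E[X_{(r)} X_{(s)}] = \tfrac{r(s+1)}{(n+1)(n+2)}$. Subtracting $\E[X_{(r)}]\,\E[X_{(s)}] = \tfrac{rs}{(n+1)^2}$ and simplifying $\tfrac{s+1}{n+2} - \tfrac{s}{n+1} = \tfrac{n+1-s}{(n+1)(n+2)}$ gives $\cov(X_{(r)}, X_{(s)}) = \tfrac{r}{n+1}\del[1]{1 - \tfrac{s}{n+1}}\tfrac{1}{n+2}$, exactly as stated.

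The only delicate point is the structural input itself --- the distributional identity with the uniform order statistics together with the independence of $(S_k/S_{n+1})_{k \le n}$ from $S_{n+1}$. I would either cite it or verify it via the change of variables $(g_1, \dotsc, g_{n+1}) \mapsto (s_{n+1};\, s_1/s_{n+1}, \dotsc, s_n/s_{n+1})$ applied to the product density $e^{-(g_1 + \dotsb + g_{n+1})}$, whose Jacobian causes the joint density to factor into a function of $s_{n+1}$ times a function of the ratios. A self-contained alternative that avoids this fact entirely is to integrate directly against the explicit joint density of $(X_{(r)}, X_{(s)})$, which is proportional to $u^{r-1}(v-u)^{s-r-1}(1-v)^{n-s}$ on $\{0 \le u \le v \le 1\}$, and to recognize each resulting integral as a Beta/Dirichlet integral; this is routine but heavier on bookkeeping of normalizing constants.
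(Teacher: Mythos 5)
Your derivation is correct. The paper does not actually prove this lemma; it simply cites Eqs.~1.7.3 and 1.7.5 of \citet{reiss2012approximate}, so there is no ``paper's proof'' to compare against. Your argument via the R\'enyi representation $(X_{(1)},\dotsc,X_{(n)}) \stackrel{d}{=} (S_1/S_{n+1},\dotsc,S_n/S_{n+1})$ with $(S_k/S_{n+1})_{k\le n}$ independent of $S_{n+1}$ is a standard and clean route to exactly these moment formulas: the computation $\E[S_rS_s] = r(s+1)$, $\E[S_{n+1}^2]=(n+1)(n+2)$, and the algebraic simplification $\tfrac{s+1}{n+2}-\tfrac{s}{n+1}=\tfrac{n+1-s}{(n+1)(n+2)}$ all check out and recover the stated mean and covariance. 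Your fallback --- direct integration against the joint density $u^{r-1}(v-u)^{s-r-1}(1-v)^{n-s}$ on $\{0\le u\le v\le 1\}$ with Beta normalization --- would also work and is likely closer to how the cited textbook presents it, but the exponential-spacings route is the less bookkeeping-heavy of the two.
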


\begin{lemma}
  \label{lem:w2-uniform-emp}
  Let $U_1, U_2, \dotsc, U_n$ be i.i.d.~draws from the uniform distribution on
  $[-1/2,1/2]$.
  Then
  \begin{equation*}
    \Pr\del{
      \sum_{i=1}^n
      \del{ U_{(1)} + U_{(n+1-i)} }^2
      \geq 1
    }
    \ \leq \
    \frac12
    \,.
  \end{equation*}
\end{lemma}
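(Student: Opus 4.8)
The plan is to bound the probability by Markov's inequality, so that it suffices to show $\E[S] \le 1/2$, where $S := \sum_{i=1}^n \del{U_{(i)} + U_{(n+1-i)}}^2$ is the quantity of interest (in the application, the sorted entries of $-\vh_{\ve_1}$ are $-U_{(n)} \le \dots \le -U_{(1)}$, so that $S = \norm{\vh_{\ve_1}^\uparrow - \vh_{-\ve_1}^\uparrow}_2^2$).

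First I would pass to the uniform distribution on $[0,1]$. Writing $U_{(i)} = X_{(i)} - 1/2$ with $X_{(1)} \le \dots \le X_{(n)}$ the order statistics of an i.i.d.\ $\mathrm{Unif}[0,1]$ sample, we have $U_{(i)} + U_{(n+1-i)} = X_{(i)} + X_{(n+1-i)} - 1$. The key structural fact is the reflection symmetry of uniform order statistics, namely that $(X_{(1)},\dots,X_{(n)})$ has the same law as $(1-X_{(n)},\dots,1-X_{(1)})$; this shows that $X_{(i)} + X_{(n+1-i)}$ is symmetric about $1$, hence has mean exactly $1$, and therefore $\E\sbr{\del{U_{(i)} + U_{(n+1-i)}}^2}$ equals the variance of $X_{(i)} + X_{(n+1-i)}$, i.e.\ $\cov(X_{(i)},X_{(i)}) + 2\,\cov(X_{(i)},X_{(n+1-i)}) + \cov(X_{(n+1-i)},X_{(n+1-i)})$.

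Next I would substitute the exact moment formulas from \Cref{lem:uniform-order}. With $m := \min\cbr{i,\,n+1-i}$ (so $\max\cbr{i,\,n+1-i} = n+1-m$ and $m \le (n+1)/2$), those formulas give $\cov(X_{(i)},X_{(i)}) = \cov(X_{(n+1-i)},X_{(n+1-i)}) = \tfrac{m(n+1-m)}{(n+1)^2(n+2)}$ and $\cov(X_{(i)},X_{(n+1-i)}) = \tfrac{m^2}{(n+1)^2(n+2)}$. The cross terms collapse, since $2m(n+1-m) + 2m^2 = 2m(n+1)$, and therefore $\E\sbr{\del{U_{(i)}+U_{(n+1-i)}}^2} = \tfrac{2m}{(n+1)(n+2)}$. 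Summing over $i$ and using the elementary identity $\sum_{i=1}^n \min\cbr{i,\,n+1-i} = \floor{(n+1)^2/4} \le \tfrac{(n+1)(n+2)}{4}$ (checked by splitting on the parity of $n$), we obtain $\E[S] \le \tfrac{2}{(n+1)(n+2)} \cdot \tfrac{(n+1)(n+2)}{4} = 1/2$, and Markov's inequality finishes the proof.

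This is essentially a second-moment computation, so there is no substantive obstacle; the only places needing care are the index bookkeeping in \Cref{lem:uniform-order} (its covariance formula is stated for $r \le s$, so one plugs in $r = m$, $s = n+1-m$, and separately observes that $\cov(X_{(i)},X_{(i)}) = \cov(X_{(n+1-i)},X_{(n+1-i)})$ regardless of which of $i$, $n+1-i$ is smaller) and the cancellation $2m(n+1-m) + 2m^2 = 2m(n+1)$ that yields the clean per-term value. Invoking the reflection symmetry at the outset is what makes the mean of $X_{(i)} + X_{(n+1-i)}$ transparently equal to $1$ and spares us from ever handling first-moment cross terms.
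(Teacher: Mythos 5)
Your proof is correct and reaches the exact same final quantity as the paper, but the organization of the second-moment computation is genuinely different and, in my view, cleaner. The paper expands $(U_{(i)}+U_{(n+1-i)})^2$ directly, collapses $\sum_i \E[U_{(i)}^2]$ to $n\,\E[U_1^2]=n/12$, and then has to compute $\E[U_{(i)}U_{(n+1-i)}]$ separately by combining the mean and covariance formulas from \Cref{lem:uniform-order}, which produces the squared-mean term $-(i/(n+1)-1/2)^2$ that must then be summed and cancelled against the $n/6$ piece. Your route notices at the outset (via the reflection symmetry $X_{(i)}\overset{d}{=}1-X_{(n+1-i)}$) that $U_{(i)}+U_{(n+1-i)}$ has mean zero, so the second moment is literally a variance; the cross term and the two diagonal terms then combine per-$i$ to the tidy expression $2m_i/((n+1)(n+2))$ with $m_i=\min\{i,n+1-i\}$, and the mean terms never appear. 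Both approaches pass through the same covariance formula and produce the same exact value of $\E[S]$ (namely $\tfrac12(1-\tfrac1{n+1})$ for even $n$ and $\tfrac12(1-\tfrac1{n+2})$ for odd $n$, which your bound $\sum_i m_i=\lfloor(n+1)^2/4\rfloor\leq (n+1)(n+2)/4$ recovers); the symmetry observation simply does the bookkeeping more transparently. One small remark: the displayed statement has a typo, $U_{(1)}$ where $U_{(i)}$ is intended, which you silently (and correctly) fixed, matching how the lemma is actually invoked in \Cref{sec:lower-proof-uniform}.
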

\begin{proof}
  It suffices to show the expectation bound
  \begin{equation*}
    \E\sbr{
      \sum_{i=1}^n
      \del{ U_{(1)} + U_{(n+1-i)} }^2
    }
    \ \leq \
    \frac12
    \,,
  \end{equation*}
  since the claim then follows by Markov's inequality.
  Expanding the square and using linearity of expectation gives
  \begin{align*}
    \E\sbr{
      \sum_{i=1}^n
      \del{ U_{(1)} + U_{(n+1-i)} }^2
    }
    & \ = \
    2 \sum_{i=1}^n \E\sbr{ U_i^2 }
    + 2 \sum_{i=1}^n \E\sbr{ U_{(i)} U_{(n+1-i)} }
    \\
    & \ = \
    \frac{n}{6}
    + 2 \sum_{i=1}^n \E\sbr{ U_{(i)} U_{(n+1-i)} }
    \,.
  \end{align*}
  By \Cref{lem:uniform-order}, we have for $i \leq (n+1)/2$,
  \begin{equation*}
    \E\sbr{ U_{(i)} U_{(n+1-i)} }
    \ = \
    - \del{ \frac{i}{n+1} - \frac12 }^2 + \frac{i^2}{(n+1)^2(n+2)}
    \,,
  \end{equation*}
  and for $i > (n+1)/2$,
  \begin{equation*}
    \E\sbr{ U_{(i)} U_{(n+1-i)} }
    \ = \
    - \del{ \frac{i}{n+1} - \frac12 }^2 + \frac{(n+1-i)^2}{(n+1)^2(n+2)}
    \,.
  \end{equation*}
  Plugging-in and simplifying gives
  \begin{equation*}
    \E\sbr{
      \sum_{i=1}^n
      \del{ U_{(1)} + U_{(n+1-i)} }^2
    }
    \ = \
    \begin{cases}
      \frac12 \del{ 1 - \frac1{n+1} } & \text{if $n$ is even} \,, \\
      \frac12 \del{ 1 - \frac1{n+2} } & \text{if $n$ is odd} \,.
    \end{cases}
  \end{equation*}
\end{proof}

\end{document}